\pgfplotsset{compat=1.14}
\pgfplotsset{
  cycle list/Dark2-8,
  cycle multiindex* list={
    mark list*\nextlist
    Dark2-8\nextlist
  },
}
\declaretheoremstyle[
  notefont = \bfseries,
  bodyfont=\normalfont\itshape,
  spaceabove = \parskip,
  spacebelow = 0pt
  ]{theoremstyle}
\declaretheorem[style=theoremstyle]{theorem}
\declaretheorem[style=theoremstyle]{lemma}
\declaretheorem[style=theoremstyle]{corollary}
\declaretheoremstyle[
  postheadspace = \newline,
  notefont = \bfseries,
  headpunct = {},
  bodyfont = \normalfont,
  spaceabove = \parskip,
  spacebelow = 0pt
  ]{examplestyle}
\declaretheorem[style=examplestyle]{example}
\declaretheoremstyle[
  notefont = \bfseries,
  bodyfont = \normalfont,
  spaceabove = \parskip,
  spacebelow = 0pt
  ]{definitionstyle}
\declaretheorem[style=definitionstyle]{definition}
\declaretheoremstyle[
  headfont = \normalfont\itshape,
  notefont = \normalfont\itshape,
  bodyfont = \normalfont,
  spaceabove = \parskip,
  spacebelow = 0pt
  ]{remarkstyle}
\declaretheorem[style=remarkstyle]{remark}
\DeclareMathOperator{\Prob}{\mathbb{P}}
\DeclareMathOperator{\Expect}{\mathbb{E}}
\DeclareMathOperator{\Var}{Var}
\DeclareMathOperator{\Ber}{Ber}
\DeclareMathOperator{\Dir}{Dir}
\DeclareMathOperator{\Categorical}{Cat}
\DeclareMathOperator*{\argmax}{arg\,max}
\DeclareMathOperator{\Span}{span}
\DeclareMathOperator{\ECE}{ECE}
\DeclareMathOperator{\measure}{CE}
\DeclareMathOperator{\kernelmeasure}{KCE}
\DeclareMathOperator{\squaredkernelmeasure}{SKCE}
\DeclareMathOperator{\biasedestimator}{\widehat{SKCE}_b}
\DeclareMathOperator{\unbiasedestimator}{\widehat{SKCE}_{uq}}
\DeclareMathOperator{\linearestimator}{\widehat{SKCE}_{ul}}
\DeclareMathOperator{\MCE}{MCE}
\DeclareMathOperator{\MMCE}{MMCE}
\DeclareMathOperator*{\esssup}{ess\,sup}
\newcommand{\nullhypothesis}{H_0}
\newcommand{\given}{\,|\,}
\newcommand{%
\begin{figure}[!htbp]
  \begin{center}
    \tikzsetnextfilename{errors__alpha_i=1.0_only_firstclass=false}
    \input{figures/errors/_alpha_i=1.0_only_firstclass=false.tex}
    \caption{Distribution of , evaluated on $10^4$ data sets of 250 labeled
      predictions that are randomly sampled from generative models with
      $\alpha = (1,\ldots,1)$ and $\beta = (1/m, \ldots, 1/m)$.}
  \end{center}
\end{figure}

\begin{figure}[!htbp]
  \begin{center}
    \tikzsetnextfilename{errors__alpha_i=0.1_only_firstclass=false}
    \input{figures/errors/_alpha_i=0.1_only_firstclass=false.tex}
    \caption{Distribution of , evaluated on $10^4$ data sets of 250 labeled
      predictions that are randomly sampled from generative models with
      $\alpha = (0.1,\ldots,0.1)$ and $\beta = (1/m, \ldots, 1/m)$.}
  \end{center}
\end{figure}

\begin{figure}[!htbp]
  \begin{center}
    \tikzsetnextfilename{errors__alpha_i=1.0_only_firstclass=true}
    \input{figures/errors/_alpha_i=1.0_only_firstclass=true.tex}
    \caption{Distribution of , evaluated on $10^4$ data sets of 250 labeled
      predictions that are randomly sampled from generative models with
      $\alpha = (1,\ldots,1)$ and $\beta = (1, 0,\ldots, 0)$.}
  \end{center}
\end{figure}

\begin{figure}[!htbp]
  \begin{center}
    \tikzsetnextfilename{errors__alpha_i=0.1_only_firstclass=true}
    \input{figures/errors/_alpha_i=0.1_only_firstclass=true.tex}
    \caption{Distribution of , evaluated on $10^4$ data sets of 250 labeled
      predictions that are randomly sampled from generative models with
      $\alpha = (0.1,\ldots,0.1)$ and $\beta = (1, 0, \ldots, 0)$.}
  \end{center}
\end{figure}%
}[2]{%
\begin{figure}[!htbp]
  \begin{center}
    \tikzsetnextfilename{errors_#1_alpha_i=1.0_only_firstclass=false}
    \input{figures/errors/#1_alpha_i=1.0_only_firstclass=false.tex}
    \caption{Distribution of #2, evaluated on $10^4$ data sets of 250 labeled
      predictions that are randomly sampled from generative models with
      $\alpha = (1,\ldots,1)$ and $\beta = (1/m, \ldots, 1/m)$.}
  \end{center}
\end{figure}

\begin{figure}[!htbp]
  \begin{center}
    \tikzsetnextfilename{errors_#1_alpha_i=0.1_only_firstclass=false}
    \input{figures/errors/#1_alpha_i=0.1_only_firstclass=false.tex}
    \caption{Distribution of #2, evaluated on $10^4$ data sets of 250 labeled
      predictions that are randomly sampled from generative models with
      $\alpha = (0.1,\ldots,0.1)$ and $\beta = (1/m, \ldots, 1/m)$.}
  \end{center}
\end{figure}

\begin{figure}[!htbp]
  \begin{center}
    \tikzsetnextfilename{errors_#1_alpha_i=1.0_only_firstclass=true}
    \input{figures/errors/#1_alpha_i=1.0_only_firstclass=true.tex}
    \caption{Distribution of #2, evaluated on $10^4$ data sets of 250 labeled
      predictions that are randomly sampled from generative models with
      $\alpha = (1,\ldots,1)$ and $\beta = (1, 0,\ldots, 0)$.}
  \end{center}
\end{figure}

\begin{figure}[!htbp]
  \begin{center}
    \tikzsetnextfilename{errors_#1_alpha_i=0.1_only_firstclass=true}
    \input{figures/errors/#1_alpha_i=0.1_only_firstclass=true.tex}
    \caption{Distribution of #2, evaluated on $10^4$ data sets of 250 labeled
      predictions that are randomly sampled from generative models with
      $\alpha = (0.1,\ldots,0.1)$ and $\beta = (1, 0, \ldots, 0)$.}
  \end{center}
\end{figure}%
}
\newcommand{%
\begin{figure}[!htbp]
  \begin{center}
    \tikzsetnextfilename{pvalues__alpha_i=1.0_only_firstclass=false}
    \input{figures/pvalues/_alpha_i=1.0_only_firstclass=false.tex}
    \caption{Empirical test error versus significance level for ,
      evaluated on $10^4$ data sets of 250 labeled predictions that
      are randomly sampled from generative models with
      $\alpha = (1,\ldots,1)$ and $\beta = (1/m, \ldots, 1/m)$.}
  \end{center}
\end{figure}

\begin{figure}[!htbp]
  \begin{center}
    \tikzsetnextfilename{pvalues__alpha_i=0.1_only_firstclass=false}
    \input{figures/pvalues/_alpha_i=0.1_only_firstclass=false.tex}
    \caption{Empirical test error versus significance level for ,
      evaluated on $10^4$ data sets of 250 labeled predictions that
      are randomly sampled from generative models with
      $\alpha = (0.1,\ldots,0.1)$ and $\beta = (1/m, \ldots, 1/m)$.}
  \end{center}
\end{figure}

\begin{figure}[!htbp]
  \begin{center}
    \tikzsetnextfilename{pvalues__alpha_i=1.0_only_firstclass=true}
    \input{figures/pvalues/_alpha_i=1.0_only_firstclass=true.tex}
    \caption{Empirical test error versus significance level for ,
      evaluated on $10^4$ data sets of 250 labeled predictions that
      are randomly sampled from generative models with
      $\alpha = (1,\ldots,1)$ and $\beta = (1, 0,\ldots, 0)$.}
  \end{center}
\end{figure}

\begin{figure}[!htbp]
  \begin{center}
    \tikzsetnextfilename{pvalues__alpha_i=0.1_only_firstclass=true}
    \input{figures/pvalues/_alpha_i=0.1_only_firstclass=true.tex}
    \caption{Empirical test error versus significance level for ,
      evaluated on $10^4$ data sets of 250 labeled predictions that
      are randomly sampled from generative models with
      $\alpha = (0.1,\ldots,0.1)$ and $\beta = (1, 0, \ldots, 0)$.}
  \end{center}
\end{figure}%
}[2]{%
\begin{figure}[!htbp]
  \begin{center}
    \tikzsetnextfilename{pvalues_#1_alpha_i=1.0_only_firstclass=false}
    \input{figures/pvalues/#1_alpha_i=1.0_only_firstclass=false.tex}
    \caption{Empirical test error versus significance level for #2,
      evaluated on $10^4$ data sets of 250 labeled predictions that
      are randomly sampled from generative models with
      $\alpha = (1,\ldots,1)$ and $\beta = (1/m, \ldots, 1/m)$.}
  \end{center}
\end{figure}

\begin{figure}[!htbp]
  \begin{center}
    \tikzsetnextfilename{pvalues_#1_alpha_i=0.1_only_firstclass=false}
    \input{figures/pvalues/#1_alpha_i=0.1_only_firstclass=false.tex}
    \caption{Empirical test error versus significance level for #2,
      evaluated on $10^4$ data sets of 250 labeled predictions that
      are randomly sampled from generative models with
      $\alpha = (0.1,\ldots,0.1)$ and $\beta = (1/m, \ldots, 1/m)$.}
  \end{center}
\end{figure}

\begin{figure}[!htbp]
  \begin{center}
    \tikzsetnextfilename{pvalues_#1_alpha_i=1.0_only_firstclass=true}
    \input{figures/pvalues/#1_alpha_i=1.0_only_firstclass=true.tex}
    \caption{Empirical test error versus significance level for #2,
      evaluated on $10^4$ data sets of 250 labeled predictions that
      are randomly sampled from generative models with
      $\alpha = (1,\ldots,1)$ and $\beta = (1, 0,\ldots, 0)$.}
  \end{center}
\end{figure}

\begin{figure}[!htbp]
  \begin{center}
    \tikzsetnextfilename{pvalues_#1_alpha_i=0.1_only_firstclass=true}
    \input{figures/pvalues/#1_alpha_i=0.1_only_firstclass=true.tex}
    \caption{Empirical test error versus significance level for #2,
      evaluated on $10^4$ data sets of 250 labeled predictions that
      are randomly sampled from generative models with
      $\alpha = (0.1,\ldots,0.1)$ and $\beta = (1, 0, \ldots, 0)$.}
  \end{center}
\end{figure}%
}
\title{Calibration tests in multi-class classification: \\ A unifying framework}
\author{%
  David Widmann \\
  Department of Information Technology \\
  Uppsala University, Sweden \\
  \url{david.widmann@it.uu.se} \\
  \And
  Fredrik Lindsten \\
  Division of Statistics and Machine Learning \\
  Linköping University, Sweden \\
  \url{fredrik.lindsten@liu.se} \\
  \AND
  Dave Zachariah \\
  Department of Information Technology \\
  Uppsala University, Sweden \\
  \url{dave.zachariah@it.uu.se} \\
}
\begin{document}

\maketitle

\begin{abstract}
    In safety-critical applications a probabilistic model is usually required to
    be calibrated, i.e., to capture the uncertainty of its predictions accurately.
    In multi-class classification, calibration of the most confident predictions
    only is often not sufficient. We propose and study calibration measures for
    multi-class classification that generalize existing measures such as the
    expected calibration error, the maximum calibration error, and the maximum
    mean calibration error. We propose and evaluate empirically different
    consistent and unbiased estimators for a specific class of measures based on
    matrix-valued kernels. Importantly, these estimators can be interpreted as
    test statistics associated with well-defined bounds and approximations of
    the p-value under the null hypothesis that the model is calibrated,
    significantly improving the interpretability of calibration measures, which
    otherwise lack any meaningful unit or scale.
\end{abstract}

\section{Introduction}

Consider the problem of analyzing microscopic images of tissue samples and
reporting a tumour grade, i.e., a score that indicates whether cancer cells are
well-differentiated or not, affecting both prognosis and treatment of patients.
Since for some pathological images not even experienced pathologists might all
agree on one classification, this task contains an inherent component of
uncertainty. This type of uncertainty that can not be removed by increasing the
size of the training data set is typically called aleatoric uncertainty
\citep{kiureghian09_aleat_or_epist}. Unfortunately, even if the ideal model is
among the class of models we consider, with a finite training data set we will
never obtain the ideal model but we can only hope to learn a model that is, in
some sense, close to it. Worse still, our model might not even be close to the
ideal model if the model class is too restrictive or the number of training data
is small---which is not unlikely given the fact that annotating pathological
images is expensive. Thus ideally our model should be able to express not only
aleatoric uncertainty but also the uncertainty about the model itself. In
contrast to aleatoric uncertainty this so-called epistemic uncertainty can be
reduced by additional training data.

Dealing with these different types of uncertainty is one of the major problems in
machine learning. The application of our model in clinical practice demands
\enquote{meaningful} uncertainties to avoid doing harm to patients. Being too
certain about high tumour grades might cause harm due to unneeded aggressive
therapies and overly pessimistic prognoses, whereas being too certain about low
tumour grades might result in insufficient therapies. \enquote{Proper}
uncertainty estimates are also crucial if the model is supervised by a
pathologist that takes over if the uncertainty reported by the model is too high.
False but highly certain gradings might incorrectly keep the pathologist out of
the loop, and on the other hand too uncertain gradings might demand unneeded and
costly human intervention.

Probability theory provides a solid framework for dealing with uncertainties.
Instead of assigning exactly one grade to each pathological image, so-called
probabilistic models report subjective probabilities, sometimes also called
confidence scores, of the tumour grades for each image. The model can be
evaluated by comparing these subjective probabilities to the ground truth.

One desired property of such a probabilistic model is sharpness (or high
accuracy), i.e., if possible, the model should assign the highest probability to
the true tumour grade (which maybe can not be inferred from the image at hand
but only by other means such as an additional immunohistochemical staining).
However, to be able to trust the predictions the probabilities should be
calibrated (or reliable) as well
\citep{murphy77_reliab_subjec_probab_forec_precip_temper,degroot83_compar_evaluat_forec}.
This property requires the subjective probabilities to match the relative
empirical frequencies: intuitively, if we could observe a long run of
predictions $(0.5, 0.1, 0.1, 0.3)$ for tumour grades $1$, $2$, $3$, and $4$, the
empirical frequencies of the true tumour grades should be
$(0.5, 0.1, 0.1, 0.3)$. Note that accuracy and calibration are two complementary
properties: a model with over-confident predictions can be highly accurate but
miscalibrated, whereas a model that always reports the overall proportion of
patients of each tumour grade in the considered population is calibrated but
highly inaccurate.

Research of calibration in statistics and machine learning literature has been
focused mainly on binary classification problems or the most confident
predictions: common calibration measures such as the expected calibration error
($\ECE$)~\citep{naeini15_obtain_bayes}, the maximum calibration error
($\MCE$)~\citep{naeini15_obtain_bayes}, and the kernel-based maximum mean
calibration error ($\MMCE$)~\citep{kumar18_train_calib_measur_neural_networ},
and reliability
diagrams~\citep{murphy77_reliab_subjec_probab_forec_precip_temper} have been
developed for binary classification. This is insufficient since many recent
applications of machine learning involve multiple classes. Furthermore, the
crucial finding of \citet{guo17_calib_moder_neural_networ} that many modern deep
neural networks are miscalibrated is also based only on the most confident
prediction.

Recently \citet{vaicenavicius19_evaluat} suggested that this analysis might
be too reduced for many realistic scenarios. In our example, a prediction of
$(0.5, 0.3, 0.1, 0.1)$ is fundamentally different from a prediction of
$(0.5, 0.1, 0.1, 0.3)$, since according to the model in the first case it is only
half as likely that a tumour is of grade 3 or 4, and hence the subjective
probability of missing out on a more aggressive therapy is smaller. However,
commonly in the study of calibration all predictions with a highest reported
confidence score of $0.5$ are grouped together and a calibrated model has only to
be correct about the most confident tumour grade in 50\% of the cases, regardless
of the other predictions. Although the $\ECE$ can be generalized to multi-class
classification, its applicability seems to be limited since its
histogram-regression based estimator requires partitioning of the potentially
high-dimensional probability simplex and is asymptotically inconsistent in many
cases~\citep{vaicenavicius19_evaluat}. Sample complexity bounds for a
bias-reduced estimator of the $\ECE$ introduced in metereological
literature~\citep{ferro12_bias_correc_decom_brier_score,broecker11_estim_reliab_resol_probab_forec}
were derived in concurrent work~\citep{kumar19_verif_uncer_calib}.

\section{Our contribution}

In this work, we propose and study a general framework of calibration measures
for multi-class classification. We show that this framework encompasses common
calibration measures for binary classification such as the expected calibration
error ($\ECE$), the maximum calibration error ($\MCE$), and the maximum mean
calibration error ($\MMCE$) by \citet{kumar18_train_calib_measur_neural_networ}. In
more detail we study a class of measures based on vector-valued reproducing
kernel Hilbert spaces, for which we derive consistent and unbiased estimators.
The statistical properties of the proposed estimators are not only theoretically
appealing, but also of high practical value, since they allow us to address two
main problems in calibration evaluation.

As discussed by \citet{vaicenavicius19_evaluat}, all calibration error
estimates are inherently random, and comparing competing models based on these
estimates without taking the randomness into account can be very misleading, in
particular when the estimators are biased (which, for instance, is the case for
the commonly used histogram-regression based estimator of the $\ECE$). Even more
fundamentally, all commonly used calibration measures lack a meaningful unit or
scale and are therefore not interpretable as such (regardless of any finite
sample issues).

The consistency and unbiasedness of the proposed estimators facilitate
comparisons between competing models, and allow us to derive multiple
statistical tests for calibration that exploit these properties. Moreover, by
viewing the estimators as \emph{calibration test statistics}, with well-defined
bounds and approximations of the corresponding p-value, we give them an
interpretable meaning.

We evaluate the proposed estimators and statistical tests empirically and
compare them with existing methods. To facilitate multi-class calibration
evaluation we provide the Julia packages \href{https://github.com/devmotion/ConsistencyResampling.jl}{\texttt{ConsistencyResampling.jl}}~\citep{widmann19_consis_resampling},
\href{https://github.com/devmotion/CalibrationErrors.jl}{\texttt{CalibrationErrors.jl}}~\citep{widmann19_calib_errors}, and
\href{https://github.com/devmotion/CalibrationTests.jl}{\texttt{CalibrationTests.jl}}~\citep{widmann19_calib_tests}
for consistency resampling, calibration error estimation, and calibration tests,
respectively.

\section{Background}

We start by shortly summarizing the most relevant definitions and concepts. Due
to space constraints and to improve the readability of our paper, we do not
provide any proofs in the main text but only refer to the results in the
supplementary material, which is intended as a reference for mathematically
precise statements and proofs.

\subsection{Probabilistic setting}

Let $(X,Y)$ be a pair of random variables with $X$ and $Y$ representing inputs
(features) and outputs, respectively. We focus on classification problems and
hence without loss of generality we may assume that the outputs consist of the
$m$ classes $1$, \ldots, $m$.

Let $\Delta^m$ denote the $(m-1)$-dimensional probability simplex
$\Delta^m \coloneqq \{ z \in \mathbb{R}_{\geq 0}^m \colon \|z\|_1 = 1\}$. Then
a \emph{probabilistic model} $g$ is a function that for every input $x$ outputs
a prediction $g(x) \in \Delta^m$ that models the distribution
\begin{equation*}
  \big(\Prob[Y = 1 \given X = x], \ldots, \Prob[Y=m \given X = x]\big) \in \Delta^m
\end{equation*}
of class $Y$ given input $X = x$.

\subsection{Calibration}

\subsubsection{Common notion}

The common notion of calibration, as, e.g., used by \citet{guo17_calib_moder_neural_networ},
considers only the most confident predictions $\max_y g_y(x)$ of a model $g$.
According to this definition, a model is calibrated if
\begin{equation}\label{eq:common_calibration}
    \Prob[Y = \argmax_y g_y(X) \given \max_y g_y(X)] = \max_y g_y(X)
\end{equation}
holds almost always. Thus a model that is calibrated according to
\cref{eq:common_calibration} ensures that we can \emph{partly trust} the
uncertainties reported by its predictions. As an example, for a prediction of
$(0.4, 0.3, 0.3)$ the model would only guarantee that in the long run inputs that
yield a most confident prediction of $40\%$ are in the corresponding class $40\%$
of the time.\footnote{This notion of calibration does not consider for which class
the most confident prediction was obtained.}

\subsubsection{Strong notion}

According to the more general calibration definition of
\citet{broecker09_reliab_suffic_decom_proper_scores,vaicenavicius19_evaluat}, a
probabilistic model $g$ is calibrated if for almost all inputs $x$ the
prediction $g(x)$ is equal to the distribution of class $Y$ given prediction
$g(X) = g(x)$. More formally, a calibrated model satisfies
\begin{equation}\label{eq:strong_calibration}
  \Prob[Y = y \given g(X)] = g_y(X)
\end{equation}
almost always for all classes $y \in \{1,\ldots,m\}$. As
\citet{vaicenavicius19_evaluat} showed, for multi-class classification
this formulation is stronger than the definition of \citet{zadrozny02_trans} that
only demands calibrated marginal probabilities. Thus we can \emph{fully trust}
the uncertainties reported by the predictions of a model that is calibrated
according to \cref{eq:strong_calibration}. The prediction $(0.4, 0.3, 0.3)$
would actually imply that the class distribution of the inputs that yield this
prediction is $(0.4, 0.3, 0.3)$. To emphasize the difference to the definition
in \cref{eq:common_calibration}, we call calibration according to
\cref{eq:strong_calibration} \emph{calibration in the strong sense} or
\emph{strong calibration}.

To simplify our notation, we rewrite \cref{eq:strong_calibration} in vectorized
form. Equivalent to the definition above, a model $g$ is calibrated in the
strong sense if
\begin{equation}\label{eq:strong_calibration_vector}
  r(g(X)) - g(X) = 0
\end{equation}
holds almost always, where
\begin{equation*}
  r(\xi) \coloneqq \big(\Prob[Y = 1 \given g(X) = \xi], \ldots, \Prob[Y = m \given g(X) = \xi]\big)
\end{equation*}
is the distribution of class $Y$ given prediction $g(X) = \xi$.

The calibration of certain aspects of a model, such as the five largest
predictions or groups of classes, can be investigated by studying the
strong calibration of models induced by so-called calibration lenses. For more
details about evaluation and visualization of strong calibration we refer to
\citet{vaicenavicius19_evaluat}.

\subsection{Matrix-valued kernels}

The miscalibration measure that we propose in this work is based on
matrix-valued kernels $k \colon \Delta^m \times \Delta^m \to \mathbb{R}^{m \times m}$.
Matrix-valued kernels can be defined in a similar way as the more common
real-valued kernels, which can be characterized as symmetric positive definite
functions \citep[Lemma~4]{berlinet04_reprod_kernel_hilber_spaces_probab_statis}.

\begin{definition}[{{\citet[Definition~2]{micchelli05_learn_vector_valued_funct}; \citet[Definition~1]{caponnetto08_univer_multi_task_kernel}}}]\label{def:kernel}
  We call a function $k \colon \Delta^m \times \Delta^m \to \mathbb{R}^{m \times m}$
  a \emph{matrix-valued kernel} if for all $s, t \in \Delta^m$
  $k(s,t) = k(t,s)^\intercal$ and it is positive semi-definite, i.e., if for all
  $n \in \mathbb{N}$, $t_1,\ldots,t_n \in \Delta^m$, and $u_1, \ldots, u_n \in \mathbb{R}^m$
  \begin{equation*}
    \sum_{i,j=1}^n u_i^\intercal k(t_i, t_j) u_j \geq 0.
  \end{equation*}
\end{definition}

There exists a one-to-one mapping of such kernels and reproducing kernel Hilbert
spaces (RKHSs) of vector-valued functions $f \colon \Delta^m \to \mathbb{R}^m$.
We provide a short summary of RKHSs of vector-valued functions on the probability
simplex in \cref{app:rkhs}. A more detailed general introduction to RKHSs of
vector-valued functions can be found in the publications by
\citet{micchelli05_learn_vector_valued_funct,carmeli10_vector_valued_reprod_kernel_hilber_spaces_univer,caponnetto08_univer_multi_task_kernel}.

Similar to the scalar case, matrix-valued kernels can be constructed from other
matrix-valued kernels and even from real-valued kernels. Very simple
matrix-valued kernels are kernels of the form
$k(s,t) = \tilde{k}(s,t) \mathbf{I}_m$, where $\tilde{k}$ is a scalar-valued
kernel, such as the Gaussian or Laplacian kernel, and $\mathbf{I}_m$ is the
identity matrix. As \cref{ex:lifted_kernel} shows, this construction can be
generalized by, e.g., replacing the identity matrix with an arbitrary positive
semi-definite matrix.

An important class of kernels are so-called universal kernels. Loosely speaking,
a kernel is called universal if its RKHS is a dense subset of the space of
continuous functions, i.e., if in the neighbourhood of every continuous function
we can find a function in the RKHS. Prominent real-valued kernels on the
probability simplex such as the Gaussian and the Laplacian kernel are universal,
and can be used to construct universal matrix-valued kernels of the form in
\cref{ex:lifted_kernel}, as \cref{lemma:lifted_kernel} shows.

\section{Unification of calibration measures}

In this section we introduce a general measure of strong calibration and show its
relation to other existing measures.

\subsection{Calibration error}

In the analysis of strong calibration the discrepancy in the left-hand side of
\cref{eq:strong_calibration_vector} lends itself naturally to the following
calibration measure.

\begin{definition}\label{def:ce_supremum}
  Let $\mathcal{F}$ be a non-empty space of functions
  $f \colon \Delta^m \to \mathbb{R}^m$. Then the calibration error ($\measure$)
  of model $g$ with respect to class $\mathcal{F}$ is
  \begin{equation*}
    \measure[\mathcal{F}, g]  \coloneqq \sup_{f \in \mathcal{F}} \Expect\left[{(r(g(X)) - g(X))}^\intercal f(g(X)) \right].
  \end{equation*}
\end{definition}

A trivial consequence of the design of the $\measure$ is that the measure is
zero for every model that is calibrated in the strong sense, regardless of the
choice of $\mathcal{F}$. The converse statement is not true in general. As we
show in \cref{thm:ce_continuous}, the class of continuous functions is a space
for which the $\measure$ is zero if and only if model $g$ is strongly
calibrated, and hence allows to characterize calibrated models. However, since
this space is extremely large, for every model the $\measure$ is either $0$ or
$\infty$.\footnote{Assume $\measure[\mathcal{F}, g] < \infty$ and let
  $f_1,f_2,\ldots$ be a sequence of continuous functions with
  $\measure[\mathcal{F}, g] = \lim_{n \to \infty} \Expect\left[{(r(g(X)) - g(X))}^\intercal f_n(g(X)) \right]$.
  From \cref{remark:positive} we know that $\measure[\mathcal{F}, g] \geq 0$.
  Moreover, $\tilde{f}_n \coloneqq 2f_n$ are continuous functions with
  $2\measure[\mathcal{F}, g] = \lim_{n \to \infty} \Expect\left[{(r(g(X)) - g(X))}^\intercal \tilde{f}_n(g(X)) \right] \leq \sup_{f \in \mathcal{F}} \Expect\left[{(r(g(X)) - g(X))}^\intercal f(g(X)) \right] = \measure[\mathcal{F}, g]$.
  Thus $\measure[\mathcal{F}, g] = 0$.}. Thus a measure based on this space
does not allow us to compare miscalibrated models and hence is rather
impractical.

\subsection{Kernel calibration error}

Due to the correspondence between kernels and RKHSs we can define the
following kernel measure.

\begin{definition}\label{def:kce}
  Let $k$ be a matrix-valued kernel as in \cref{def:kernel}. Then we define
  the kernel calibration error ($\kernelmeasure$) with respect to kernel $k$
  as $\kernelmeasure[k, g] \coloneqq \measure[\mathcal{F}, g]$, where
  $\mathcal{F}$ is the unit ball in the RKHS corresponding to kernel $k$.
\end{definition}

As mentioned above, a RKHS with a universal kernel is a dense subset of the
space of continuous functions. Hence these kernels yield a function space
that is still large enough for identifying strongly calibrated models.

\begin{theorem}[{{cf.~\cref{thm:ce_zero}}}]
  Let $k$ be a matrix-valued kernel as in \cref{def:kernel}, and assume that $k$
  is universal. Then $\kernelmeasure[k, g] = 0$ if and only if model $g$
  is calibrated in the strong sense.
\end{theorem}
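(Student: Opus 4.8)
The plan is to reduce the claim to the characterization of strong calibration already available for the class of continuous functions (\cref{thm:ce_continuous}), using the universality of $k$ to transfer that characterization to the unit ball of the RKHS. One direction is immediate: if $g$ is strongly calibrated, then $r(g(X)) - g(X) = 0$ almost surely by \cref{eq:strong_calibration_vector}, so every term $\Expect[(r(g(X)) - g(X))^\intercal f(g(X))]$ vanishes regardless of $f$, and hence $\kernelmeasure[k,g] = \measure[\mathcal{F},g] = 0$ where $\mathcal{F}$ is the RKHS unit ball.

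For the converse, suppose $\kernelmeasure[k,g] = 0$; I want to conclude that $g$ is strongly calibrated, equivalently (by \cref{thm:ce_continuous}) that $\measure[\mathcal{C},g] = 0$ where $\mathcal{C}$ is the space of continuous functions $\Delta^m \to \mathbb{R}^m$. First I would note that $\kernelmeasure[k,g] = 0$ together with positivity (\cref{remark:positive}) forces $\Expect[(r(g(X)) - g(X))^\intercal f(g(X))] = 0$ for \emph{every} $f$ in the RKHS $\mathcal{H}_k$, not merely those in the unit ball, since the functional $f \mapsto \Expect[(r(g(X)) - g(X))^\intercal f(g(X))]$ is linear in $f$ and bounded above by $0$ on the unit ball (scaling $f$ by $-1$ and by arbitrary positive constants pins the value to $0$). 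Then, given any continuous $h \in \mathcal{C}$ and any $\varepsilon > 0$, universality of $k$ supplies $f \in \mathcal{H}_k$ with $\sup_{\xi \in \Delta^m} \|h(\xi) - f(\xi)\| < \varepsilon$; since $\|r(g(X)) - g(X)\|$ is bounded (both $r(g(X))$ and $g(X)$ lie in $\Delta^m$, so the norm is at most $2$), we get
\begin{equation*}
  \left| \Expect[(r(g(X)) - g(X))^\intercal h(g(X))] \right| = \left| \Expect[(r(g(X)) - g(X))^\intercal (h - f)(g(X))] \right| \leq 2\varepsilon.
\end{equation*}
Letting $\varepsilon \to 0$ shows the expectation vanishes for every continuous $h$, hence $\measure[\mathcal{C},g] = 0$, and \cref{thm:ce_continuous} then gives strong calibration.

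The main obstacle is making the universality argument rigorous in the vector-valued setting: I must be careful that "universal" for a matrix-valued kernel means its RKHS is dense in $C(\Delta^m;\mathbb{R}^m)$ with respect to the uniform norm, and that the approximation $f$ can be taken with the correct codomain. This is exactly the notion used in \cref{lemma:lifted_kernel} and in the references on vector-valued RKHSs, so the ingredient is available; the remaining care is the interchange of limit and expectation, which is justified by the uniform bound $\|r(g(X)) - g(X)\| \leq 2$ and dominated convergence. A secondary subtlety is the passage from "bounded by $0$ on the unit ball" to "identically $0$ on $\mathcal{H}_k$", which uses only linearity and homogeneity of the functional and should be stated explicitly rather than left implicit.
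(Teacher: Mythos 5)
Your proposal is correct and follows essentially the same route as the paper's proof of \cref{thm:ce_rkhs}: both directions reduce to \cref{thm:ce_zero} and \cref{thm:ce_continuous}, with the converse using linearity/homogeneity to pass from the unit ball to the full RKHS, universality to approximate an arbitrary continuous function uniformly, and the pointwise bound $\|\Delta\|\leq 2$ to control the resulting error in the expectation. The only cosmetic difference is that you first establish the vanishing of the linear functional on all of $\mathcal{H}_k$ before invoking density, whereas the paper normalizes the approximant $h$ to the unit ball in place; these are the same argument rearranged.
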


From the supremum-based \cref{def:ce_supremum} it might not be obvious how the
$\kernelmeasure$ can be evaluated. Fortunately, there exists an equivalent
kernel-based formulation.

\begin{lemma}[{{cf.~\cref{lemma:meance}}}]
  Let $k$ be a matrix-valued kernel as in \cref{def:kernel}. If
  $\Expect[\|k(g(X),g(X))\|] < \infty$, then
  \begin{equation}\label{eq:ce2}
    \kernelmeasure[k,g] = {\Big(\Expect\big[{(e_Y - g(X))}^{\intercal} k(g(X), g(X')) {(e_{Y'} - g(X'))}\big]\Big)}^{1/2},
  \end{equation}
  where $(X', Y')$ is an independent copy of $(X,Y)$ and $e_i$ denotes the $i$th unit vector.
\end{lemma}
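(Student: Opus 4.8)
The plan is to exploit the reproducing property of the vector-valued RKHS together with a duality argument to turn the supremum in \cref{def:ce_supremum} into an inner-product norm, and then to rewrite that norm as the double expectation in \cref{eq:ce2}. First I would introduce the abbreviation $h(x) \coloneqq r(g(x)) - g(x) \in \mathbb{R}^m$, so that $\measure[\mathcal{F},g] = \sup_{\|f\|_{\mathcal{H}} \le 1} \Expect[h(X)^\intercal f(g(X))]$ where $\mathcal{H}$ is the RKHS of $k$. Using the reproducing property for vector-valued RKHSs — namely $f(t)^\intercal u = \langle f, k(\cdot,t)u \rangle_{\mathcal{H}}$ for all $u \in \mathbb{R}^m$ — I can write $h(X)^\intercal f(g(X)) = \langle f, k(\cdot, g(X)) h(X)\rangle_{\mathcal{H}}$. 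The integrability assumption $\Expect[\|k(g(X),g(X))\|] < \infty$ is exactly what is needed to guarantee (via Cauchy–Schwarz and the bound $\|k(\cdot,g(X))h(X)\|_{\mathcal{H}}^2 = h(X)^\intercal k(g(X),g(X))h(X) \le \|h(X)\|^2 \|k(g(X),g(X))\|$, with $\|h(X)\| \le 2$) that the Bochner integral $\mu_g \coloneqq \Expect[k(\cdot, g(X)) h(X)] \in \mathcal{H}$ is well defined and that expectation and inner product may be interchanged.

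Second, once the interchange is justified, $\Expect[h(X)^\intercal f(g(X))] = \langle f, \mu_g \rangle_{\mathcal{H}}$, and taking the supremum over the unit ball of $\mathcal{H}$ gives $\measure[\mathcal{F},g] = \|\mu_g\|_{\mathcal{H}}$ by the standard dual characterization of the norm (the supremum of a linear functional over the unit ball is the functional's norm, attained at $\mu_g/\|\mu_g\|_{\mathcal{H}}$). Third, I would compute $\|\mu_g\|_{\mathcal{H}}^2 = \langle \mu_g, \mu_g\rangle_{\mathcal{H}}$. Writing the two copies of the expectation with independent samples $(X,Y)$ and $(X',Y')$ and again using the reproducing property, $\langle k(\cdot,g(X))h(X), k(\cdot,g(X'))h(X')\rangle_{\mathcal{H}} = h(X)^\intercal k(g(X),g(X')) h(X')$, so $\measure[\mathcal{F},g]^2 = \Expect\big[h(X)^\intercal k(g(X),g(X')) h(X')\big]$.

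Finally, I would replace $h$ by its conditional-expectation form. Since $r(g(X)) = \Expect[e_Y \given g(X)]$ by the definition of $r$, and since conditionally on $g(X)$ (resp. $g(X')$) the kernel evaluation $k(g(X),g(X'))$ is measurable, the tower property lets me substitute $e_Y - g(X)$ for $h(X) = r(g(X)) - g(X)$ inside the expectation: $\Expect[h(X)^\intercal k(g(X),g(X'))h(X')] = \Expect[(e_Y - g(X))^\intercal k(g(X),g(X'))(e_{Y'} - g(X'))]$, using independence of the two pairs to handle the two conditioning steps separately. Taking square roots yields \cref{eq:ce2}. I expect the main obstacle to be the careful justification of the Bochner-integrability of $\mu_g$ and the exchange of expectation with the inner product — this is where the hypothesis $\Expect[\|k(g(X),g(X))\|] < \infty$ does its work — together with making sure the vector-valued reproducing property is invoked in the correct form; the two conditioning steps at the end are routine once independence is used to decouple them.
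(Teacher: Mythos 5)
Your proposal follows essentially the same route as the paper's proof of \cref{lemma:meance} (supported by \cref{lemma:embedding}): use the vector-valued reproducing property to rewrite each supremand as an inner product $\langle f, k(\cdot,g(X))\Delta\rangle_{\mathcal{H}}$, establish existence of the mean embedding $\mu_g$ from the integrability hypothesis, apply the dual-norm characterization to get $\kernelmeasure[k,g] = \|\mu_g\|_{\mathcal{H}}$, expand the squared norm into a double expectation, and finally invoke the tower property with $r(g(X)) = \Expect[e_Y \mid g(X)]$ and independence to replace $\Delta$ by $e_Y - g(X)$. The only cosmetic difference is that you construct $\mu_g$ directly as a Bochner integral, whereas the paper obtains it via Riesz representation after verifying that $f \mapsto \Expect[\langle\Delta, f(G)\rangle_{\mathbb{R}^m}]$ is a bounded functional using the RKHS bound $\|f(t)\| \le \|f\|_{\mathcal{H}}\|k(t,t)\|^{1/2}$; both routes yield the same $\mu_g$ with the same pointwise characterization, and either way the hypothesis $\Expect[\|k(g(X),g(X))\|] < \infty$ (combined with Jensen to pass to $\Expect[\|k(g(X),g(X))\|^{1/2}]$ and the uniform bound on $\Delta$) does the integrability work you identify as the main obstacle.
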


\subsection{Expected calibration error}

The most common measure of calibration error is the expected calibration error
($\ECE$). Typically it is used for quantifying calibration in a binary
classification setting but it generalizes to strong calibration in a
straightforward way. Let
$d \colon \Delta^m \times \Delta^m \to \mathbb{R}_{\geq 0}$ be a distance measure
on the probability simplex. Then the expected calibration error of a model $g$
with respect to $d$ is defined as
\begin{equation}\label{eq:ece_def}
  \ECE[d,g] = \Expect[d(r(g(X)), g(X))].
\end{equation}
If $d(p, q) = 0 \Leftrightarrow p = q$, as it is the case for standard choices
of $d$ such as the total variation distance or the (squared) Euclidean distance,
then $\ECE[d,g]$ is zero if and only if $g$ is strongly calibrated as per
\cref{eq:strong_calibration_vector}.

The ECE with respect to the cityblock distance, the total variation distance, or
the squared Euclidean distance, are special cases of the calibration error
$\measure$, as we show in \cref{lemma:ce_ece}.

\subsection{Maximum mean calibration error}

\Citet{kumar18_train_calib_measur_neural_networ} proposed a kernel-based
calibration measure, the so-called maximum mean calibration error ($\MMCE$), for
training (better) calibrated neural networks. In contrast to their work, in our
publication we do not discuss how to obtain calibrated models but focus on the
evaluation of calibration and on calibration tests. Moreover, the $\MMCE$
applies only to a binary classification setting whereas our measure quantifies
strong calibration and hence is more generally applicable. In fact, as we show
in \cref{ex:mmce_kumar_special}, the $\MMCE$ is a special case of the
$\kernelmeasure$.

\section{Calibration error estimators}\label{sec:estimators}

Consider the task of estimating the calibration error of model $g$ using a
validation set $\mathcal{D} = \{(X_i, Y_i)\}_{i=1}^n$ of $n$ i.i.d.\ random pairs
of inputs and labels that are distributed according to $(X,Y)$.

From the expression for the $\ECE$ in \cref{eq:ece_def}, the natural (and,
indeed, standard) approach for estimating the $\ECE$ is as the sample
average of the distance $d$ between the predictions $g(X)$ and the
calibration function $r(g(X))$. However, this is problematic since the
calibration function is not readily available and needs to be estimated as
well. Typically, this is addressed using histogram-regression, see, e.g.,
\citet{guo17_calib_moder_neural_networ,naeini15_obtain_bayes,vaicenavicius19_evaluat},
which unfortunately leads to inconsistent and biased estimators in many cases
\citep{vaicenavicius19_evaluat} and can scale poorly to large $m$. In
contrast, for the $\kernelmeasure$ in \cref{eq:ce2} there is no explicit
dependence on $r$, which enables us to derive multiple consistent and also
unbiased estimators.

Let $k$ be a matrix-valued kernel as in \cref{def:kernel} with
$\Expect[\|k(g(X),g(X))\|] < \infty$, and define for $1 \leq i,j \leq n$
\begin{equation*}
  h_{i,j} \coloneqq {(e_{Y_i} - g(X_i))}^\intercal k(g(X_i), g(X_j)) (e_{Y_j} - g(X_j)).
\end{equation*}
Then the estimators listed in \cref{tab:estimators} are consistent estimators of
the squared kernel calibration error
$\squaredkernelmeasure[k, g] \coloneqq \kernelmeasure^2[k,g]$
(see~\cref{lemma:skce_biased,lemma:skce_unbiased,lemma:skce_linear}). The
subscript letters \enquote{q} and \enquote{l} refer to the quadratic and linear
computational complexity of the unbiased estimators, respectively.

\begin{table}[!htbp]
  \begin{center}
    \caption{Three consistent estimators of the $\squaredkernelmeasure$.}
    \label{tab:estimators}
    \begin{tabular}{llll} \toprule
       Notation & Definition & Properties & Complexity\\ \midrule
       $\biasedestimator$ & $n^{-2} \sum_{i,j=1}^n h_{i,j}$ & biased & $O(n^2)$ \\
       $\unbiasedestimator$ & $ {\binom{n}{2}}^{-1} \sum_{1 \leq i < j \leq n} h_{i,j}$ & unbiased & $O(n^2)$ \\
       $\linearestimator$ & $ {\lfloor n/2\rfloor}^{-1} \sum_{i = 1}^{\lfloor n / 2\rfloor} h_{2i-1,2i}$ & unbiased & $O(n)$ \\ \bottomrule
    \end{tabular}
  \end{center}
\end{table}

\section{Calibration tests}\label{sec:tests}

In general, calibration errors do not have a meaningful unit or scale. This
renders it difficult to interpret an estimated non-zero error and to compare
different models. However, by viewing the estimates as test statistics with
respect to \cref{eq:strong_calibration_vector}, they obtain an interpretable
probabilistic meaning.

Similar to the derivation of the two-sample tests by
\citet{gretton12_kernel_two_sampl_test}, we can use the consistency and
unbiasedness of the estimators of the $\squaredkernelmeasure$ presented above to
obtain bounds and approximations of the p-value for the null hypothesis
$\nullhypothesis$ that the model is calibrated, i.e., for the probability that
the estimator is greater than or equal to the observed calibration error estimate,
if the model is calibrated. These bounds and approximations do not only allow us
to perform hypothesis testing of the null hypothesis $\nullhypothesis$, but they
also enable us to transfer unintuitive calibration error estimates to an
intuitive and interpretable probabilistic setting.

As we show in \cref{thm:biased_bound_uniform,thm:linear_bound_uniform,thm:unbiased_bound_uniform},
we can obtain so-called distribution-free bounds without making any assumptions
about the distribution of $(X,Y)$ or the model $g$. A downside of these uniform
bounds is that usually they provide only a loose bound of the p-value.

\begin{lemma}[{{Distribution-free bounds (see~\cref{thm:biased_bound_uniform,thm:linear_bound_uniform,thm:unbiased_bound_uniform})}}]
  Let $k$ be a matrix-valued kernel as in \cref{def:kernel}, and assume that
  $K_{p;q} \coloneqq \sup_{s,t \in \Delta^m} \|k(s,t)\|_{p;q} < \infty$ for some
  $1 \leq p,q \leq \infty$.\footnote{For a matrix $A$ we denote by
    $\|A\|_{p;q}$ the induced matrix norm
    $\sup_{x \neq 0} \|Ax\|_q / \|x\|_p$.} Let $t > 0$ and
  $B_{p;q} \coloneqq 2^{1 + 1 / p - 1 / q} K_{p;q}$, then for the biased estimator
  we can bound
  \begin{equation*}
    \Prob\Big[\biasedestimator \geq t \,\Big|\, \nullhypothesis \Big] \leq \exp{\bigg(-\frac{1}{2}{\Big(\max\Big\{0, \sqrt{nt / B_{p;q}} - 1 \Big\}\Big)}^2\bigg)},
  \end{equation*}
  and for either of the unbiased estimators
  $T \in \{\unbiasedestimator,\linearestimator\}$, we can bound
  \begin{equation*}
    \Prob\Big[T \geq t \,\Big|\, \nullhypothesis\Big] \leq \exp{\bigg(- \frac{\lfloor n/2 \rfloor t^2}{2 B_{p;q}^2}\bigg)}.
  \end{equation*}
\end{lemma}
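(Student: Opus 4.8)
The plan is to interpret the summands $h_{i,j}$ as inner products in the matrix-valued RKHS $\mathcal{H}$ associated with $k$ and then to reduce each of the three bounds to a standard concentration inequality. Set $f_i \coloneqq k(\cdot, g(X_i))\,(e_{Y_i} - g(X_i)) \in \mathcal{H}$. The reproducing property for matrix-valued kernels (see \cref{app:rkhs}) gives $h_{i,j} = \langle f_i, f_j \rangle_{\mathcal{H}}$, so that $\biasedestimator = \| n^{-1} \sum_{i=1}^n f_i \|_{\mathcal{H}}^2$, while $\unbiasedestimator$ and $\linearestimator$ are the corresponding U-statistic and split-data averages of the scalars $\langle f_i, f_j\rangle_{\mathcal{H}}$. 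The $f_i$ are i.i.d.\ because the data are, and they are centred under $\nullhypothesis$: conditioning on $g(X_i)$ and using strong calibration $r(g(X_i)) = g(X_i)$ from \cref{eq:strong_calibration_vector} gives $\Expect[f_i \given g(X_i)] = k(\cdot, g(X_i))\,(r(g(X_i)) - g(X_i)) = 0$, hence $\Expect[f_i] = 0$ and, for $i \neq j$, $\Expect[h_{i,j} \given \nullhypothesis] = \langle \Expect f_i, \Expect f_j\rangle_{\mathcal{H}} = 0$ (all the expectations exist thanks to the almost-sure bound established next).

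Next I would prove the uniform bound $|h_{i,j}| \le B_{p;q}$ almost surely. Since $e_Y - g(X)$ has $\ell_1$-norm at most $2$ and $\ell_\infty$-norm at most $1$, norm interpolation yields $\|e_Y - g(X)\|_r \le 2^{1/r}$ for every $r \in [1,\infty]$. Applying Hölder's inequality with the conjugate exponents $q$ and $q^* = q/(q-1)$ and then the definition of $K_{p;q}$,
\begin{equation*}
  |h_{i,j}| \le \|e_{Y_i} - g(X_i)\|_{q^*} \, \big\| k(g(X_i), g(X_j))\,(e_{Y_j} - g(X_j)) \big\|_q \le 2^{1 - 1/q} K_{p;q} \, 2^{1/p} = B_{p;q},
\end{equation*}
which in particular gives $\|f_i\|_{\mathcal{H}}^2 = h_{i,i} \le B_{p;q}$ almost surely.

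For the biased estimator I would then use concentration in Hilbert space. Writing $\bar f \coloneqq n^{-1}\sum_{i=1}^n f_i$, independence and centredness under $\nullhypothesis$ make the cross terms vanish, so $\Expect\big[\|\bar f\|_{\mathcal{H}}\big] \le \big(\Expect[\|\bar f\|_{\mathcal{H}}^2]\big)^{1/2} = \big(n^{-1}\Expect[\|f_1\|_{\mathcal{H}}^2]\big)^{1/2} \le \sqrt{B_{p;q}/n}$. Regarding $\|\bar f\|_{\mathcal{H}}$ as a function of the $n$ independent data points, changing one of them changes $\bar f$ by at most $2\sqrt{B_{p;q}}/n$ in $\mathcal{H}$-norm and hence changes $\|\bar f\|_{\mathcal{H}}$ by at most that amount, so McDiarmid's bounded-differences inequality gives $\Prob\big[\|\bar f\|_{\mathcal{H}} \ge \Expect[\|\bar f\|_{\mathcal{H}}] + u \,\big|\, \nullhypothesis\big] \le \exp(-n u^2 / (2 B_{p;q}))$ for $u \ge 0$. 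Because $\biasedestimator \ge t$ is equivalent to $\|\bar f\|_{\mathcal{H}} \ge \sqrt t$, taking $u = \sqrt t - \sqrt{B_{p;q}/n}$ when this is nonnegative — and noting that the claimed bound equals $1$ otherwise, which is where the $\max\{0,\cdot\}$ enters — yields the stated inequality after simplifying $n\big(\sqrt t - \sqrt{B_{p;q}/n}\big)^2 / (2 B_{p;q}) = \tfrac12\big(\sqrt{nt/B_{p;q}} - 1\big)^2$.

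Finally, the two unbiased estimators reduce to Hoeffding-type bounds for bounded, mean-zero variables. The linear estimator is the average of the $\lfloor n/2\rfloor$ i.i.d.\ terms $h_{2i-1,2i} \in [-B_{p;q}, B_{p;q}]$, which have mean $0$ under $\nullhypothesis$, so Hoeffding's inequality gives $\Prob[\linearestimator \ge t \given \nullhypothesis] \le \exp\big(-2\lfloor n/2\rfloor t^2 / (2B_{p;q})^2\big)$, i.e.\ the claim. The estimator $\unbiasedestimator$ is a degree-two U-statistic with the symmetric kernel $h$ (symmetry of $h$ follows from $k(s,t) = k(t,s)^\intercal$), which is bounded in $[-B_{p;q}, B_{p;q}]$ and has mean $0$ under $\nullhypothesis$; Hoeffding's inequality for U-statistics — write the U-statistic as an average over permutations of averages of $\lfloor n/2\rfloor$ i.i.d.\ terms, then combine Jensen's inequality with the i.i.d.\ case — gives the same exponential bound. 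I expect the only real obstacle to be the biased-estimator step, where pinning down the constant requires both the sharp almost-sure bound $\|f_i\|_{\mathcal{H}}^2 \le B_{p;q}$ (and the resulting numerology $B_{p;q} = 2^{1+1/p-1/q}K_{p;q}$) and a clean Hilbert-space concentration statement, together with the bookkeeping in the degenerate regime $\sqrt{nt/B_{p;q}} \le 1$; the measure-theoretic details for the $\mathcal{H}$-valued variables $f_i$ (Bochner integrability, interchanging expectation and inner product) are routine given the almost-sure bound.
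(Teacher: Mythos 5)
Your proposal is correct and follows essentially the same route as the paper. For the biased estimator you reproduce the argument of \cref{thm:biased_bound_uniform}: writing $\biasedestimator = \|\bar f\|_{\mathcal H}^2$ with $\bar f = n^{-1}\sum_i k(\cdot,g(X_i))(e_{Y_i}-g(X_i))$, bounding $\Expect[\|\bar f\|_{\mathcal H}]$ by Jensen and the centredness of $f_i$ under $\nullhypothesis$ (this gives the paper's $B_1 \le B_2 = (B_{p;q}/n)^{1/2}$), and applying McDiarmid to $\|\bar f\|_{\mathcal H}$ with bounded differences $2\sqrt{B_{p;q}}/n$; your norm-interpolation derivation of $|h_{i,j}| \le B_{p;q}$ matches the paper's \cref{lemma:term_bound}. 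For $\linearestimator$ and $\unbiasedestimator$ you use Hoeffding's inequality for i.i.d.\ averages and for degree-two U-statistics respectively, exactly as in \cref{thm:linear_bound_uniform,thm:unbiased_bound_uniform}, and your sketch of why the U-statistic version holds (averaging over permutations of block averages) is the standard argument underlying the cited Hoeffding (1963) bound.
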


Asymptotic bounds exploit the asymptotic distribution of the test statistics
under the null hypothesis, as the number of validation data points goes to
infinity. The central limit theorem implies that the linear estimator is
asymptotically normally distributed.

\begin{lemma}[{{Asymptotic distribution of $\linearestimator$ (see~\cref{cor:linear_asymptotic})}}]
  Let $k$ be a matrix-valued kernel as in \cref{def:kernel}, and assume that
  $\Expect[\|k(g(X),g(X))\|] < \infty$. If
  $\Var[h_{i,j}] < \infty$, then
  \begin{equation*}
    \Prob\Big[\sqrt{\lfloor n / 2\rfloor} \linearestimator \geq t \,\Big| \, \nullhypothesis \Big] \to 1 - \Phi\bigg(\frac{t}{\hat{\sigma}}\bigg) \quad \text{as} \quad n \to \infty,
  \end{equation*}
  where $\hat{\sigma}$ is the sample standard deviation of $h_{2i-1,2i}$
  ($i = 1,\ldots,\lfloor n / 2 \rfloor$) and $\Phi$ is the cumulative
  distribution function of the standard normal distribution.
\end{lemma}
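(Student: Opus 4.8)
The plan is to recognize $\linearestimator$ as a normalized sum of i.i.d.\ terms and apply the classical central limit theorem together with Slutsky's theorem. First I would note that, by construction, $\linearestimator = {\lfloor n/2 \rfloor}^{-1} \sum_{i=1}^{\lfloor n/2 \rfloor} h_{2i-1,2i}$ is an average of the random variables $Z_i \coloneqq h_{2i-1,2i}$ for $i = 1, \ldots, \lfloor n/2 \rfloor$. Since the pairs $(X_1,Y_1), (X_2,Y_2), \ldots$ are i.i.d.\ and the $Z_i$ depend on disjoint blocks of these pairs, the $Z_i$ are themselves i.i.d. By \cref{lemma:skce_linear} the estimator is unbiased for $\squaredkernelmeasure[k,g]$, and under $\nullhypothesis$ the model is strongly calibrated, so $\kernelmeasure[k,g] = 0$ and hence $\Expect[Z_i \mid \nullhypothesis] = \squaredkernelmeasure[k,g] = 0$. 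The assumption $\Var[h_{i,j}] < \infty$ gives $\Var[Z_i] = \sigma^2 < \infty$ for some $\sigma \geq 0$.

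Next I would apply the Lindeberg–Lévy central limit theorem to the i.i.d.\ sequence $(Z_i)$: as $n \to \infty$ (so $\lfloor n/2 \rfloor \to \infty$),
\begin{equation*}
  \sqrt{\lfloor n/2 \rfloor}\, \linearestimator = \frac{1}{\sqrt{\lfloor n/2 \rfloor}} \sum_{i=1}^{\lfloor n/2 \rfloor} Z_i \xrightarrow{d} \mathcal{N}(0, \sigma^2)
\end{equation*}
under $\nullhypothesis$. If $\sigma > 0$, this is equivalent to $\sqrt{\lfloor n/2 \rfloor}\, \linearestimator / \sigma \xrightarrow{d} \mathcal{N}(0,1)$. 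To replace $\sigma$ by the sample standard deviation $\hat\sigma$, I would invoke the weak law of large numbers for the $Z_i$ and their squares to conclude that $\hat\sigma^2 \xrightarrow{p} \sigma^2$, hence $\hat\sigma \xrightarrow{p} \sigma$ and (for $\sigma > 0$) $\sigma / \hat\sigma \xrightarrow{p} 1$. Slutsky's theorem then yields
\begin{equation*}
  \frac{\sqrt{\lfloor n/2 \rfloor}\, \linearestimator}{\hat\sigma} \xrightarrow{d} \mathcal{N}(0,1),
\end{equation*}
and since the standard normal cdf $\Phi$ is continuous everywhere, convergence in distribution gives pointwise convergence of the cdfs at every $t$, i.e.\ $\Prob[\sqrt{\lfloor n/2 \rfloor}\, \linearestimator / \hat\sigma < t \mid \nullhypothesis] \to \Phi(t)$, equivalently $\Prob[\sqrt{\lfloor n/2 \rfloor}\, \linearestimator \geq t \mid \nullhypothesis] \to 1 - \Phi(t/\hat\sigma)$ in the sense claimed.

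The only genuine subtlety is the degenerate case $\sigma = 0$. Here the limiting law is the point mass at $0$, the ratio $\sqrt{\lfloor n/2 \rfloor}\,\linearestimator / \hat\sigma$ is a $0/0$ expression, and the statement as literally written with $\Phi(t/\hat\sigma)$ needs interpretation; I expect this to be the main (minor) obstacle and would handle it either by noting that $\sigma = 0$ forces $h_{2i-1,2i}$ to be a.s.\ constant equal to $0$ (a pathological case typically excluded in practice, e.g.\ by assuming $\sigma > 0$ or a nondegenerate kernel), or by adopting the convention $0/0 = 0$ so that both sides degenerate consistently. Everything else is a routine combination of the i.i.d.\ CLT, the WLLN for the sample variance, Slutsky's theorem, and continuity of $\Phi$.
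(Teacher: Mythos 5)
Your proof is correct and takes essentially the same route as the paper: the paper simply remarks that the result ``follows from the central limit theorem'' (citing Serfling) and then, in the corollary that defines the acceptance region $\sqrt{\lfloor n/2\rfloor}\,\linearestimator < z_{1-\alpha}\hat\sigma$, implicitly plugs in a consistent estimator of $\sigma$, which is exactly your Slutsky step. You have filled in the details correctly: the $Z_i = h_{2i-1,2i}$ are i.i.d.\ because they depend on disjoint pairs of the validation data, under $\nullhypothesis$ their mean is $\squaredkernelmeasure[k,g]=0$ by \cref{lemma:skce_linear} and \cref{thm:ce_zero}, the Lindeberg--L\'{e}vy CLT gives the normal limit, consistency of the sample standard deviation plus Slutsky replaces $\sigma$ by $\hat\sigma$, and continuity of $\Phi$ converts weak convergence into pointwise convergence of tail probabilities. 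You are also right that the displayed limit in the lemma is slightly informal, since $\hat\sigma$ on the right-hand side is itself a random, $n$-dependent quantity; the mathematically precise statement is $\Prob[\sqrt{\lfloor n/2\rfloor}\,\linearestimator/\hat\sigma \geq t \mid \nullhypothesis] \to 1-\Phi(t)$, which is how the paper's test corollary actually uses it. Finally, your flag on the degenerate case $\sigma=0$ is a fair minor caveat; the paper's \cref{cor:linear_asymptotic} only assumes $\sigma^2<\infty$, so strictly one should either exclude $\sigma=0$ or note that it only arises in pathological settings (e.g.\ a.s.\ deterministic predictions). No gaps.
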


In \cref{thm:quadratic_asymptotic_degenerate} we derive a theoretical expression
of the asymptotic distribution of $n \unbiasedestimator$, under the assumption
of strong calibration. This limit distribution can be approximated, e.g., by
bootstrapping \citep{arcones92_boots_u_v_statis} or Pearson curve fitting, as
discussed by \citet{gretton12_kernel_two_sampl_test}.

\section{Experiments}

We conduct experiments to confirm the derived theoretical properties of the
proposed calibration error estimators empirically and to compare them with
a standard histogram-regression based estimator of the $\ECE$, denoted by
$\widehat{\ECE}$.\footnote{The implementation of the experiments is available
  online at \url{https://github.com/devmotion/CalibrationPaper}.}

We construct synthetic data sets $\{(g(X_i), Y_i)\}_{i=1}^{250}$ of 250 labeled
predictions for $m = 10$ classes from three generative models. For each model we
first sample predictions $g(X_i) \sim \Dir(0.1, \dots, 0.1)$, and then
simulate corresponding labels $Y_i$ conditionally on $g(X_i)$ from
\begin{equation*}
  \mathbf{M1}\colon \, \Categorical(g(X_i)), \quad
  \mathbf{M2}\colon \, 0.5\Categorical(g(X_i)) + 0.5\Categorical(1,0,\dots,0), \quad
  \mathbf{M3}\colon \, \Categorical(0.1, \dots, 0.1),
\end{equation*}
where $\mathbf{M1}$ gives a calibrated model, and $\mathbf{M2}$ and
$\mathbf{M3}$ are uncalibrated. In \cref{sec:generative_models} we investigate
the theoretical properties of these models in more detail.

For simplicity, we use the matrix-valued kernel
$k(x, y) = \exp{(- \|x - y\| / \nu)} \mathbf{I}_{10}$, where the kernel bandwidth
$\nu > 0$ is chosen by the median heuristic. The total variation distance is a
common distance measure of probability distributions and the standard distance
measure for the
$\ECE$~\citep{guo17_calib_moder_neural_networ,vaicenavicius19_evaluat,broecker07_increas_reliab_reliab_diagr},
and hence it is chosen as the distance measure for all studied calibration
errors.

\subsection{Calibration error estimates}

In \cref{fig:errors_comparison} we show the distribution of $\widehat{\ECE}$ and
of the three proposed estimators of the $\squaredkernelmeasure$, evaluated on
$10^4$ randomly sampled data sets from each of the three models. The true
calibration error of these models, indicated by a dashed line, is calculated
theoretically for the $\ECE$ (see~\cref{sec:theoretical_ece}) and empirically
for the $\squaredkernelmeasure$ using the sample mean of all unbiased estimates
of $\unbiasedestimator$. The results confirm the unbiasedness of
$\linearestimator$.

\begin{figure}[!htbp]
  \begin{center}
    \tikzsetnextfilename{errors_comparison}
    \input{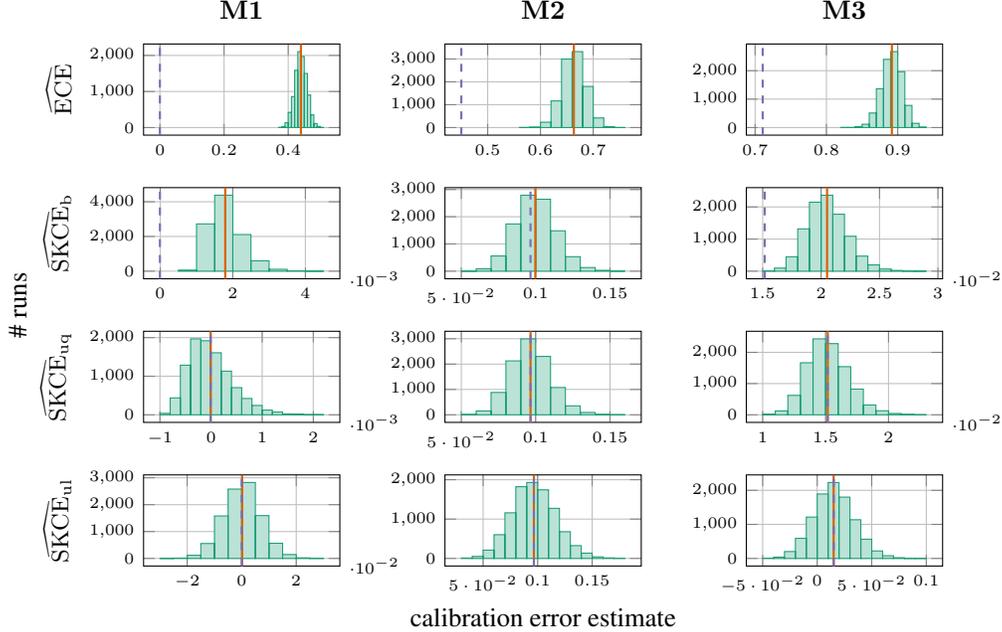}
    \caption{Distribution of calibration error estimates of $10^4$ data sets
      that are randomly sampled from the generative models $\mathbf{M1}$,
      $\mathbf{M2}$, and $\mathbf{M3}$. The solid line indicates the mean of the
      calibration error estimates, and the dashed line displays the true
      calibration error.}
    \label{fig:errors_comparison}
  \end{center}
\end{figure}

\subsection{Calibration tests}

We repeatedly compute the bounds and approximations of the p-value for the
calibration error estimators that were derived in \cref{sec:tests} on $10^4$
randomly sampled data sets from each of the three models. More concretely, we
evaluate the distribution-free bounds for $\biasedestimator$
($\mathbf{D}_{\mathrm{b}}$), $\unbiasedestimator$ ($\mathbf{D}_{\mathrm{uq}}$), and
$\linearestimator$ ($\mathbf{D}_{\mathrm{ul}}$) and the asymptotic
approximations for $\unbiasedestimator$ ($\mathbf{A}_{\mathrm{uq}}$) and
$\linearestimator$ ($\mathbf{A}_{\mathrm{l}}$), where the former is approximated
by bootstrapping. We compare them with a previously proposed hypothesis test for
the standard $\ECE$ estimator based on consistency resampling ($\mathbf{C}$), in
which data sets are resampled under the assumption that the model is calibrated
by sampling labels from resampled predictions
\citep{broecker07_increas_reliab_reliab_diagr,vaicenavicius19_evaluat}.

For a chosen significance level $\alpha$ we compute from the p-value
approximations $p_1,\ldots,p_{10^4}$ the empirical test error
\begin{equation*}
  \frac{1}{10^4} \sum_{i=1}^{10^4} \mathbbm{1}_{[0, \alpha]}(p_i) \quad \text{ (for } \mathbf{M1} \text{)}
  \qquad \text{and} \qquad
  \frac{1}{10^4} \sum_{i=1}^{10^4} \mathbbm{1}_{(\alpha, 1]}(p_i) \quad \text{ (for } \mathbf{M2} \text{ and } \mathbf{M3} \text{)}.
\end{equation*}
In \cref{fig:pvalues_comparison} we plot these empirical test errors versus the
significance level.

As expected, the distribution-free bounds seem to be very loose upper bounds of
the p-value, resulting in statistical tests without much power. The asymptotic
approximations, however, seem to estimate the p-value quite well on average, as
can be seen from the overlap with the diagonal in the results for the calibrated
model $\mathbf{M1}$ (the empirical test error matches the chosen significance
level). Additionally, calibration tests based on asymptotic distribution of
these statistics, and in particular of $\unbiasedestimator$, are quite powerful
in our experiments, as the results for the uncalibrated models $\mathbf{M2}$ and
$\mathbf{M3}$ show. For the calibrated model, consistency resampling leads to an
empirical test error that is not upper bounded by the significance level, i.e.,
the null hypothesis of the model being calibrated is rejected too often. This
behaviour is caused by an underestimation of the p-value on average, which
unfortunately makes the calibration test based on consistency resampling for the
standard $\ECE$ estimator unreliable.

\begin{figure}[!htbp]
  \begin{center}
    \tikzsetnextfilename{pvalues_comparison}
    \input{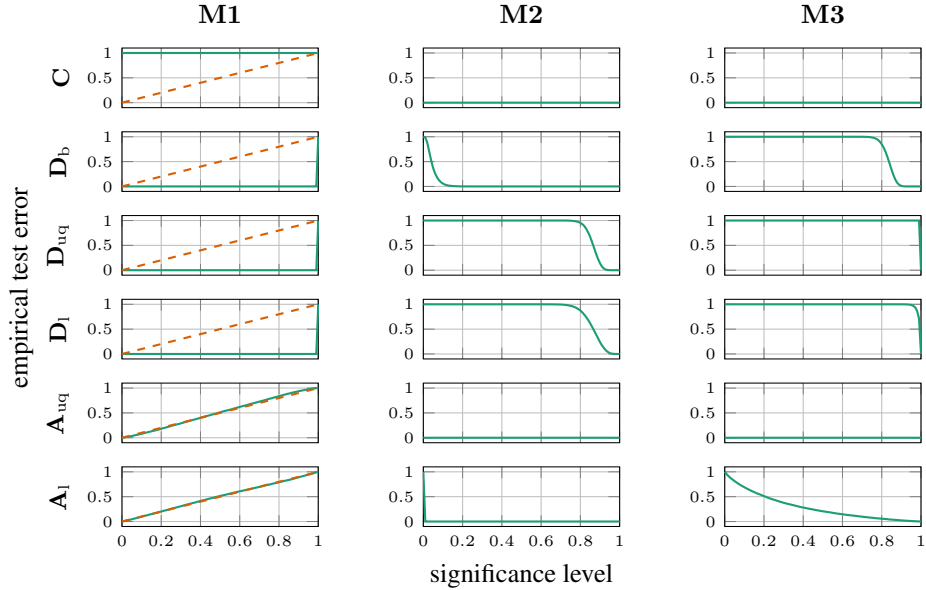}
    \caption{Empirical test error versus significance level for different
      bounds and approximations of the p-value, evaluated on $10^4$ data sets
      that are randomly sampled from the generative models $\mathbf{M1}$,
      $\mathbf{M2}$, and $\mathbf{M3}$. The dashed line highlights the diagonal
      of the unit square.}
    \label{fig:pvalues_comparison}
  \end{center}
\end{figure}

\subsection{Additional experiments}

In \cref{sec:additional_estimates} we provide results for varying
number of classes and a non-standard $\ECE$ estimator with data-dependent bins.
We observe negative and positive bias for both $\ECE$ estimators, whereas
$\biasedestimator$ is guaranteed to be biased upwards. The bias of $\widehat{\ECE}$
becomes more prominent with increasing number of classes, showing high calibration
error estimates even for calibrated models. The $\squaredkernelmeasure$
estimators are not affected by the number of classes in the same way. In some
experiments with 100 and 1000 classes, however, the distribution of
$\linearestimator$ shows multi-modality.

The empirical evaluations in \cref{sec:additional_tests} indicate
that the reliability of the p-value approximations based on $\widehat{\ECE}$
decreases with increasing number of classes, both for the estimator with bins
of uniform size and, to a smaller extent, for the estimator with data-dependent bins.

The considered calibration measures depend only on the predictions and the
true labels, not on how these predictions are computed. Hence directly
specifying the predictions allows a clean numerical evaluation and enables
comparisons of the estimates with the true calibration error. Nevertheless, we
provide a more practical evaluation of calibration for several modern neural
networks in \cref{sec:neural_networks}.

\section{Conclusion}

We have presented a unified framework for quantifying the calibration error of
probabilistic classifiers. The framework encompasses several existing error
measures and enables the formulation of a new kernel-based measure. We have
derived unbiased and consistent estimators of the kernel-based error measures,
which are properties not readily enjoyed by the more common and less tractable
$\ECE$. The impact of the kernel and its hyperparameters on the estimators is an
important question for future research. We have refrained from investigating it
in this paper, since it deserves a more exhaustive study than, what we felt,
would have been possible in this work.

The calibration error estimators can be viewed as test statistics. This confers
probabilistic interpretability to the error measures. Specifically, we can
compute well-founded bounds and approximations of the p-value for the observed
error estimates under the null hypothesis that the model is calibrated. We
have derived distribution-free bounds and asymptotic approximations for the
estimators of the proposed kernel-based error measure, that allow reliable
calibration tests in contrast to previously proposed tests based on consistency
resampling with the standard estimator of the $\ECE$.

\subsubsection*{Acknowledgements}

We thank the reviewers for all the constructive feedback on our paper.
This research is financially supported by the Swedish Research Council via the
projects \emph{Learning of Large-Scale Probabilistic Dynamical Models} (contract
number: 2016-04278) and \emph{Counterfactual Prediction Methods for Heterogeneous
  Populations} (contract number: 2018-05040), by the Swedish Foundation for
Strategic Research via the project \emph{Probabilistic Modeling and Inference
  for Machine Learning} (contract number: ICA16-0015), and by the Wallenberg AI,
Autonomous Systems and Software Program (WASP) funded by the Knut and Alice
Wallenberg Foundation.

\bibliography{references}
\bibliographystyle{abbrvnat}

\clearpage

\appendix
\numberwithin{equation}{section}
\renewcommand*{\thetheorem}{\thesection.\arabic{theorem}}
\renewcommand*{\thelemma}{\thesection.\arabic{lemma}}
\renewcommand*{\theproposition}{\thesection.\arabic{proposition}}
\renewcommand*{\thecorollary}{\thesection.\arabic{corollary}}
\renewcommand*{\thedefinition}{\thesection.\arabic{definition}}
\renewcommand*{\theexample}{\thesection.\arabic{example}}
\renewcommand*{\theremark}{\thesection.\arabic{remark}}
\makeatletter
\@addtoreset{theorem}{section}
\@addtoreset{lemma}{section}
\@addtoreset{proposition}{section}
\@addtoreset{corollary}{section}
\@addtoreset{definition}{section}
\@addtoreset{example}{section}
\@addtoreset{remark}{section}
\makeatother

\section{Notation}

We introduce some additional notation to keep the following discussion concise.

Let $U$ be a compact metric space and $V$ be a Hilbert space. In this paper we
only consider $U = [0,1]$ and $U = \Delta^m$. By $\mathcal{C}(U, V)$ we denote
the space of continuous functions $f \colon U \to V$.

For $1 \leq p < \infty$, $L^p(U, \mu; V)$ is the space of (equivalence classes
of) measurable functions $f \colon U \to V$ such that $\|f\|^p$ is
$\mu$-integrable, equipped with norm
$\|f\|_{\mu,p} = {(\int_{U} \|f(x)\|^p \,\mu(\mathrm{d}x))}^{1/p}$; for
$p = \infty$, $L^{\infty}(U, \mu; V)$ is the space of $\mu$-essentially bounded
measurable functions $f \colon U \to V$ with norm
$\|f\|_{\mu,\infty} = \mu\text{-}\esssup_{x \in \mathcal{X}} \|f(x)\|$. We denote the
closed unit ball of the space $L^p(U, \mu; V)$ by
$K^p(U, \mu; V) \coloneqq \{f \in L^p(U, \mu; V) \colon \|f\|_{\mu,p} \leq 1\}$.

If the norm $\|.\|_V$ on $V$ is not clear from the context we indicate it by
writing $L^p(U, \mu; V, \|.\|_V)$, $K^p(U, \mu; V, \|.\|_V)$, and
$\|.\|_{\mu,p;\|.\|_V}$. If $V \subset \mathbb{R}^d$, all possible norms
$\|.\|_V$ are equivalent and hence we choose $\|.\|_p$ on $V$ to simplify our
calculations, if not stated otherwise. Moreover, if $V \subset \mathbb{R}^d$
and $\|.\|_V = \|.\|_q$ for some $1 \leq q \leq \infty$, for convenience we
write $\|.\|_{\mu,p;q} = \|.\|_{\mu,p;\|.\|_V}$.

Let $W$ be another Hilbert space. Then $\mathcal{L}(V, W)$ denotes the space of
bounded linear operators $T \colon V \to W$; if $W = V$, we write
$\mathcal{L}(V) \coloneqq \mathcal{L}(V, V)$. The induced operator norm on
$\mathcal{L}(V, W)$ is defined by
\begin{equation*}
  \begin{split}
     \|T\|_{\|.\|_V; \|.\|_W} &= \inf \{ c \geq 0 \colon \|Tv\|_W \leq c \|v\|_V \text{for all } v \in V\} \\
     &= \sup_{v \in V \colon \|v\|_V \leq 1} \|Tv\|_W.
  \end{split}
\end{equation*}
If $W = V$, we write $\|.\|_{\|.\|_V} = \|.\|_{\|.\|_V; \|.\|_V}$. Moreover, for
convenience if $V \subset \mathbb{R}^d$ and $\|.\|_V = \|.\|_p$ for some
$1 \leq p \leq \infty$, we use the notation
$\|.\|_{p;\|.\|_W} = \|.\|_{\|.\|_V; \|.\|_W}$. In the same way, if
$W \subset \mathbb{R}^d$ and $\|.\|_W = \|.\|_q$ for some
$1 \leq q \leq \infty$, we write $\|.\|_{\|.\|_V; q} = \|.\|_{\|.\|_V;\|.\|_W}$.

By $\mathcal{B}(T)$ we denote the Borel $\sigma$-algebra of a topological space
$T$.

We write $\mu_A$ for the distribution of a random variable $A$, i.e., the
pushforward measure it induces, if $A$ is defined on a probability space with
probability measure $\mu$.

\section{Probabilistic setting}

Let $(\Omega, \mathcal{A}, \mu)$ be a probability space. Let $m \in \mathbb{N}$
and define the random variables
$X \colon (\Omega, \mathcal{A}) \to (\mathcal{X}, \Sigma_X)$ and
$Y \colon (\Omega, \mathcal{A}) \to (\{1,\ldots,m\}, 2^{\{1,\ldots,m\}})$, such that
$\Sigma_X$ contains all singletons. We denote a version of the regular
conditional distribution of $Y$ given $X$ by $\mu_{Y|X}(\cdot|x)$ for all
$x \in \mathcal{X}$.

We consider the problem of learning a measurable function
$g \colon (\mathcal{X}, \Sigma_X) \to (\Delta^m, \mathcal{B}(\Delta^m))$ that
returns the prediction $g_y(x)$ of $\mu_{Y|X}(\{y\}|x)$ for all $x \in \mathcal{X}$
and $y \in \{1,\ldots,m\}$. We define the random variable
$G \colon (\Omega, \mathcal{A}) \to (\Delta^m, \mathcal{B}(\Delta^m))$ as $G \coloneqq g(X)$.

In the same way as above, we denote a version of the regular conditional
distribution of $Y$ given $G$ by $\mu_{Y|G}(\cdot|t)$ for all $t \in \Delta^m$.
The function
$\delta \colon (\Delta^m, \mathcal{B}(\Delta^m)) \to (\mathbb{R}^m, \mathcal{B}(\mathbb{R}^m))$,
given by
\begin{equation*}
  \delta(t) \coloneqq \begin{pmatrix}
    \mu_{Y|G}(1|t) - t \\
    \vdots \\
    \mu_{Y|G}(m|t) - t
  \end{pmatrix}
\end{equation*}
for all $t \in \Delta^m$, gives rise to another random variable
$\Delta \colon (\Omega, \mathcal{A}) \to (\mathbb{R}^m, \mathcal{B}(\mathbb{R}^m))$
that is defined by $\Delta \coloneqq \delta(G)$.

Using the newly introduced mathematical notation, we can reformulate strong
calibration in a compact way. A model $g$ is calibrated in the strong sense if
$\mu_{Y|G}(\cdot|G) = G$ almost surely, or equivalently if $\Delta = 0$ almost surely.

\section{Calibration error}

\begin{definition}[Calibration error]
  Let $\mathcal{F} \subset L^1(\Delta^m, \mu_G; \mathbb{R}^m)$ be non-empty.
  Then the calibration error $\measure$ of model $g$ with respect to class
  $\mathcal{F}$ is
  \begin{equation*}
    \measure[\mathcal{F}, g]  \coloneqq \sup_{f \in \mathcal{F}} \Expect\left[\langle \Delta, f(G) \rangle_{\mathbb{R}^m}\right].
  \end{equation*}
\end{definition}

\begin{remark}\label{remark:exists}
  Note that $\|\Delta\|_{\infty} \leq 1$ almost surely, and thus
  $\|\delta\|_{\mu_G,\infty} \leq 1$. Hence by Hölder's inequality for all
  $f \in \mathcal{F}$ we have
  \begin{equation*}
    |\Expect[\langle \Delta, f(G) \rangle_{\mathbb{R}^m}]| \leq \Expect[|\langle \Delta, f(G) \rangle_{\mathbb{R}^m}|] \leq \|\delta\|_{\mu_G,\infty} \|f\|_{\mu_G,1} \leq \|f\|_{\mu_G,1} < \infty.
  \end{equation*}
  However, it is still possible that $\measure[\mathcal{F}, g] = \infty$.
\end{remark}

\begin{remark}\label{remark:positive}
  If $\mathcal{F}$ is symmetric in the sense that $f \in \mathcal{F}$ implies
  $-f \in \mathcal{F}$, then $\measure[\mathcal{F}, g] \geq 0$.
\end{remark}

The measure highly depends on the choice of $\mathcal{F}$ but strong calibration
always implies a calibration error of zero.

\begin{theorem}[Strong calibration implies zero error]\label{thm:ce_zero}
  Let $\mathcal{F} \subset L^1(\Delta^m,\mu_G;\mathbb{R})$. If model $g$ is
  calibrated in the strong sense, then $\measure[\mathcal{F}, g] = 0$.
\end{theorem}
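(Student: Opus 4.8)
The plan is to reduce the statement to the compact reformulation of strong calibration from the probabilistic setting: $g$ is calibrated in the strong sense if and only if the random variable $\Delta = \delta(G)$ satisfies $\Delta = 0$ $\mu$-almost surely. Since $\measure[\mathcal{F}, g]$ is a supremum of expectations of the inner products $\langle \Delta, f(G)\rangle_{\mathbb{R}^m}$, once $\Delta$ vanishes almost surely each of these inner products vanishes almost surely, and the conclusion is immediate.

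In detail I would argue as follows. Fix an arbitrary $f \in \mathcal{F}$. By \cref{remark:exists} (which uses $\|\Delta\|_{\infty} \leq 1$ almost surely together with Hölder's inequality), the random variable $\langle \Delta, f(G)\rangle_{\mathbb{R}^m}$ is $\mu$-integrable, so $\Expect[\langle \Delta, f(G)\rangle_{\mathbb{R}^m}]$ is well defined and finite. Now invoke strong calibration in the form $\Delta = 0$ $\mu$-a.s.: then $\langle \Delta, f(G)\rangle_{\mathbb{R}^m} = \langle 0, f(G)\rangle_{\mathbb{R}^m} = 0$ holds $\mu$-a.s., hence $\Expect[\langle \Delta, f(G)\rangle_{\mathbb{R}^m}] = 0$. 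Since $f \in \mathcal{F}$ was arbitrary and $\mathcal{F}$ is non-empty, taking the supremum over $f \in \mathcal{F}$ yields $\measure[\mathcal{F}, g] = \sup_{f \in \mathcal{F}} 0 = 0$.

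There is essentially no obstacle here: the only two points that deserve a moment's attention are (i) that $\langle \Delta, f(G)\rangle_{\mathbb{R}^m}$ is genuinely integrable so that the expectation is meaningful, which is already supplied by \cref{remark:exists}, and (ii) that we are entitled to replace the conditional-distribution definition of strong calibration by the equivalent statement $\Delta = 0$ a.s., as noted when the random variable $\Delta$ was introduced. No assumption on the kernel, on $g$ beyond strong calibration, or on $\mathcal{F}$ beyond non-emptiness enters, which is precisely why the result holds for an arbitrary class $\mathcal{F}$.
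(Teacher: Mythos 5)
Your proof is correct and follows exactly the same route as the paper's: pass to the reformulation $\Delta=0$ a.s., conclude that each $\Expect[\langle \Delta, f(G)\rangle_{\mathbb{R}^m}]$ vanishes, and take the supremum over $\mathcal{F}$. You merely spell out the integrability point via \cref{remark:exists}, which the paper leaves implicit.
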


\begin{proof}
  If model $g$ is calibrated in the strong sense, then $\Delta = 0$ almost
  surely. Hence for all $f \in \mathcal{F}$ we have
  $\Expect[\langle \Delta, f(G)\rangle_{\mathbb{R}}] = 0$, which implies
  $\measure[\mathcal{F}, g] = 0$.
\end{proof}

Of course, the converse statement is not true in general. A similar result as
above shows that the class of continuous functions, albeit too large and
impractical, allows to identify calibrated models.

\begin{theorem}\label{thm:ce_continuous}
  Let $\mathcal{F} = \mathcal{C}(\Delta^m, \mathbb{R}^m)$. Then
  $\measure[\mathcal{F}, g] = 0$ if and only if model $g$ is calibrated in the
  strong sense.
\end{theorem}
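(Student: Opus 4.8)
\section*{Proof proposal for \texorpdfstring{\cref{thm:ce_continuous}}{Theorem}}

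The \enquote{if} direction requires nothing new: if $g$ is strongly calibrated then $\Delta = 0$ $\mu$-a.s., so $\langle\Delta, f(G)\rangle_{\mathbb{R}^m} = 0$ $\mu$-a.s.\ for every $f\in\mathcal{C}(\Delta^m,\mathbb{R}^m) \subset L^1(\Delta^m,\mu_G;\mathbb{R}^m)$, whence $\Expect[\langle\Delta,f(G)\rangle_{\mathbb{R}^m}] = 0$ for all such $f$ and therefore $\measure[\mathcal{F},g] = 0$, exactly as in the proof of \cref{thm:ce_zero}. The substantive direction is the converse, and the plan is to convert the vanishing of the supremum into the statement that $\delta$ is $\mu_G$-orthogonal to every continuous test function, and then to conclude by density of continuous functions in $L^1$.

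First I would symmetrize. Since $\mathcal{F} = \mathcal{C}(\Delta^m,\mathbb{R}^m)$ is symmetric ($f\in\mathcal{F}\Rightarrow -f\in\mathcal{F}$), \cref{remark:positive} gives $\measure[\mathcal{F},g]\ge 0$; combined with the hypothesis $\measure[\mathcal{F},g] = 0$, for every $f\in\mathcal{F}$ we have $\Expect[\langle\Delta,f(G)\rangle_{\mathbb{R}^m}]\le 0$, and applying this to $-f$ also $\Expect[\langle\Delta,f(G)\rangle_{\mathbb{R}^m}]\ge 0$, so
\[
  \Expect[\langle\Delta,f(G)\rangle_{\mathbb{R}^m}] = \int_{\Delta^m}\langle\delta(t),f(t)\rangle_{\mathbb{R}^m}\,\mu_G(\mathrm{d}t) = 0 \qquad\text{for all } f\in\mathcal{C}(\Delta^m,\mathbb{R}^m).
\]
Next I would reduce to scalar test functions: fixing a class $i\in\{1,\dots,m\}$ and an arbitrary $\phi\in\mathcal{C}(\Delta^m,\mathbb{R})$, the choice $f = \phi\,e_i$ yields $\int_{\Delta^m}\delta_i(t)\phi(t)\,\mu_G(\mathrm{d}t) = 0$. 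By \cref{remark:exists}, $|\delta_i|\le\|\delta\|_{\mu_G,\infty}\le 1$, so $\delta_i\in L^\infty(\Delta^m,\mu_G;\mathbb{R})\subset L^1(\Delta^m,\mu_G;\mathbb{R})$.

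The crux is then a standard functional-analytic fact: since $\Delta^m$ is a compact metric space and $\mu_G$ a finite Borel measure on it, $\mathcal{C}(\Delta^m,\mathbb{R})$ is dense in $L^1(\Delta^m,\mu_G;\mathbb{R})$ (equivalently, by the Riesz representation theorem, the only finite signed Borel measure annihilating every continuous function is the zero measure). Regarding $\delta_i$ as a bounded linear functional on $L^1(\Delta^m,\mu_G;\mathbb{R})$ that vanishes on the dense subspace $\mathcal{C}(\Delta^m,\mathbb{R})$, we conclude $\delta_i = 0$ in $L^\infty(\Delta^m,\mu_G;\mathbb{R})$, i.e.\ $\delta_i = 0$ $\mu_G$-a.s. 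As this holds for every $i$, $\delta = 0$ $\mu_G$-a.s., hence $\Delta = \delta(G) = 0$ $\mu$-a.s., which is precisely strong calibration. I do not expect a serious obstacle; the only two points needing care are the symmetrization step that upgrades \enquote{$\sup = 0$} to \enquote{the integral vanishes for every $f$}, and the density/uniqueness invocation, which relies only on $\Delta^m$ being compact metric and $\mu_G$ finite Borel — both true by construction. If one prefers to avoid citing density, an alternative is to approximate $\operatorname{sgn}(\delta_i)\in L^1(\Delta^m,\mu_G;\mathbb{R})$ by continuous $\phi_n$ with $\|\phi_n\|_\infty\le 1$ and pass to the limit in $\int\delta_i\phi_n\,\mathrm{d}\mu_G = 0$ via dominated convergence to get $\int|\delta_i|\,\mathrm{d}\mu_G = 0$.
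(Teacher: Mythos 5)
Your proposal is correct, and it takes a genuinely different route from the paper. You prove the converse direction directly: symmetrize to get $\int_{\Delta^m}\langle\delta(t),f(t)\rangle_{\mathbb{R}^m}\,\mu_G(\mathrm{d}t)=0$ for all continuous $f$, reduce componentwise to $\int_{\Delta^m}\delta_i\phi\,\mathrm{d}\mu_G = 0$ for all continuous scalar $\phi$, and then invoke the density of $\mathcal{C}(\Delta^m,\mathbb{R})$ in $L^1(\Delta^m,\mu_G;\mathbb{R})$ (or, equivalently, the fact that a bounded $\delta_i \in L^\infty = (L^1)^*$ annihilating a dense subspace must vanish) to conclude $\delta_i = 0$ $\mu_G$-a.s. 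The paper instead argues by contraposition: assuming $g$ is not strongly calibrated, it picks a sign vector $s\in\{-1,1\}^m$, forms the positive-measure set $A_s$ on which $\langle\delta(\cdot),s\rangle > 0$, and then uses regularity of $\mu_G$ together with Urysohn's lemma to replace $s\mathbbm{1}_{A_s}$ by a continuous $f$ that witnesses $\measure[\mathcal{F},g]>0$. These two arguments are closely related under the hood — the density theorem you cite is itself proved by exactly the regularity-plus-Urysohn mechanism — but yours packages those steps into a standard black box and proceeds componentwise, whereas the paper's inlines the approximation for the specific sign-adapted indicator and works with the vector-valued function directly. Your approach is slightly cleaner and more modular; the paper's is more self-contained. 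Your alternative sketch via truncated continuous approximations of $\operatorname{sgn}(\delta_i)$ and dominated convergence is also sound, with the minor caveat that one should truncate the $L^1$-approximating sequence to keep $\|\phi_n\|_\infty \le 1$ before applying dominated convergence, and pass to an a.e.-convergent subsequence.
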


\begin{proof}
  Note that $\mathcal{F}$ is well defined since
  $\mathcal{F} \subset L^1(\Delta^m, \mu_G; \mathbb{R}^m)$.

  If model $g$ is calibrated in the strong sense, then
  $\measure[\mathcal{F}, g] = 0$ by \cref{thm:ce_zero}.

  If model $g$ is not calibrated in the strong sense, then $\Delta = 0$ does not
  hold almost surely. In particular, there exists $s \in \{-1,1\}^m$ such that
  $\langle \Delta, s\rangle_{\mathbb{R}^m} \leq 0$ does not hold almost surely.
  Define the function $f_s \colon \Delta^m \to \mathbb{R}^m$ by
  $f_s \coloneqq \langle \delta(\cdot), s \rangle_{\mathbb{R}^m}$ and let
  $A_s \coloneqq f_s^{-1}((0, \infty))$. Then $A_s \in \mathcal{B}(\Delta^m)$
  since $f_s$ is Borel measurable, and $\mu_G(A_s) > 0$. Hence we know that
  \begin{equation*}
    \alpha_s \coloneqq \Expect[\langle \Delta, s\mathbbm{1}_{A_s}(G) \rangle_{\mathbb{R}^m}] > 0.
  \end{equation*}

  Since $\mu_G$ is a Borel probability measure on a compact metric space, it is
  regular and hence there exist a compact set $K$ and an open set $U$ such that
  $K \subset A_s \subset U$ and $\mu_G(U \setminus K) < \alpha_s / 4$
  \citep[Theorem~2.17]{rudin86_real}. Thus by Urysohn's lemma applied to the
  closed sets $K$ and $U^c$, there exists a continuous function
  $h \in \mathcal{C}(\Delta^m, \mathbb{R})$ such that
  $\mathbbm{1}_K \leq h \leq \mathbbm{1}_U$. By defining
  $f = sh \in \mathcal{C}(\Delta^m, \mathbb{R}^m)$ and applying Hölder's
  inequality we obtain
  \begin{equation*}
    \begin{split}
      \Expect[\langle \Delta, f(G) \rangle_{\mathbb{R}^m}] &= \Expect[\langle \Delta, s\mathbbm{1}_{A_s}(G) \rangle_{\mathbb{R}^m}] \\
      &\quad + \Expect[\langle \Delta, f(G) - s\mathbbm{1}_{A_s}(G) \rangle_{\mathbb{R}^m}] \\
      &\geq \alpha_s - |\Expect[(h(G) - \mathbbm{1}_{A_s}(G)) \langle \Delta, s \rangle_{\mathbb{R}^m}]| \\
      &\geq \alpha_s - \Expect[|h(G) - \mathbbm{1}_{A_s}(G)| |\langle \Delta, s \rangle_{\mathbb{R}^m}|] \\
      &\geq \alpha_s - \Expect[\mathbbm{1}_{U \setminus K}(G) \|\Delta\|_1 \|s\|_{\infty}] \geq \alpha_s - 2 \mu_G(U\setminus K) \\
      &> \alpha_s - \alpha_s/2 = \alpha_s/2 > 0.
    \end{split}
  \end{equation*}
  This implies $\measure[\mathcal{F}, g] > 0$.
\end{proof}

\section{Reproducing kernel Hilbert spaces of vector-valued functions on the probability simplex}\label{app:rkhs}

\begin{definition}[{{\citet[Definition~1]{micchelli05_learn_vector_valued_funct}}}]\label{def:rkhs}
  Let $\mathcal{H}$ be a Hilbert space of vector-valued functions
  $f \colon \Delta^m \to \mathbb{R}^m$ with inner product
  $\langle ., . \rangle_{\mathcal{H}}$. We call $\mathcal{H}$ a reproducing
  kernel Hilbert space (RKHS), if for all $t \in \Delta^m$ and
  $u \in \mathbb{R}^m$ the functional $E_{t,u} \colon \mathcal{H} \to \mathbb{R}$,
  $E_{t,u} f \coloneqq \langle u, f(t) \rangle_{\mathbb{R}^m}$, is a bounded (or
  equivalently continuous) linear operator.
\end{definition}

Riesz's representation theorem ensures that there exists a unique function
$k \colon \Delta^m \times \Delta^m \to \mathbb{R}^{m \times m}$ such that for
all $t \in \Delta^m$ the function $k(\cdot, t)$ is a linear map from $\Delta^m$
to $\mathcal{H}$ and for all $u \in \mathbb{R}^m$ it satisfies the so-called
reproducing property
\begin{equation}\label{eq:reproducing_property}
  \langle u, f(t) \rangle_{\mathbb{R}^m} = E_{t,u}f = \langle k(\cdot, t) u, f\rangle_{\mathcal{H}}.
\end{equation}
It can be shown that function $k$ is self-adjoint\footnote{Let $U$, $V$ be two
  Hilbert spaces. Then the adjoint of a linear operator
  $T \in \mathcal{L}(U, V)$ is the linear operator $T^* \in \mathcal{L}(V, U)$
  such that for all $u \in U$, $v \in V$
  $\langle Tu, v\rangle_{V} = \langle u, T^*v \rangle_U$.} and positive
semi-definite, and hence a kernel according to \cref{def:kernel}
\citep[Proposition~1]{micchelli05_learn_vector_valued_funct}. Similar to the
scalar-valued case, conversely by the Moore-Aronszaijn theorem
\citep{aronszajn50_theor_reprod_kernel} to every kernel
$k \colon \Delta^m \times \Delta^m \to \mathcal{L}(\mathbb{R}^m)$ there exists a
unique RKHS $\mathcal{H} \subset {(\mathbb{R}^m)}^{\Delta^m}$ with $k$ as
reproducing kernel \citep[][Theorem~1]{micchelli05_learn_vector_valued_funct}.

Other useful properties are summarized in \cref{lemma:rkhs_properties}.
\Citet{micchelli05_learn_vector_valued_funct} considered only the Euclidean norm
on $\mathbb{R}^m$, corresponding to $p = q = 2$ in our statement. For
convenience we use the notation
$\|.\|_{p; \mathcal{H}} = \|.\|_{\|.\|_p; \|.\|_{\mathcal{H}}}$.

\begin{lemma}[{{\citet[Proposition~1]{micchelli05_learn_vector_valued_funct}}}]\label{lemma:rkhs_properties}
  Let $\mathcal{H} \subset {(\mathbb{R}^m)}^{\Delta^m}$ be a RKHS with kernel
  $k \colon \Delta^m \times \Delta^m \to \mathcal{L}(\mathbb{R}^m)$. Let
  $1\leq p, q \leq \infty$ with Hölder conjugates $p'$ and $q'$, respectively.

  \begin{enumerate}
  \item For all $t \in \Delta^m$
    \begin{equation}\label{eq:rkhs_kernel}
      \|k(\cdot, t)\|_{p;\mathcal{H}} = {\|k(t,t)\|}^{1/2}_{p;p'}.
    \end{equation}
  \item For all $s, t \in \Delta^m$
    \begin{equation}\label{eq:rkhs_st_bound}
      \|k(s,t)\|_{p;q} \leq \|k(s,s)\|_{q';q}^{1/2} \|k(t,t)\|_{p;p'}^{1/2}.
    \end{equation}
  \item For all $f \in \mathcal{H}$ and $t \in \Delta^m$
    \begin{equation}\label{eq:rkhs_bound}
      \|f(t)\|_p \leq \|f\|_{\mathcal{H}} \|k(\cdot, t)\|_{p';\mathcal{H}} = \|f\|_{\mathcal{H}} {\|k(t,t)\|}^{1/2}_{p';p}.
    \end{equation}
  \end{enumerate}
\end{lemma}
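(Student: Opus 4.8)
The plan is to prove the three parts in the order (1), (3), (2): part (3) uses part (1) with conjugate exponents, and part (2) then follows by combining parts (1) and (3). Throughout I would use only the reproducing property \eqref{eq:reproducing_property}, the fact (implicit in the construction of $k$) that the matrix $k(s,t)$ is characterized by $k(s,t)u = (k(\cdot,t)u)(s)$ for $u \in \mathbb{R}^m$, self-adjointness and positive semi-definiteness of $k(t,t)$, Hölder's inequality, and Cauchy--Schwarz in $\mathcal{H}$ and in $\mathbb{R}^m$.

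For part (1), fix $t \in \Delta^m$ and $u \in \mathbb{R}^m$. Applying \eqref{eq:reproducing_property} with $f = k(\cdot, t)u \in \mathcal{H}$, and using $(k(\cdot, t)u)(t) = k(t,t)u$, gives
\[
  \|k(\cdot, t)u\|_{\mathcal{H}}^2 = \langle k(\cdot, t)u, k(\cdot, t)u\rangle_{\mathcal{H}} = \langle u, k(t,t)u\rangle_{\mathbb{R}^m},
\]
so that $\|k(\cdot, t)\|_{p;\mathcal{H}}^2 = \sup_{\|u\|_p \le 1}\langle u, k(t,t)u\rangle_{\mathbb{R}^m}$. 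It then remains to show that for a self-adjoint positive semi-definite $A \in \mathcal{L}(\mathbb{R}^m)$ one has $\sup_{\|u\|_p \le 1}\langle u, Au\rangle = \|A\|_{p;p'}$. The bound ``$\le$'' is immediate from Hölder's inequality, $\langle u, Au\rangle \le \|u\|_p\|Au\|_{p'}$. For ``$\ge$'', I would invoke the Cauchy--Schwarz inequality for the positive semi-definite bilinear form $(u,v) \mapsto \langle u, Av\rangle$, namely $\langle v, Au\rangle \le \langle v, Av\rangle^{1/2}\langle u, Au\rangle^{1/2}$; taking the supremum over $\|v\|_p \le 1$ and using the duality $\|w\|_{p'} = \sup_{\|v\|_p\le 1}\langle v, w\rangle$ yields $\|Au\|_{p'} \le (\sup_{\|v\|_p\le 1}\langle v, Av\rangle)^{1/2}\langle u, Au\rangle^{1/2}$, and a further supremum over $\|u\|_p \le 1$ gives $\|A\|_{p;p'} \le \sup_{\|u\|_p\le 1}\langle u, Au\rangle$. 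Taking square roots in the resulting identity finishes part (1).

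For part (3), fix $f \in \mathcal{H}$, $t \in \Delta^m$, and $u \in \mathbb{R}^m$. By \eqref{eq:reproducing_property} and Cauchy--Schwarz in $\mathcal{H}$,
\[
  |\langle u, f(t)\rangle_{\mathbb{R}^m}| = |\langle k(\cdot, t)u, f\rangle_{\mathcal{H}}| \le \|k(\cdot, t)u\|_{\mathcal{H}}\|f\|_{\mathcal{H}} \le \|k(\cdot, t)\|_{p';\mathcal{H}}\|u\|_{p'}\|f\|_{\mathcal{H}};
\]
taking the supremum over $\|u\|_{p'} \le 1$ bounds $\|f(t)\|_p$ by $\|f\|_{\mathcal{H}}\|k(\cdot, t)\|_{p';\mathcal{H}}$, and the stated identity follows from part (1) applied with $p'$ in place of $p$ (whose Hölder conjugate is $p$). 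For part (2), fix $s, t \in \Delta^m$ and $u \in \mathbb{R}^m$ with $\|u\|_p \le 1$. Since $k(s,t)u = (k(\cdot, t)u)(s)$ is the evaluation at $s$ of the function $k(\cdot, t)u \in \mathcal{H}$, part (3) applied with output norm $\|\cdot\|_q$ gives $\|k(s,t)u\|_q \le \|k(\cdot, t)u\|_{\mathcal{H}}\,\|k(s,s)\|_{q';q}^{1/2}$, while the computation from part (1) gives $\|k(\cdot, t)u\|_{\mathcal{H}} = \langle u, k(t,t)u\rangle^{1/2} \le \|k(t,t)\|_{p;p'}^{1/2}$. Combining these and taking the supremum over $\|u\|_p \le 1$ yields $\|k(s,t)\|_{p;q} \le \|k(s,s)\|_{q';q}^{1/2}\|k(t,t)\|_{p;p'}^{1/2}$.

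I expect the only genuinely non-routine step to be the matrix-norm identity $\sup_{\|u\|_p\le 1}\langle u, Au\rangle = \|A\|_{p;p'}$ for positive semi-definite $A$ used inside part (1); the remainder is bookkeeping with the reproducing property, Hölder, and Cauchy--Schwarz. Some care is needed at the endpoints $p \in \{1,\infty\}$, but the duality $\|\cdot\|_{p'} = (\|\cdot\|_p)^*$ and the Hölder/Cauchy--Schwarz arguments remain valid there.
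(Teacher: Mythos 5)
Your proof is correct. The ingredients are the same as in the paper---the reproducing property, H\"older's inequality, Cauchy--Schwarz, and $\ell_p$--$\ell_{p'}$ duality---but you organize them a little differently. In part~(1), for the lower bound $\|k(t,t)\|_{p;p'}\leq\|k(\cdot,t)\|_{p;\mathcal{H}}^2$ you extract a self-contained matrix lemma, namely that $\sup_{\|u\|_p\leq 1}\langle u,Au\rangle=\|A\|_{p;p'}$ for a self-adjoint positive semi-definite $A$, and prove it via the Cauchy--Schwarz inequality for the bilinear form $(u,v)\mapsto\langle u,Av\rangle$; the paper instead keeps this step inside the RKHS, writing $\langle u,k(t,t)v\rangle=\langle k(\cdot,t)u,k(\cdot,t)v\rangle_{\mathcal{H}}$ and applying Cauchy--Schwarz in $\mathcal{H}$. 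These are the same estimate in different clothing, since the positive semi-definite bilinear form you use is precisely $(u,v)\mapsto\langle k(\cdot,t)u,k(\cdot,t)v\rangle_{\mathcal{H}}$, but your packaging isolates a clean, reusable statement about matrix norms. The second genuine difference is the treatment of part~(2): the paper proves it directly from the reproducing property and Cauchy--Schwarz in $\mathcal{H}$, whereas you prove~(3) first and then obtain~(2) by applying~(3) to the feature function $f=k(\cdot,t)u\in\mathcal{H}$ together with the computation $\|k(\cdot,t)u\|_{\mathcal{H}}^2=\langle u,k(t,t)u\rangle\leq\|u\|_p^2\|k(t,t)\|_{p;p'}$ from part~(1). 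The reuse shortens the argument slightly and makes the dependence structure $(1)\Rightarrow(3)\Rightarrow(2)$ explicit; the paper's direct proof of~(2) is marginally more symmetric in $s$ and $t$ but repeats the same mechanics. Both are sound.
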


\begin{proof}
  Let $t \in \Delta^m$. From the reproducing property, Hölder's inequality, and
  the definition of the operator norm, we obtain for all $u \in \mathbb{R}^m$
  \begin{equation*}
    \|k(\cdot, t)u\|_{\mathcal{H}}^2 = \langle k(\cdot, t) u, k(\cdot, t) u\rangle_{\mathcal{H}} = \langle u, k(t, t) u \rangle_{\mathbb{R}^m} \leq \|u\|_p \|k(t, t) u\|_{p'} \leq {\|u\|}^2_p \|k(t,t)\|_{p;p'},
  \end{equation*}
  which implies that
  \begin{equation}\label{eq:rkhs_kernel_1}
    \|k(\cdot, t)\|_{p;\mathcal{H}} = \sup_{u \in \mathbb{R}^m \setminus \{0\}} \frac{\|k(\cdot, t)u\|_{\mathcal{H}}}{\|u\|_p} \leq \|k(t,t)\|_{p;p'}^{1/2}.
  \end{equation}
  On the other hand, for all $u,v \in \mathbb{R}^m$ it follows from the
  reproducing property, the Cauchy-Schwarz inequality, and the definition of the
  operator norm that
  \begin{equation*}
    \langle u, k(t,t) v \rangle_{\mathbb{R}^m} = \langle k(\cdot, t) u, k(\cdot, t) v \rangle_{\mathcal{H}} \leq \|k(\cdot, t) u\|_{\mathcal{H}} \|k(\cdot, t) v\|_{\mathcal{H}} \leq \|u\|_p \|v\|_p \|k(\cdot, t)\|^2_{p;\mathcal{H}}.
  \end{equation*}
  Since the $\ell_p$-norm is the dual norm of the $\ell_{p'}$-norm, it follows
  that
  \begin{equation*}
    \|k(t,t)v\|_{p'} = \sup_{u \in \mathbb{R}^m \colon \|u\|_p \leq 1} \langle u, k(t,t)v\rangle_{\mathbb{R}^m} \leq \|v\|_p \|k(\cdot, t)\|^2_{p;\mathcal{H}},
  \end{equation*}
  which implies that
  \begin{equation}\label{eq:rkhs_kernel_2}
    \|k(t,t)\|_{p;p'} = \sup_{v \in \mathbb{R}^m \setminus \{0\}} \frac{\|k(t,t)v\|_{p'}}{\|v\|_p} \leq \|k(\cdot, t)\|^2_{p;\mathcal{H}}.
  \end{equation}
  \Cref{eq:rkhs_kernel} follows from \cref{eq:rkhs_kernel_1,eq:rkhs_kernel_2}.

  Let $s, t \in \Delta^m$. From the reproducing property, the Cauchy-Schwarz
  inequality, and the definition of the operator norm, we get for all
  $u, v \in \mathbb{R}^m$
  \begin{equation*}
    \begin{split}
      \langle u, k(s,t)v\rangle_{\mathbb{R}^m} &\leq \langle k(\cdot, s)u, k(\cdot, t)v\rangle_{\mathcal{H}} \leq \|k(\cdot,s)u\|_{\mathcal{H}} \|k(\cdot, t)v\|_{\mathcal{H}} \\
      &\leq \|u\|_{q'} \|v\|_p \|k(\cdot, s)\|_{q';\mathcal{H}} \|k(\cdot, t)\|_{p;\mathcal{H}}.
    \end{split}
  \end{equation*}
  Thus we obtain
  \begin{equation*}
    \|k(s,t)v\|_q = \sup_{u \in \mathbb{R}^m \colon \|u\|_{q'} \leq 1} \langle u, k(s,t)v \rangle_{\mathbb{R}^m} \leq \|v\|_p \|k(\cdot,s)\|_{q';\mathcal{H}} \|k(\cdot, t)\|_{p;\mathcal{H}},
  \end{equation*}
  which implies
  \begin{equation*}
    \|k(s,t)\|_{p;q} = \sup_{v \in \mathbb{R}^m \setminus \{0\}} \frac{\|k(s,t)v\|_q}{\|v\|_p} \leq \|k(\cdot,s)\|_{q';\mathcal{H}} \|k(\cdot, t)\|_{p;\mathcal{H}}.
  \end{equation*}
  Hence from \cref{eq:rkhs_kernel} we obtain \cref{eq:rkhs_st_bound}.

  For the third statement, let $f \in \mathcal{H}$ and $t \in \Delta^m$. From
  the reproducing property, the Cauchy-Schwarz inequality, and the definition of
  the operator norm, we obtain for all $u \in \mathbb{R}^m$
  \begin{equation*}
    \langle u, f(t) \rangle_{\mathbb{R}^m} = \langle k(\cdot, t) u, f \rangle_{\mathcal{H}} \leq \|k(\cdot, t) u\|_{\mathcal{H}} \|f\|_{\mathcal{H}} \leq \|u\|_{p'} \|k(\cdot, t)\|_{p';\mathcal{H}} \|f\|_{\mathcal{H}}.
  \end{equation*}
  Thus the duality of the $\ell_p$- and the $\ell_{p'}$-norm implies
  \begin{equation*}
    \|f(t)\|_p = \sup_{u \in \mathbb{R}^m \colon \|u\|_{p'} \leq 1} \langle u, f(t)\rangle_{\mathbb{R}^m} \leq \|k(\cdot, t)\|_{p';\mathcal{H}} \|f\|_{\mathcal{H}},
  \end{equation*}
  which together with \cref{eq:rkhs_kernel} yields \cref{eq:rkhs_bound}.
\end{proof}

If $\mu$ is a measure on $\Delta^m$, we define for $1 \leq p, q \leq \infty$
$\|k\|_{\mu,p;q} \coloneqq \|\tilde{k}_q\|_{\mu,p}$ where
$\tilde{k}_q \colon \Delta^m \to \mathbb{R}$ is given by
$\tilde{k}(t) \coloneqq \|k(\cdot,t)\|_{q;\mathcal{H}} = \|k(t,t)\|_{q;q'}^{1/2}$.
We omit the value of $q$ if it is clear from the context or does not matter,
since all norms on $\mathbb{R}^m$ are equivalent.

It is possible to construct certain classes of matrix-valued kernels from
scalar-valued kernels, as the following example shows.

\begin{example}[{{\citet{micchelli05_learn_vector_valued_funct}; \citet[Example~1]{caponnetto08_univer_multi_task_kernel}}}]\label{ex:lifted_kernel}
  For all $i \in \{1, \ldots, n\}$, let
  $k_i \colon \Delta^m \times \Delta^m \to \mathbb{R}$ be a scalar-valued kernel
  and $A_i \in \mathbb{R}^{m \times m}$ be a positive semi-definite matrix. Then
  the function
  \begin{equation}\label{eq:lifted_kernel}
    k \colon \Delta^m \times \Delta^m \to \mathbb{R}^{m \times m}, \qquad k(s, t) \coloneqq \sum_{i=1}^n k_i(s, t) A_i,
  \end{equation}
  is a matrix-valued kernel.
\end{example}

We state a simple result about measurability of functions in the considered
RKHSs. The result can be formulated in a much more general fashion and is
similar to a result by \citet[Lemma~4.24]{christmann08_suppor_vector_machin}.

\begin{lemma}[Measurable RKHS]\label{lemma:rkhs_measurable}
  Let $\mathcal{H} \subset {(\mathbb{R}^m)}^{\Delta^m}$ be a RKHS with kernel
  $k \colon \Delta^m \times \Delta^m \to \mathcal{L}(\mathbb{R}^m)$. Then all
  $f \in \mathcal{H}$ are measurable if and only if
  $k(\cdot, t)u \in {(\mathbb{R}^m)}^{\Delta^m}$ is measurable for all
  $t \in \Delta^m$ and $u \in \mathbb{R}^m$.
\end{lemma}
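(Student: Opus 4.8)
The plan is to prove the two implications separately: the forward direction is immediate from the fact that $k(\cdot,t)u \in \mathcal{H}$, and the reverse direction rests on a density argument combined with the pointwise bound \cref{eq:rkhs_bound}.

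For the \enquote{only if} direction, recall that by the reproducing property \cref{eq:reproducing_property} (equivalently, by the Riesz representation theorem used in the construction above \cref{def:rkhs}) the function $k(\cdot,t)u$ belongs to $\mathcal{H}$ for every $t \in \Delta^m$ and $u \in \mathbb{R}^m$. Hence if every $f \in \mathcal{H}$ is measurable, then so in particular is $k(\cdot,t)u$.

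For the \enquote{if} direction, the first step is to observe that the linear subspace $\mathcal{H}_0 \coloneqq \Span\{k(\cdot,t)u \colon t \in \Delta^m,\, u \in \mathbb{R}^m\}$ is dense in $\mathcal{H}$: if $g \in \mathcal{H}$ is orthogonal to every $k(\cdot,t)u$, then by the reproducing property $\langle u, g(t)\rangle_{\mathbb{R}^m} = \langle k(\cdot,t)u, g\rangle_{\mathcal{H}} = 0$ for all $t$ and $u$, so $g \equiv 0$. Every element of $\mathcal{H}_0$ is a finite linear combination of functions $k(\cdot,t)u$ and hence measurable by hypothesis. The second step is to fix $f \in \mathcal{H}$ and, using density, pick $f_n \in \mathcal{H}_0$ with $\|f_n - f\|_{\mathcal{H}} \to 0$ as $n \to \infty$ (such a sequence exists for each individual $f$, even though $\mathcal{H}$ itself need not be separable). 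Applying \cref{eq:rkhs_bound} of \cref{lemma:rkhs_properties} with $p = 2$, and using that $\|k(t,t)\|_{2;2} < \infty$ since $k(t,t)$ is a bounded operator on the finite-dimensional space $\mathbb{R}^m$, we obtain for every $t \in \Delta^m$
\begin{equation*}
  \|f_n(t) - f(t)\|_2 \leq \|f_n - f\|_{\mathcal{H}}\, \|k(t,t)\|_{2;2}^{1/2} \to 0 \quad (n \to \infty),
\end{equation*}
so $f_n \to f$ pointwise on $\Delta^m$. Consequently each coordinate $t \mapsto f_i(t)$ is a pointwise limit of the measurable real-valued functions $t \mapsto (f_n)_i(t)$ and is therefore measurable, which yields measurability of $f$.

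I expect the only mildly delicate points to be the density statement and the passage from $\mathcal{H}$-convergence to pointwise convergence (for which \cref{eq:rkhs_bound} is exactly what is needed), together with the observation that a pointwise limit of measurable $\mathbb{R}^m$-valued functions is measurable (handled coordinatewise). Beyond that the argument is routine bookkeeping with the reproducing property and \cref{lemma:rkhs_properties}.
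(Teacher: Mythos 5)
Your proof is correct and takes essentially the same route as the paper: both approximate $f$ by a sequence in $\mathcal{H}_0 = \Span\{k(\cdot,t)u\}$, invoke density of $\mathcal{H}_0$ in $\mathcal{H}$, and use continuity of point evaluation to pass from $\|f_n - f\|_{\mathcal{H}} \to 0$ to pointwise convergence and hence measurability. The only cosmetic difference is that you prove density via the short orthogonality argument and use the explicit bound \cref{eq:rkhs_bound}, where the paper cites \citet{carmeli10_vector_valued_reprod_kernel_hilber_spaces_univer} for density and invokes continuity of $k^*(\cdot,t)$ more tersely.
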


\begin{proof}
  If all $f \in \mathcal{H}$ are measurable, then $k(\cdot, t)u \in \mathcal{H}$
  is measurable for all $t \in \Delta^m$ and $u \in \mathbb{R}^m$.

  If $k(\cdot, t)u$ is measurable for all $t \in \Delta^m$ and
  $u \in \mathbb{R}^m$, then all functions in
  $\mathcal{H}_0 \coloneqq \Span{\{k(\cdot, t)u \colon t \in \Delta^m, u \in \mathbb{R}^m\}} \subset \mathcal{H}$
  are measurable.

  Let $f \in \mathcal{H}$. Since $\mathcal{H} = \overline{\mathcal{H}}_0$
  \citep[see, e.g.,][]{carmeli10_vector_valued_reprod_kernel_hilber_spaces_univer},
  there exists a sequence $(f_n)_{n \in \mathbb{N}} \subset \mathcal{H}_0$ such
  that $\lim_{n \to \infty} \|f - f_n\|_{\mathcal{H}} = 0$. For all
  $t \in \Delta^m$, since the operator $k^*(\cdot, t)$ is continuous, by the
  reproducing property we obtain $\lim_{n \to \infty} f_n(t) = f(t)$. Thus $f$
  is measurable.
\end{proof}

By definition
\citep[see, e.g.,][Definition~1]{carmeli10_vector_valued_reprod_kernel_hilber_spaces_univer},
a RKHS with a continuous kernel is a subspace of the space of continuous
functions. The following equivalent formulation is an immediate consequence of
the result by \citet{carmeli10_vector_valued_reprod_kernel_hilber_spaces_univer}.

\begin{corollary}[{{\citet[Proposition~1]{carmeli10_vector_valued_reprod_kernel_hilber_spaces_univer}}}]
  A kernel $k \colon \Delta^m \times \Delta^m \to \mathbb{R}^m$ is continuous if
  for all $t \in \Delta^m$ $t \mapsto \|k(t, t)\|$ is bounded and for all
  $t \in \Delta^m$ and $u \in \mathbb{R}^m$ $k(\cdot, t) u$ is a continuous
  function from $\Delta^m$ to $\mathbb{R}^m$.
\end{corollary}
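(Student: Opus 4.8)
The plan is to treat this exactly as the text announces it: an immediate consequence of \citet[Proposition~1]{carmeli10_vector_valued_reprod_kernel_hilber_spaces_univer}. Concretely, I would first recall the remark preceding the statement — by definition a RKHS whose reproducing kernel is continuous is a subspace of $\mathcal{C}(\Delta^m,\mathbb{R}^m)$ — and then match the two hypotheses here against the (equivalent) hypotheses of that proposition, which characterises when the RKHS $\mathcal{H}$ associated with a kernel $k$ on the topological space $\Delta^m$ consists of continuous functions (equivalently, in this setting, when $k$ itself is continuous): namely, that $t\mapsto\|k(t,t)\|$ be locally bounded and that $k(\cdot,t)u$ be continuous for every $t\in\Delta^m$ and $u\in\mathbb{R}^m$.

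Given that, the argument is short. The second condition is one of our hypotheses verbatim, and by the reproducing property $k(\cdot,t)u\in\mathcal{H}$ for all such $t,u$, so it is a statement about the generators of $\mathcal{H}_0 \coloneqq \Span\{k(\cdot,t)u : t\in\Delta^m,\, u\in\mathbb{R}^m\}$, exactly as in the proof of \cref{lemma:rkhs_measurable}. The first condition reduces to our hypothesis because $\Delta^m$ is compact: a locally bounded function on a compact space is bounded, and conversely. The role of the resulting uniform bound $\sup_{t}\|k(t,t)\|<\infty$ is, via item~(3) of \cref{lemma:rkhs_properties} (i.e. $\|f(t)\|\le\|f\|_{\mathcal{H}}\|k(t,t)\|^{1/2}$), to turn $\mathcal{H}$-convergence into uniform pointwise convergence; this is the mechanism that propagates continuity from the generators of $\mathcal{H}_0$ to all of $\overline{\mathcal{H}}_0=\mathcal{H}$ and then, through the reproducing property, back to $k$. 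Since both hypotheses of the corollary thus coincide with the criterion of \citet[Proposition~1]{carmeli10_vector_valued_reprod_kernel_hilber_spaces_univer}, that proposition yields continuity of $k$.

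The one step with genuine content is the upgrade from \emph{separate} continuity of $k$ — the second hypothesis, which also gives continuity of $t\mapsto k(s,t)u$ by the symmetry $k(s,t)=k(t,s)^{\intercal}$ and a coordinatewise argument — to \emph{joint} continuity on $\Delta^m\times\Delta^m$, and this is precisely where the uniform bound is used. That is the substance of the cited proposition, and I would not reprove it here; writing the limiting argument out carefully requires keeping track of which topology is placed on $\mathcal{H}$ and on $\mathcal{C}(\Delta^m,\mathbb{R}^m)$, whereas everything else (the compactness reduction, and the identification of $k(\cdot,t)u$ with the generators of $\mathcal{H}_0$) is routine.
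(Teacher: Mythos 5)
Your proposal takes exactly the paper's route: the corollary is, as stated in the surrounding text, a direct citation of \citet[Proposition~1]{carmeli10_vector_valued_reprod_kernel_hilber_spaces_univer}, and the paper supplies no independent argument. Your expansion — matching the two hypotheses verbatim to the cited proposition's criteria and noting that local boundedness of $t\mapsto\|k(t,t)\|$ is equivalent to boundedness on the compact set $\Delta^m$ — is a correct and mildly more explicit version of the same reduction.
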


An important class of continuous kernels are so-called universal kernels, for
which the corresponding RKHS is a dense subset of the space of continuous
functions with respect to the uniform norm. A result by
\citet{caponnetto08_univer_multi_task_kernel} shows under what assumptions
matrix-valued kernels of the form in \cref{ex:lifted_kernel} are universal.

\begin{lemma}[{{\citet[Theorem~14]{caponnetto08_univer_multi_task_kernel}}}]\label{lemma:lifted_kernel}
  For all $i \in \{1, \ldots, n\}$, let
  $k_i \colon \Delta^m \times \Delta^m \to \mathbb{R}$ be a universal
  scalar-valued kernel and $A_i \in \mathbb{R}^{m \times m}$ be a positive
  semi-definite matrix. Then the matrix-valued kernel defined in
  \cref{eq:lifted_kernel} is universal if and only if $\sum_{i=1}^n A_i$ is
  positive definite.
\end{lemma}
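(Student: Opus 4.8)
The plan is to prove both implications through the standard characterization of universality --- that the RKHS $\mathcal{H}_k$ of $k$ is a dense subset of $\mathcal{C}(\Delta^m,\mathbb{R}^m)$ in the uniform norm --- combined with Hahn--Banach duality for $\mathcal{C}(\Delta^m,\mathbb{R}^m)$. First I would note that each $k_i$, being universal, is continuous, so $k = \sum_i k_i A_i$ is continuous and hence $\mathcal{H}_k \subset \mathcal{C}(\Delta^m,\mathbb{R}^m)$; moreover, by \cref{lemma:rkhs_properties} together with the boundedness of $t \mapsto \|k(t,t)\|$ on the compact simplex, norm convergence in $\mathcal{H}_k$ implies uniform convergence, so $\Span{\{k(\cdot,t)u \colon t \in \Delta^m, u \in \mathbb{R}^m\}}$ is already uniformly dense in $\mathcal{H}_k$. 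Identifying the dual of $\mathcal{C}(\Delta^m,\mathbb{R}^m)$ with $m$-tuples $\mu = (\mu_1,\dots,\mu_m)$ of finite signed regular Borel measures via $L_\mu(f) = \sum_{j=1}^m \int f_j\, d\mu_j$, Hahn--Banach reduces universality of $k$ to the implication: if $L_\mu(k(\cdot,t)u) = 0$ for all $t \in \Delta^m$ and $u \in \mathbb{R}^m$, then $\mu = 0$. A short Fubini computation rewrites the hypothesis $L_\mu(k(\cdot,t)u) = 0$ as the pointwise identity $\sum_i A_i \gamma_i(t) = 0$ on $\Delta^m$, where $\gamma_i(t) \coloneqq \big(\int k_i(s,t)\,d\mu_1(s),\dots,\int k_i(s,t)\,d\mu_m(s)\big)$. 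I would use this reformulation as the common hub for both directions.

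For \enquote{universal $\Rightarrow \sum_i A_i$ positive definite} I would argue by contraposition: if $A \coloneqq \sum_i A_i$ (which is automatically positive semi-definite) fails to be positive definite, pick $v \neq 0$ with $v^\intercal A v = 0$; then $v^\intercal A_i v = 0$, hence $A_i v = 0$ for every $i$, hence $k(\cdot,t)v \equiv 0$, and the reproducing property gives $\langle v, f(t)\rangle_{\mathbb{R}^m} = \langle k(\cdot,t)v, f\rangle_{\mathcal{H}_k} = 0$ for every $f \in \mathcal{H}_k$ and every $t$. Since $\{f \in \mathcal{C}(\Delta^m,\mathbb{R}^m) \colon \langle v, f(\cdot)\rangle_{\mathbb{R}^m} \equiv 0\}$ is a proper closed subspace (it omits the constant map $v$), $\mathcal{H}_k$ cannot be dense, so $k$ is not universal.

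For the converse I would assume $A \succ 0$, take $\mu$ with $\sum_i A_i \gamma_i \equiv 0$, and integrate this identity against $\mu$: Fubini turns $0 = \sum_j \int \big(\sum_i A_i\gamma_i(t)\big)_j\, d\mu_j(t)$ into $0 = \sum_i \sum_{j,l} (A_i)_{jl} \big\langle \widehat{\mu_j}^{(i)}, \widehat{\mu_l}^{(i)}\big\rangle_{\mathcal{H}_{k_i}}$, where $\widehat{\mu_j}^{(i)} \coloneqq \int k_i(s,\cdot)\,d\mu_j(s) \in \mathcal{H}_{k_i}$ and $\big\langle \widehat{\mu_j}^{(i)}, \widehat{\mu_l}^{(i)}\big\rangle_{\mathcal{H}_{k_i}} = \iint k_i(s,t)\,d\mu_j(s)\,d\mu_l(t)$. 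Diagonalizing $A_i = \sum_r \lambda_{i,r} w_{i,r} w_{i,r}^\intercal$ with $\lambda_{i,r} \geq 0$ shows the inner double sum equals $\sum_r \lambda_{i,r} \big\| \sum_j (w_{i,r})_j \widehat{\mu_j}^{(i)} \big\|_{\mathcal{H}_{k_i}}^2 \geq 0$, so every such term vanishes, and therefore $\sum_j w_j \widehat{\mu_j}^{(i)} = 0$ for all $w$ in the range of $A_i$. Universality of $k_i$ means (again by Hahn--Banach applied in the scalar case) that the mean embedding $\nu \mapsto \int k_i(s,\cdot)\,d\nu(s)$ is injective on signed measures, so $\sum_j w_j \mu_j = 0$ for all $w$ in the range of $A_i$, i.e.\ $A_i (\mu_1,\dots,\mu_m)^\intercal = 0$ componentwise. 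Summing over $i$ yields $A (\mu_1,\dots,\mu_m)^\intercal = 0$, and invertibility of $A$ forces $\mu = 0$; hence $\mathcal{H}_k$ is dense and $k$ is universal.

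The matrix-level identities and the bookkeeping with $\mathcal{C}(\Delta^m,\mathbb{R}^m)^*$ are routine; the step I expect to need the most care is the double reduction \enquote{$L_\mu$ annihilates $\mathcal{H}_k$} $\Leftrightarrow$ \enquote{$\sum_i A_i\gamma_i \equiv 0$}, which relies on the uniform (not merely $\mathcal{H}_k$-norm) density of the kernel span in $\mathcal{H}_k$ and on the characterization of universal scalar kernels by injectivity of the mean embedding --- both consequences of \cref{lemma:rkhs_properties} and Hahn--Banach, but needing to be stated carefully. An alternative I could fall back on avoids the mean-embedding language: Aronszajn's sum theorem identifies $\mathcal{H}_k$ with $\{\sum_i A_i^{1/2}\psi_i \colon \psi_i \in \mathcal{H}_{k_i}^m\}$, and since $A \succ 0$ one writes $F = A(A^{-1}F) = \sum_i A_i^{1/2}\big(A_i^{1/2}A^{-1}F\big)$ and approximates each $A_i^{1/2}A^{-1}F$ componentwise by elements of $\mathcal{H}_{k_i}$; this is essentially the argument of \citet[Theorem~14]{caponnetto08_univer_multi_task_kernel}.
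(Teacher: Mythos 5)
The paper does not actually prove this lemma — it cites it verbatim as \citet[Theorem~14]{caponnetto08_univer_multi_task_kernel} and gives no argument of its own — so there is no internal proof to compare against. Your proof is a correct, self-contained argument. The duality reduction is handled carefully: you correctly note that $\mathcal{H}_k$-norm convergence implies uniform convergence (via \cref{lemma:rkhs_properties} and $\sup_t \|k(t,t)\| < \infty$ on the compact simplex), so the kernel span is dense in $\mathcal{H}_k$ in the $\mathcal{C}$-topology, and Hahn--Banach plus vector-valued Riesz representation then reduce universality to the injectivity claim about annihilating $m$-tuples of measures. The \enquote{only if} direction is the standard null-vector obstruction ($v^\intercal A_i v = 0$ with $A_i \succeq 0$ forces $A_i v = 0$, hence $k(\cdot,t)v \equiv 0$, hence $\mathcal{H}_k$ is trapped in a proper closed subspace). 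The \enquote{if} direction correctly exploits the PSD eigendecompositions to show each $\sum_j (w_{i,r})_j \hat{\mu}_j^{(i)}$ vanishes for $\lambda_{i,r}>0$, invokes injectivity of the scalar mean embedding (equivalent to universality of $k_i$, again by Hahn--Banach), concludes $A_i \mu = 0$ componentwise, and sums to get $A\mu = 0$ and $\mu = 0$.

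Your route is dual (annihilators in $\mathcal{C}(\Delta^m,\mathbb{R}^m)^*$, kernel mean embeddings), while the reference proof — as you flag yourself at the end — is closer to a primal construction via the Aronszajn sum theorem: write $F = \sum_i A_i(A^{-1}F)$ and approximate each piece componentwise in the individual scalar RKHSs. The dual argument dispatches both directions from a single reformulation and reuses the clean characterization of scalar universality by injectivity of the embedding; the primal argument avoids identifying the dual space and produces the approximating functions explicitly. Both are legitimate; for the purposes of this paper the statement is only cited, not proved.
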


\section{Kernel calibration error}

The one-to-one correspondence between matrix-valued kernels and RKHSs of
vector-valued functions motivates the introduction of the kernel calibration
error ($\kernelmeasure$) in \cref{def:kce}. For certain kernels we are able to
identify strongly calibrated models.

\begin{theorem}\label{thm:ce_rkhs}
  Let $k \colon \Delta^m \times \Delta^m \to \mathcal{L}(\mathbb{R}^m)$ be a
  universal continuous kernel. Then $\kernelmeasure[k, g] = 0$ if and only if
  model $g$ is calibrated in the strong sense.
\end{theorem}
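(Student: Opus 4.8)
The plan is to bootstrap the claim from the two results already proved for larger function classes: \cref{thm:ce_zero} disposes of the ``only if'' direction, and \cref{thm:ce_continuous} together with the universality of $k$ gives the ``if'' direction. Before applying them I would check that $\kernelmeasure[k,g]=\measure[\mathcal{F},g]$ is well defined, i.e.\ that the unit ball $\mathcal{F}$ of the RKHS $\mathcal{H}$ of $k$ lies in $L^1(\Delta^m,\mu_G;\mathbb{R}^m)$. Since $k$ is continuous, every $f\in\mathcal{H}$ is continuous, hence Borel measurable (cf.\ \cref{lemma:rkhs_measurable}) and bounded on the compact set $\Delta^m$; alternatively one bounds $\|f(t)\|_p$ uniformly via \cref{eq:rkhs_bound} and the boundedness of $t\mapsto\|k(t,t)\|$. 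Either way $\mathcal{H}\subset\mathcal{C}(\Delta^m,\mathbb{R}^m)\subset L^1(\Delta^m,\mu_G;\mathbb{R}^m)$, so $\measure[\mathcal{F},g]$ is meaningful, and for the direction ``$g$ strongly calibrated $\Rightarrow\kernelmeasure[k,g]=0$'' I would simply invoke \cref{thm:ce_zero} with $\mathcal{F}$ the unit ball of $\mathcal{H}$: strong calibration means $\Delta=0$ almost surely, so $\Expect[\langle\Delta,f(G)\rangle_{\mathbb{R}^m}]=0$ for every $f$ and $\measure[\mathcal{F},g]=0$.

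For the converse I would argue by contraposition. If $g$ is not strongly calibrated, then $\measure[\mathcal{C}(\Delta^m,\mathbb{R}^m),g]>0$ by \cref{thm:ce_continuous}, so there is a continuous $f^\ast$ with $c\coloneqq\Expect[\langle\Delta,f^\ast(G)\rangle_{\mathbb{R}^m}]>0$. Universality of $k$ means $\mathcal{H}$ is dense in $\mathcal{C}(\Delta^m,\mathbb{R}^m)$ for the uniform norm, so I can choose $f_n\in\mathcal{H}$ with $\|f_n-f^\ast\|_\infty\to0$. Using $\|\Delta\|_\infty\le1$ almost surely (cf.\ \cref{remark:exists}) and Hölder's inequality on $\mathbb{R}^m$,
\[
  \bigl|\Expect[\langle\Delta,f_n(G)-f^\ast(G)\rangle_{\mathbb{R}^m}]\bigr| \;\le\; \Expect\bigl[\|\Delta\|_{\infty}\,\|f_n(G)-f^\ast(G)\|_{1}\bigr] \;\le\; m\,\|f_n-f^\ast\|_\infty \longrightarrow 0 ,
\]
so $\Expect[\langle\Delta,f_n(G)\rangle_{\mathbb{R}^m}]>c/2>0$ for all large $n$. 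Fixing such an $n$, we get $f_n\neq0$, and $f_n/\|f_n\|_{\mathcal{H}}\in\mathcal{F}$ satisfies $\Expect[\langle\Delta,(f_n/\|f_n\|_{\mathcal{H}})(G)\rangle_{\mathbb{R}^m}]>0$; hence $\kernelmeasure[k,g]=\measure[\mathcal{F},g]>0$.

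The step I expect to require the most care is the interplay of norms in the approximation argument: universality is phrased through the uniform norm on $\mathcal{C}(\Delta^m,\mathbb{R}^m)$, whereas $\measure$ pairs $\Delta$ with $f(G)$ via $\langle\cdot,\cdot\rangle_{\mathbb{R}^m}$, so one must make sure the error term is controlled uniformly in $\omega$. This is exactly where the almost-sure bound $\|\Delta\|_\infty\le1$ (which holds because $\Delta=\delta(G)$ takes values in differences of probability vectors) enters, and since all norms on $\mathbb{R}^m$ are equivalent the precise constants are immaterial. The remaining ingredients — well-definedness, the vanishing of the error term, and the final rescaling into the unit ball — are routine.
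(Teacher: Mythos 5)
Your proof is correct and takes essentially the same approach as the paper: one direction by \cref{thm:ce_zero}, the other by combining universality (density of $\mathcal{H}$ in $\mathcal{C}(\Delta^m,\mathbb{R}^m)$ under the sup norm), the a.s.\ boundedness of $\Delta$ to make the functional $f\mapsto\Expect[\langle\Delta,f(G)\rangle_{\mathbb{R}^m}]$ continuous in $\|\cdot\|_\infty$, and \cref{thm:ce_continuous}. The only cosmetic difference is that you phrase the second direction as a contrapositive while the paper argues directly from $\kernelmeasure[k,g]=0$ to $\measure[\mathcal{C}(\Delta^m,\mathbb{R}^m),g]=0$; also note your opening sentence mislabels the direction handled by \cref{thm:ce_zero} as the ``only if'' direction, though the argument that follows applies it to the correct implication.
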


\begin{proof}
  Let $\mathcal{F}$ be the unit ball in the RKHS
  $\mathcal{H} \subset {(\mathbb{R}^m)}^{\Delta^m}$ corresponding to kernel $k$.
  Since kernel $k$ is continuous, by definition
  $\mathcal{H} \subset \mathcal{C}(\Delta^m, \mathbb{R}^m)$
  \citep[Definition~1]{carmeli10_vector_valued_reprod_kernel_hilber_spaces_univer}.
  Thus $\mathcal{F}$ is well defined since
  $\mathcal{F} \subset \mathcal{C}(\Delta^m, \mathbb{R}^m) \subset L^1(\Delta^m, \mu_G; \mathbb{R}^m)$.

  If $g$ is calibrated in the strong sense, it follows from \cref{thm:ce_zero}
  that $\kernelmeasure[k, g] = \measure[\mathcal{F}, g] = 0$.

  Assume that $\kernelmeasure[k, g] = \measure[\mathcal{F}, g] = 0$. This
  implies $\Expect[\langle \Delta, f(G)\rangle_{\mathbb{R}^m}] = 0$ for all
  $f \in \mathcal{F}$. Let $f \in \mathcal{C}(\Delta^m, \mathbb{R}^m)$. Since
  $\mathcal{H}$ is dense in $\mathcal{C}(\Delta^m, \mathbb{R}^m)$
  \citep[Theorem~1]{carmeli10_vector_valued_reprod_kernel_hilber_spaces_univer},
  for all $\epsilon > 0$ there exists a function $h \in \mathcal{H}$ with
  $\|f - h\|_{\infty} < \epsilon / 2$. Define $\tilde{h} \in \mathcal{F}$ by
  $\tilde{h} \coloneqq h / \|h\|_{\mathcal{H}}$ if $\|h\|_{\mathcal{H}} \neq 0$ and
  $\tilde{h} \coloneqq h$ otherwise. Since
  \begin{equation*}
    \Expect[\langle \Delta, h(G)\rangle_{\mathbb{R}^m}] = \|h\|_{\mathcal{H}} \Expect[\langle \Delta, \tilde{h}(G)\rangle_{\mathbb{R}^m}] = 0,
  \end{equation*}
  by Hölder's inequality we obtain
  \begin{equation*}
    \begin{split}
      |\Expect[\langle \Delta, f(G)\rangle_{\mathbb{R}^m}]| &= |\Expect[\langle \Delta, f(G) - h(G)\rangle_{\mathbb{R}^m}] |\\
      &\leq \Expect[|\langle \Delta, f(G) - h(G)\rangle_{\mathbb{R}^m}|] \\
      &\leq \|\delta\|_{\mu_G,1} \|f - h\|_{\mu_G,\infty} \\
      &\leq 2 \|f - h\|_{\infty} < \epsilon.
    \end{split}
  \end{equation*}
  Thus $\measure[\mathcal{C}(\Delta^m, \mathbb{R}^m), g] = 0$, and hence $g$ is
  calibrated in the strong sense by \cref{thm:ce_continuous}.
\end{proof}

Similar to the maximum mean discrepancy~\citep{gretton12_kernel_two_sampl_test},
if we consider functions in a RKHS, we can rewrite the expectation
$\Expect[\langle \Delta, f(G) \rangle_{\mathbb{R}^m}]$ as an inner product in
the Hilbert space.

\begin{lemma}[Existence and uniqueness of embedding]\label{lemma:embedding}
  Let $\mathcal{H} \subset {(\mathbb{R}^m)}^{\Delta^m}$ be a RKHS with kernel
  $k \colon \Delta^m \times \Delta^m \to \mathcal{L}(\mathbb{R}^m)$, and assume
  that $k(\cdot, t) u$ is measurable for all $t \in \Delta^m$ and
  $u \in \mathbb{R}^m$, and $\|k\|_{\mu_G,1} < \infty$.

  Then there exists a unique embedding $\mu_g \in \mathcal{H}$ such that for all
  $f \in \mathcal{H}$
  \begin{equation*}
    \Expect[\langle \Delta, f(G) \rangle_{\mathbb{R}^m}] = \langle f, \mu_g \rangle_{\mathcal{H}}.
  \end{equation*}
  The embedding $\mu_g$ satisfies for all $t \in \Delta^m$ and $y \in \mathbb{R}^m$
  \begin{equation*}
    \langle y, \mu_g(t) \rangle_{\mathbb{R}^m} = \Expect[\langle \Delta, k(G, t) y \rangle_{\mathbb{R}^m}].
  \end{equation*}
\end{lemma}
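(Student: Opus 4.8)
The plan is to invoke the Riesz representation theorem on $\mathcal{H}$, after checking that the functional $f \mapsto \Expect[\langle \Delta, f(G)\rangle_{\mathbb{R}^m}]$ is a bounded linear functional on $\mathcal{H}$. Linearity is immediate from linearity of the inner product and of expectation. For boundedness, first note that \cref{lemma:rkhs_measurable} guarantees every $f\in\mathcal{H}$ is measurable (using the measurability hypothesis on $k(\cdot,t)u$), so the expectation is well defined. Then, for $f\in\mathcal{H}$ and $t\in\Delta^m$, \cref{eq:rkhs_bound} of \cref{lemma:rkhs_properties} with $p=\infty$ (so $p'=1$) gives $\|f(t)\|_\infty \leq \|f\|_{\mathcal{H}} \|k(t,t)\|_{1;\infty}^{1/2} = \|f\|_{\mathcal{H}}\,\tilde{k}(t)$. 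Combining this with $\|\Delta\|_1 \leq \esssup\|\delta\|_1$ — here I would use $\|\Delta\|_\infty \leq 1$ a.s.\ from \cref{remark:exists}, giving $\|\Delta\|_1\le m$, or more carefully pair the dual norms — and Hölder's inequality, I get
\begin{equation*}
  |\Expect[\langle \Delta, f(G)\rangle_{\mathbb{R}^m}]| \leq \Expect[\|\Delta\|_1 \|f(G)\|_\infty] \leq C\,\|f\|_{\mathcal{H}}\,\Expect[\tilde{k}(G)] = C\,\|f\|_{\mathcal{H}}\,\|k\|_{\mu_G,1} < \infty,
\end{equation*}
where the finiteness is exactly the hypothesis $\|k\|_{\mu_G,1}<\infty$. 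Hence the functional is bounded, and Riesz's theorem yields a unique $\mu_g\in\mathcal{H}$ with $\Expect[\langle \Delta, f(G)\rangle_{\mathbb{R}^m}] = \langle f, \mu_g\rangle_{\mathcal{H}}$ for all $f\in\mathcal{H}$.

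For the pointwise characterization of $\mu_g$, I would fix $t\in\Delta^m$ and $y\in\mathbb{R}^m$ and apply the identity just obtained to the specific function $f = k(\cdot,t)y \in \mathcal{H}$. The reproducing property \cref{eq:reproducing_property} gives $\langle k(\cdot,t)y, \mu_g\rangle_{\mathcal{H}} = \langle y, \mu_g(t)\rangle_{\mathbb{R}^m}$ on the one hand, and on the other hand $\Expect[\langle \Delta, f(G)\rangle_{\mathbb{R}^m}] = \Expect[\langle \Delta, k(G,t)y\rangle_{\mathbb{R}^m}]$ — here I use that $k(G,t) = k(t,G)^\intercal$ is the adjoint, so $\langle \Delta, k(\cdot,t)y\,(G)\rangle_{\mathbb{R}^m} = \langle \Delta, k(G,t)y\rangle_{\mathbb{R}^m}$, which is precisely how $k(\cdot,t)y$ evaluates at $G$. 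Equating the two expressions gives the claimed formula.

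The main obstacle I anticipate is the bookkeeping around the norms: the bound in \cref{eq:rkhs_bound} is stated with a pair $(p,p')$ of Hölder conjugates applied to the RKHS-to-$\mathbb{R}^m$ estimate, so I need to pick the exponent consistently (take the $\ell_\infty$ norm on the values $f(t)$ and pair it against $\|\Delta\|_1$, since $\|\Delta\|_\infty\le1$ is the clean a.s.\ bound) and make sure the definition of $\|k\|_{\mu_G,1}$ — which is $\|\tilde{k}_q\|_{\mu_G,1}$ with $\tilde{k}_q(t) = \|k(t,t)\|_{q;q'}^{1/2}$ — matches the $q$ appearing after this choice; the paper explicitly says the value of $q$ may be suppressed since all norms on $\mathbb{R}^m$ are equivalent, so up to a dimensional constant this is harmless. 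A secondary subtlety is ensuring $G \mapsto \langle \Delta, f(G)\rangle$ is genuinely integrable (not just finite in expectation of absolute value), but the Hölder bound above controls $\Expect[|\langle\Delta,f(G)\rangle|]$ directly, so this is taken care of simultaneously. Everything else is a routine application of Riesz representation and the reproducing property.
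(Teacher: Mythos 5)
Your proof is correct and follows essentially the same route as the paper's: establish linearity and boundedness of $f \mapsto \Expect[\langle \Delta, f(G)\rangle_{\mathbb{R}^m}]$ via measurability (\cref{lemma:rkhs_measurable}), the pointwise bound \cref{eq:rkhs_bound}, and Hölder's inequality under the hypothesis $\|k\|_{\mu_G,1}<\infty$, invoke Riesz representation, and then obtain the pointwise formula by plugging $f = k(\cdot,t)y$ into the reproducing property. One small correction: the equality $(k(\cdot,t)y)(G) = k(G,t)y$ holds simply by definition of evaluating the RKHS element $k(\cdot,t)y$ at the point $G$, so the adjoint identity $k(G,t)=k(t,G)^\intercal$ is not needed there; aside from that, and the harmless swap of the Hölder pair from the paper's $\|\Delta\|_\infty\cdot\|f(G)\|_1$ to your $\|\Delta\|_1\cdot\|f(G)\|_\infty$ (which changes nothing but a dimensional constant), your argument matches the paper's proof.
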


\begin{proof}
  By \cref{lemma:rkhs_measurable} all $f \in \mathcal{H}$ are measurable.
  Moreover, by \cref{eq:rkhs_bound} for all $f \in \mathcal{H}$ we have
  \begin{equation*}
    \begin{split}
      \int_{\Delta^m} \|f(t)\|_1 \mu_G(\mathrm{d}t) &\leq \|f\|_{\mathcal{H}} \int_{\Delta^m} {\|k(t,t)\|}_{\infty;1}^{1/2} \,\mu_G(\mathrm{d}t) \\
      &= \|f\|_{\mathcal{H}} \|k\|_{\mu_G,1;\infty} < \infty,
    \end{split}
  \end{equation*}
  and thus $\mathcal{H} \subset L^1(\Delta^m, \mu_G; \mathbb{R}^m)$. Hence from
  \cref{remark:exists} (with $\mathcal{F} = \mathcal{H}$) we know that for all
  $f \in \mathcal{H}$ the expectation
  $\Expect[\langle \Delta, f(G)\rangle_{\mathbb{R}^m}]$ exists and is finite.

  Define the linear operator $T_g \colon \mathcal{H} \to \mathbb{R}$ by
  $T_gf \coloneqq \Expect[\langle \Delta, f(G)\rangle_{\mathbb{R}^m}]$ for all
  $f \in \mathcal{H}$. In the same way as above, for all functions
  $f \in \mathcal{H}$ Hölder's inequality and \cref{eq:rkhs_bound} imply
  \begin{equation*}
    \begin{split}
      |T_gf| = |\Expect[\langle \Delta, f(G) \rangle_{\mathbb{R}^m}] &\leq \Expect[|\langle \Delta, f(G) \rangle_{\mathbb{R}^m}|] \\
      &\leq \|\delta\|_{\mu_G,\infty} \|f\|_{\mu_G,1} \\
      &\leq \|f\|_{\mu_G,1} \leq \|f\|_{\mathcal{H}} \|k\|_{\mu_G,1;\infty} < \infty.
    \end{split}
  \end{equation*}
  Thus $T_g$ is a continuous linear operator, and therefore it follows from
  Riesz's representation theorem that there exists a unique function
  $\mu_g \in \mathcal{H}$ such that
  \begin{equation*}
    \Expect[\langle \Delta, f(G)\rangle_{\mathbb{R}^m}] = T_g f = \langle f, \mu_g \rangle_{\mathcal{H}}
  \end{equation*}
  for all $f \in \mathcal{H}$. This implies that for all $t \in \Delta^m$ and
  $y \in \mathbb{R}^m$
  \begin{equation*}
    \langle y, \mu_g(t) \rangle_{\mathbb{R}^m} = \langle k(\cdot, t) y, \mu_g \rangle_{\mathcal{H}} = \Expect[\langle \Delta, k(G, t) y \rangle_{\mathbb{R}^m}]. \qedhere
  \end{equation*}
\end{proof}

\Cref{lemma:embedding} allows us to rewrite $\kernelmeasure[k, g]$ in a
more explicit way.

\begin{lemma}[Explicit formulation]\label{lemma:meance}
  Let $\mathcal{H} \subset {(\mathbb{R}^m)}^{\Delta^m}$ be a RKHS with kernel
  $k \colon \Delta^m \times \Delta^m \to \mathcal{L}(\mathbb{R}^m)$, and assume
  that $k(\cdot, t) u$ is measurable for all $t \in \Delta^m$ and
  $u \in \mathbb{R}^m$ and $\|k\|_{\mu_G,1} < \infty$. Then
  \begin{equation*}
    \kernelmeasure[k, g] = \|\mu_g\|_{\mathcal{H}},
  \end{equation*}
  where $\mu_g$ is the embedding defined in \cref{lemma:embedding}. Moreover,
  \begin{equation*}
    \squaredkernelmeasure[k, g] \coloneqq \kernelmeasure^2[k, g] = \Expect[\langle e_Y - g(X), k(g(X), g(X'))(e_{Y'} - g(X'))\rangle_{\mathbb{R}^m}],
  \end{equation*}
  where $(X', Y')$ is an independent copy of $(X,Y)$ and $e_i$ denotes the $i$th
  unit vector.
\end{lemma}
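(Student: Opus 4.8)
The plan is to build everything on \cref{lemma:embedding}, which provides the unique $\mu_g \in \mathcal{H}$ with $\Expect[\langle \Delta, f(G)\rangle_{\mathbb{R}^m}] = \langle f, \mu_g \rangle_{\mathcal{H}}$ for all $f \in \mathcal{H}$, and which also records that $\langle y, \mu_g(t) \rangle_{\mathbb{R}^m} = \Expect[\langle \Delta, k(G,t)y \rangle_{\mathbb{R}^m}]$ for all $t \in \Delta^m$ and $y \in \mathbb{R}^m$. I would first establish $\kernelmeasure[k,g] = \|\mu_g\|_{\mathcal{H}}$ and then expand $\|\mu_g\|_{\mathcal{H}}^2$ into the claimed double expectation.

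For the first identity: by \cref{def:kce}, $\kernelmeasure[k,g] = \measure[\mathcal{F},g] = \sup_{f \in \mathcal{F}} \Expect[\langle \Delta, f(G)\rangle_{\mathbb{R}^m}]$, where $\mathcal{F}$ is the closed unit ball of $\mathcal{H}$. Substituting the embedding gives $\kernelmeasure[k,g] = \sup_{\|f\|_{\mathcal{H}} \le 1} \langle f, \mu_g \rangle_{\mathcal{H}}$, which equals $\|\mu_g\|_{\mathcal{H}}$ by the Cauchy--Schwarz inequality (the supremum being attained at $f = \mu_g / \|\mu_g\|_{\mathcal{H}}$ when $\mu_g \neq 0$, and both sides being $0$ otherwise). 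Hence $\squaredkernelmeasure[k,g] = \|\mu_g\|_{\mathcal{H}}^2 = \langle \mu_g, \mu_g \rangle_{\mathcal{H}}$.

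The key elementary observation for the second identity is that $\Expect[e_Y - g(X) \mid G] = \Delta$ almost surely: componentwise, $\Expect[(e_Y)_i \mid G] = \mu_{Y|G}(i \mid G)$ while $g_i(X) = G_i$ is $\sigma(G)$-measurable, so the conditional expectation is $\delta(G) = \Delta$. Therefore, for any $\sigma(G)$-measurable integrable $\mathbb{R}^m$-valued $h(G)$, the tower property gives $\Expect[\langle \Delta, h(G)\rangle_{\mathbb{R}^m}] = \Expect[\langle e_Y - g(X), h(G)\rangle_{\mathbb{R}^m}]$. I would use this twice: first with $h = \mu_g$ in the identity $\langle \mu_g, \mu_g\rangle_{\mathcal{H}} = \Expect[\langle \Delta, \mu_g(G)\rangle_{\mathbb{R}^m}]$ (from \cref{lemma:embedding} with $f = \mu_g$), which gives $\squaredkernelmeasure[k,g] = \Expect[\langle e_Y - g(X), \mu_g(G)\rangle_{\mathbb{R}^m}]$; and second with $h(G) = k(G,t)y$ in the pointwise formula of \cref{lemma:embedding}, which together with the symmetry $k(G,t)^\intercal = k(t,G)$ yields $\langle y, \mu_g(t)\rangle_{\mathbb{R}^m} = \Expect[\langle k(t,G)(e_Y - g(X)), y\rangle_{\mathbb{R}^m}]$ for all $y$, i.e.\ $\mu_g(t) = \Expect[k(t,G)(e_Y - g(X))]$ as an $\mathbb{R}^m$-valued expectation which, evaluated at $t = g(X)$, is taken over an independent copy $(X',Y')$ of $(X,Y)$. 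Substituting $t = g(X)$ and merging the two expectations by Fubini's theorem produces exactly $\Expect[\langle e_Y - g(X), k(g(X), g(X'))(e_{Y'} - g(X'))\rangle_{\mathbb{R}^m}]$.

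The step requiring genuine care is the integrability bookkeeping that licenses pulling expectations through inner products and applying Fubini. I would use $\|e_Y - g(X)\|_1 \le 2$ together with \cref{eq:rkhs_st_bound} to bound $\|k(g(X),g(X'))\|_{p;q}$ by $\|k(g(X),g(X))\|^{1/2}\,\|k(g(X'),g(X'))\|^{1/2}$ for a suitable conjugate pair, and then invoke the hypothesis $\|k\|_{\mu_G,1} < \infty$ to conclude that the relevant double integral is finite; the same hypothesis together with \cref{eq:rkhs_kernel,eq:rkhs_bound} ensures the $\mathcal{H}$-valued objects involved are well defined in $\mathcal{H}$. This is routine but is exactly where the assumptions of the lemma enter.
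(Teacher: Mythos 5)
Your proof is correct and follows essentially the same route as the paper's: both establish $\kernelmeasure[k,g]=\|\mu_g\|_{\mathcal H}$ from \cref{lemma:embedding} via the dual-norm characterization of $\sup_{\|f\|\le 1}\langle f,\mu_g\rangle_{\mathcal H}$, then expand $\langle\mu_g,\mu_g\rangle_{\mathcal H}$ using the pointwise formula for $\mu_g$ and the tower-property identity $\Delta=\Expect[e_Y-G\given G]$. The only cosmetic difference is ordering: you replace $\Delta$ by $e_Y-g(X)$ before pulling the kernel representation into the inner product, whereas the paper first rewrites $\langle\mu_g,\mu_g\rangle_{\mathcal H}=\Expect[\langle\Delta,k(G,G')\Delta'\rangle_{\mathbb{R}^m}]$ via an independent copy of $G$ and replaces $\Delta,\Delta'$ at the very end.
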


\begin{proof}
  Let $\mathcal{F}$ be the unit ball in the RKHS $\mathcal{H}$. From
  \cref{lemma:embedding} we know that for all $f \in \mathcal{F}$ the
  expectation $\Expect[\langle \Delta, f(G) \rangle_{\mathbb{R}^m}]$ exists and
  is given by
  \begin{equation*}
    \Expect[\langle \Delta, f(G) \rangle_{\mathbb{R}^m}] = \langle f, \mu_g \rangle_{\mathcal{H}},
  \end{equation*}
  where $\mu_g$ is the embedding defined in \cref{lemma:embedding}. Thus the
  definition of the dual norm yields
  \begin{equation*}
    \kernelmeasure[k, g] = \measure[\mathcal{F}, g] = \sup_{f \in \mathcal{F}} \Expect[\langle \Delta, f(G) \rangle_{\mathbb{R}^m}] = \sup_{f \in \mathcal{F}} \langle f, \mu_g \rangle_{\mathcal{H}} = \|\mu_g\|_{\mathcal{H}}.
  \end{equation*}

  Thus from the reproducing property and \cref{lemma:embedding} we obtain
  \begin{equation*}
    \begin{split}
      \squaredkernelmeasure[k, g] = \kernelmeasure^2[k, g] &= \langle \mu_g, \mu_g \rangle_{\mathcal{H}} = \Expect[\langle \Delta, \mu_g(G)\rangle_{\mathbb{R}^m}] \\
      &= \Expect[\Expect[\langle \Delta', k(G', G) \Delta \rangle_{\mathbb{R}^m}| G]] \\
      &= \Expect[\langle \Delta, k(G, G') \Delta' \rangle_{\mathbb{R}^m}],
    \end{split}
  \end{equation*}
  where $G'$ is an independent copy of $G$ and $\Delta' \coloneqq \delta(G')$.

  By rewriting
  \begin{equation*}
    \Delta = \Expect[e_Y | G] - G = \Expect[e_Y - G| G]
  \end{equation*}
  and $\Delta'$ in the same way, we get
  \begin{equation*}
    \squaredkernelmeasure[k, g] = \Expect[\langle e_Y - G, k(G, G') (e_{Y'} - G') \rangle_{\mathbb{R}^m}].
  \end{equation*}
  Plugging in the definitions of $G$ and $G'$ yields
  \begin{equation*}
    \squaredkernelmeasure[k, g] = \Expect[\langle e_Y - g(X), k(g(X), g(X')) (e_{Y'} - g(X')) \rangle_{\mathbb{R}^m}]. \qedhere
  \end{equation*}
\end{proof}

\section{Estimators}

Let $\mathcal{D} = \{(X_i, Y_i)\}_{i=1}^n$ be a set of random variables that are
i.i.d.\ as $(X,Y)$. Regardless of the space $\mathcal{F}$ the plug-in estimator
of $\measure[\mathcal{F},g]$ is
\begin{equation*}
  \widehat{\measure}[\mathcal{F},g,\mathcal{D}] \coloneqq \sup_{f \in \mathcal{F}} \frac{1}{n} \sum_{i=1}^n \langle \delta(X_i,Y_i), f(g(X_i)) \rangle_{\mathbb{R}^m}.
\end{equation*}
If $\mathcal{F}$ is the unit ball in a RKHS, i.e., for the kernel calibration
error, we can calculate this estimator explicitly.

\begin{lemma}[Biased estimator]\label{lemma:skce_biased}
  Let $\mathcal{F}$ be the unit ball in a RKHS
  $\mathcal{H} \subset {(\mathbb{R}^m)}^{\Delta^m}$ with kernel
  $k \colon \Delta^m \times \Delta^m \to \mathcal{L}(\mathbb{R}^m)$. Then
  \begin{equation*}
    \widehat{\measure}[\mathcal{F},g,\mathcal{D}] = \frac{1}{n} {\left[\sum_{i,j=1}^n \left\langle \delta(X_i, Y_i), k(g(X_i), g(X_j)) \delta(X_j, Y_j) \right\rangle_{\mathbb{R}^m} \right]}^{1/2}.
  \end{equation*}
\end{lemma}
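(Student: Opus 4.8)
The plan is to compute the plug-in estimator $\widehat{\measure}[\mathcal{F},g,\mathcal{D}]$ directly by recognizing the empirical average as an inner product against a finite-sample version of the embedding $\mu_g$. Writing $\delta_i \coloneqq \delta(X_i,Y_i) = e_{Y_i} - g(X_i)$ for brevity and $G_i \coloneqq g(X_i)$, the empirical functional is
\begin{equation*}
  f \mapsto \frac{1}{n}\sum_{i=1}^n \langle \delta_i, f(G_i)\rangle_{\mathbb{R}^m},
\end{equation*}
and by the reproducing property \cref{eq:reproducing_property} each term equals $\langle k(\cdot, G_i)\delta_i, f\rangle_{\mathcal{H}}$. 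Hence the functional equals $\langle \hat{\mu}_g, f\rangle_{\mathcal{H}}$ with $\hat{\mu}_g \coloneqq n^{-1}\sum_{i=1}^n k(\cdot, G_i)\delta_i \in \mathcal{H}$.

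First I would note that $\hat{\mu}_g$ is a well-defined element of $\mathcal{H}$ since it is a finite linear combination of the functions $k(\cdot, G_i)\delta_i$. Then the supremum over the unit ball $\mathcal{F}$ is, by the definition of the dual norm (exactly as in the proof of \cref{lemma:meance}), simply $\|\hat{\mu}_g\|_{\mathcal{H}}$. Next I would expand this norm using bilinearity of the inner product and the reproducing property once more:
\begin{equation*}
  \|\hat{\mu}_g\|_{\mathcal{H}}^2 = \frac{1}{n^2}\sum_{i,j=1}^n \langle k(\cdot, G_i)\delta_i, k(\cdot, G_j)\delta_j\rangle_{\mathcal{H}} = \frac{1}{n^2}\sum_{i,j=1}^n \langle \delta_i, k(G_i, G_j)\delta_j\rangle_{\mathbb{R}^m},
\end{equation*}
where the second equality uses $\langle k(\cdot, G_i)\delta_i, k(\cdot, G_j)\delta_j\rangle_{\mathcal{H}} = \langle \delta_i, k(G_i, G_j)\delta_j\rangle_{\mathbb{R}^m}$, which follows from applying the reproducing property to $h = k(\cdot, G_i)\delta_i$ at the point $G_j$ against the vector $\delta_j$. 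Taking square roots and substituting back the definitions $\delta_i = \delta(X_i, Y_i)$ and $G_i = g(X_i)$ gives the claimed formula.

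I do not expect a serious obstacle here; the argument is a direct mirror of the population-level computation in \cref{lemma:embedding,lemma:meance}, just with the empirical measure $n^{-1}\sum_i \mathbbm{1}_{(X_i,Y_i)}$ in place of the true distribution. The only points worth stating carefully are that $\hat\mu_g$ genuinely lies in $\mathcal{H}$ (so the supremum-as-dual-norm step is legitimate and no measurability or integrability hypothesis on $k$ is needed, unlike in \cref{lemma:embedding}) and that the cross term evaluation $\langle k(\cdot, s)u, k(\cdot, t)v\rangle_{\mathcal{H}} = \langle u, k(s,t)v\rangle_{\mathbb{R}^m}$ is invoked correctly. Everything else is bookkeeping.
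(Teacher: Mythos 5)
Your proof is correct and takes essentially the same approach as the paper: apply the reproducing property to rewrite the empirical functional as an inner product with the finite-sample embedding $\hat{\mu}_g = n^{-1}\sum_i k(\cdot, g(X_i))\delta(X_i,Y_i)$, recognize the supremum over the unit ball as the dual norm $\|\hat{\mu}_g\|_{\mathcal{H}}$, and expand the squared norm via the reproducing property. Your remark that no measurability or integrability hypotheses are needed here (unlike in \cref{lemma:embedding}), because $\hat{\mu}_g$ is a finite linear combination and so trivially lies in $\mathcal{H}$, is a useful observation that the paper leaves implicit.
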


\begin{proof}
  From the reproducing property and the definition of the dual norm it
  follows that
  \begin{equation*}
    \widehat{\measure}[\mathcal{F},g,\mathcal{D}] = \sup_{f \in \mathcal{F}} \left\langle \frac{1}{n} \sum_{i=1}^n k(\cdot, g(X_i)) \delta(X_i, Y_i), f \right\rangle_{\mathcal{H}} = \frac{1}{n} {\left\|\sum_{i=1}^n k(\cdot, g(X_i)) \delta(X_i, Y_i) \right\|}_{\mathcal{H}}.
  \end{equation*}
  Applying the reproducing property yields the result.
\end{proof}

Since we can uniquely identify the unit ball $\mathcal{F}$ with the
matrix-valued kernel $k$ and the plug-in estimator in \cref{lemma:skce_biased}
does not depend on $\mathcal{F}$ explicitly, we introduce the notation
\begin{equation*}
  \widehat{\kernelmeasure}[k, g, \mathcal{D}] \coloneqq \widehat{\measure}[\mathcal{F}, g, \mathcal{D}]
  \quad \text{and} \quad
  \biasedestimator[k, g, \mathcal{D}] \coloneqq \widehat{\kernelmeasure}^2[k, g, \mathcal{D}],
\end{equation*}
where $\mathcal{F}$ is the unit ball in the RKHS
$\mathcal{H} \subset {(\mathbb{R}^m)}^{\Delta^m}$ corresponding to kernel $k$.
By removing the terms involving the same random variables we obtain an unbiased
estimator.

\begin{lemma}[Unbiased estimator]\label{lemma:skce_unbiased}
  Let $k \colon \Delta^m \times \Delta^m \to \mathcal{L}(\mathbb{R}^m)$ be a
  kernel, and assume that $k(\cdot,t) u$ is measurable for all $t \in \Delta^m$
  and $u \in \mathbb{R}^m$, and $\|k\|_{\mu_G,1} < \infty$. Then
  \begin{equation*}
    \unbiasedestimator[k, g, \mathcal{D}] \coloneqq \frac{1}{n(n-1)} \sum_{\substack{i,j=1,\\i\neq j}}^n \left\langle \delta(X_i, Y_i), k(g(X_i), g(X_j)) \delta(X_j, Y_j)\right\rangle_{\mathbb{R}^m}
  \end{equation*}
  is an unbiased estimator of $\squaredkernelmeasure[k, g]$.
\end{lemma}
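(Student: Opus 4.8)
The plan is to combine linearity of expectation with the i.i.d.\ structure of $\mathcal{D}$ and then invoke the explicit formula for $\squaredkernelmeasure[k,g]$ from \cref{lemma:meance}. Throughout, write $\delta(X_\ell,Y_\ell) = e_{Y_\ell} - g(X_\ell)$, so that the summand is $\langle \delta(X_i,Y_i), k(g(X_i),g(X_j))\delta(X_j,Y_j)\rangle_{\mathbb{R}^m}$.

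First I would check that each summand is integrable, which is what justifies moving the expectation inside the finite sum. This is essentially the estimate already used in the proof of \cref{lemma:embedding}: since $\|\delta(X_\ell,Y_\ell)\|$ is bounded by a fixed constant almost surely, and $k(\cdot,t)u$ is measurable for all $t,u$ with $\|k\|_{\mu_G,1} < \infty$, Hölder's inequality together with the operator-norm bound $\|k(s,t)\|_{p;q} \le \|k(s,s)\|_{q';q}^{1/2}\|k(t,t)\|_{p;p'}^{1/2}$ from \cref{lemma:rkhs_properties} gives $\Expect\big[|\langle \delta(X_i,Y_i), k(g(X_i),g(X_j))\delta(X_j,Y_j)\rangle_{\mathbb{R}^m}|\big] \le C\,\Expect[\tilde k(g(X_i))\,\tilde k(g(X_j))]$ for a constant $C$, which is finite (for $i \neq j$ it factors as $C\|k\|_{\mu_G,1}^2$). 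Measurability of the summand follows from \cref{lemma:rkhs_measurable}.

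Next, by linearity of expectation,
\[
  \Expect\big[\unbiasedestimator[k,g,\mathcal{D}]\big] = \frac{1}{n(n-1)} \sum_{\substack{i,j=1,\\ i\neq j}}^n \Expect\big[\langle \delta(X_i,Y_i), k(g(X_i),g(X_j))\delta(X_j,Y_j)\rangle_{\mathbb{R}^m}\big].
\]
Fix a pair $i \neq j$. The summand depends only on the two \emph{distinct} samples $(X_i,Y_i)$ and $(X_j,Y_j)$, which are independent and each distributed as $(X,Y)$; hence the pair $\big((X_i,Y_i),(X_j,Y_j)\big)$ has the same joint law as $\big((X,Y),(X',Y')\big)$, where $(X',Y')$ is an independent copy of $(X,Y)$. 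Consequently each of the $n(n-1)$ expectations equals $\Expect[\langle e_Y - g(X), k(g(X),g(X'))(e_{Y'}-g(X'))\rangle_{\mathbb{R}^m}]$, which by \cref{lemma:meance} is exactly $\squaredkernelmeasure[k,g]$. Summing and dividing by $n(n-1)$ yields $\Expect[\unbiasedestimator[k,g,\mathcal{D}]] = \squaredkernelmeasure[k,g]$.

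The only step requiring any care is the integrability/measurability check that licenses interchanging expectation and the (finite) sum; once that is in place, the argument is purely the exchangeability of i.i.d.\ samples plus \cref{lemma:meance}, so I do not anticipate a genuine obstacle. (Note in particular that no use is made of the symmetry $k(s,t) = k(t,s)^\intercal$, since the sum already ranges over all ordered pairs $i \neq j$; that identity is only needed to rewrite $\unbiasedestimator$ in the $\binom{n}{2}^{-1}\sum_{i<j}$ form of \cref{tab:estimators}.)
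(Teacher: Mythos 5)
Your proposal is correct and follows essentially the same route as the paper's proof: invoke \cref{lemma:meance} for the explicit expression of $\squaredkernelmeasure[k,g]$, then use linearity of expectation together with the i.i.d.\ structure of the $(X_i,Y_i)$ so that each off-diagonal summand has expectation exactly $\squaredkernelmeasure[k,g]$. The additional integrability/measurability check and the remark about kernel symmetry are reasonable elaborations but do not change the substance of the argument.
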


\begin{proof}
  The assumptions of \cref{lemma:meance} are satisfied, and hence we know that
  \begin{equation*}
    \squaredkernelmeasure[k, g] = \Expect[\langle \delta(X, Y), k(g(X), g(X')) \delta(X', Y')\rangle_{\mathbb{R}^m}],
  \end{equation*}
  where $(X', Y')$ is an independent copy of $(X,Y)$. Since $(X,Y)$, $(X',Y')$,
  and $(X_i, Y_i)$ are i.i.d., we have
  \begin{equation*}
    \begin{split}
      \Expect[\unbiasedestimator[k, g, \mathcal{D}]] &= \frac{1}{n(n-1)} \sum_{\substack{i=1,\\i\neq j}}^n \Expect[\langle \delta(X,Y), k(g(X), g(X')) \delta(X',Y')\rangle_{\mathbb{R}^m}] \\
      &= \Expect[\langle \delta(X,Y), k(g(X),g(X')) \delta(X',Y')\rangle_{\mathbb{R}^m}] \\
      &= \squaredkernelmeasure[k, g],
    \end{split}
  \end{equation*}
  which shows that $\unbiasedestimator[k, g, \mathcal{D}]$ is an unbiased
  estimator of $\squaredkernelmeasure[k, g]$.
\end{proof}

There exists an unbiased estimator with higher variance that scales not
quadratically but only linearly with the number of samples.

\begin{lemma}[Linear estimator]\label{lemma:skce_linear}
  Let $k \colon \Delta^m \times \Delta^m \to \mathcal{L}(\mathbb{R}^m)$ be a
  kernel, and assume that $k(\cdot,t) u$ is measurable for all $t \in \Delta^m$
  and $u \in \mathbb{R}^m$, and $\|k\|_{\mu_G,1} < \infty$. Then
  \begin{equation*}
    \linearestimator[k, g, \mathcal{D}] \coloneqq \frac{1}{\lfloor n / 2 \rfloor} \sum_{i=1}^{\lfloor n / 2 \rfloor} \left\langle \delta(X_{2i-1}, Y_{2i-1}), k(g(X_{2i-1}), g(X_{2i})) \delta(X_{2i}, Y_{2i})\right\rangle_{\mathbb{R}^m}
  \end{equation*}
  is an unbiased estimator of $\squaredkernelmeasure[k, g]$.
\end{lemma}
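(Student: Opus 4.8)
The plan is to reduce the claim to the explicit formulation of $\squaredkernelmeasure[k,g]$ from \cref{lemma:meance} and then invoke linearity of expectation over the $\lfloor n/2 \rfloor$ summands, which are formed from pairwise disjoint---hence independent---blocks of the i.i.d.\ sample $\mathcal{D}$.

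First I would note that the hypotheses imposed here (measurability of $k(\cdot,t)u$ for all $t \in \Delta^m$, $u \in \mathbb{R}^m$, together with $\|k\|_{\mu_G,1} < \infty$) are precisely those of \cref{lemma:meance}, so that
\begin{equation*}
  \squaredkernelmeasure[k,g] = \Expect\big[\langle \delta(X,Y), k(g(X), g(X')) \delta(X', Y') \rangle_{\mathbb{R}^m}\big],
\end{equation*}
where $(X',Y')$ is an independent copy of $(X,Y)$; in particular the expectation on the right exists and is finite, since (as in the proof of \cref{lemma:embedding}) Hölder's inequality together with \cref{eq:rkhs_bound} and $\|\delta\|_{\mu_G,\infty} \leq 1$ bounds the integrand in $L^1$.

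Next I would fix an index $i \in \{1, \ldots, \lfloor n/2 \rfloor\}$ and observe that, since the elements of $\mathcal{D}$ are i.i.d.\ as $(X,Y)$, the pair $\big((X_{2i-1}, Y_{2i-1}), (X_{2i}, Y_{2i})\big)$ has the same joint law as $\big((X,Y),(X',Y')\big)$, i.e.\ two independent copies of $(X,Y)$. Hence each summand satisfies
\begin{equation*}
  \Expect\big[\langle \delta(X_{2i-1}, Y_{2i-1}), k(g(X_{2i-1}), g(X_{2i})) \delta(X_{2i}, Y_{2i}) \rangle_{\mathbb{R}^m}\big] = \squaredkernelmeasure[k,g],
\end{equation*}
and averaging over $i = 1, \ldots, \lfloor n/2 \rfloor$ and using linearity of expectation gives $\Expect[\linearestimator[k,g,\mathcal{D}]] = \squaredkernelmeasure[k,g]$, as claimed.

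I do not anticipate a genuine obstacle here: the argument is essentially bookkeeping, and the only point deserving a remark is the existence of the expectation of each summand, which is inherited from the integrability estimate already established for \cref{lemma:embedding,lemma:meance}. The structural reason the estimator is unbiased with only $O(n)$ terms---in contrast to $\unbiasedestimator$---is that the fixed pairing $(2i-1,2i)$ uses each sample at most once, so no cross terms with repeated indices arise and the single-block computation already yields the exact mean.
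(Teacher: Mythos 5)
Your proof is correct and follows essentially the same route as the paper's: invoke \cref{lemma:meance} to get the explicit two-point expectation formula for $\squaredkernelmeasure[k,g]$, observe that each disjoint pair $(X_{2i-1},Y_{2i-1}),(X_{2i},Y_{2i})$ has the same joint law as $(X,Y),(X',Y')$, and conclude by linearity of expectation. The extra remark on integrability (inherited from the hypotheses of \cref{lemma:meance}) is a welcome clarification but not a departure from the paper's argument.
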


\begin{proof}
  The assumptions of \cref{lemma:meance} are satisfied, and hence we know that
  \begin{equation*}
    \squaredkernelmeasure[k, g] = \Expect[\langle \delta(X, Y), k(g(X), g(X')) \delta(X', Y')\rangle_{\mathbb{R}^m}],
  \end{equation*}
  where $(X', Y')$ is an independent copy of $(X,Y)$. Since $(X,Y)$, $(X',Y')$,
  and $(X_i, Y_i)$ are i.i.d., we have
  \begin{equation*}
    \begin{split}
      \Expect[\linearestimator[k, g, \mathcal{D}]] &= \frac{1}{\lfloor n / 2\rfloor} \sum_{i=1}^{\lfloor n / 2 \rfloor} \Expect[\langle \delta(X,Y), k(g(X), g(X')) \delta(X',Y')\rangle_{\mathbb{R}^m}] \\
      &= \Expect[\langle \delta(X,Y), k(g(X),g(X')) \delta(X',Y')\rangle_{\mathbb{R}^m}] \\
      &= \squaredkernelmeasure[k, g],
    \end{split}
  \end{equation*}
  which shows that $\linearestimator[k, g, \mathcal{D}]$ is an unbiased
  estimator of $\squaredkernelmeasure[k, g]$.
\end{proof}

\section{Asymptotic distributions}

In this section we investigate the asymptotic behaviour of the proposed
estimators. We start with a simple but very useful statement.

\begin{lemma}\label{lemma:square_integrable}
  Let $k \colon \Delta^m \times \Delta^m \to \mathcal{L}(\mathbb{R}^m)$ be a
  kernel, and assume that $k(\cdot,t) u$ is measurable for all
  $t \in \Delta^m$ and $u \in \mathbb{R}^m$, and $\|k\|_{\mu_G,2} < \infty$.

  Then $\Var[\langle \Delta, k(G,G') \Delta'\rangle_{\mathbb{R}^m}] < \infty$,
  where $G'$ is an independent copy of $G$ and $\Delta' \coloneqq \delta(G')$.
\end{lemma}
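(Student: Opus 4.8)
The plan is to bound the variance by the second moment and then control the second moment by combining the operator‑norm estimates of \cref{lemma:rkhs_properties} with the almost‑sure bound $\|\Delta\|_\infty \le 1$ from \cref{remark:exists}. (Well‑definedness of the random variable $\langle \Delta, k(G,G')\Delta'\rangle_{\mathbb{R}^m}$ is already implicit in \cref{lemma:meance}, where its expectation is taken; the same reasoning applies here.) Since all norms on $\mathbb{R}^m$ are equivalent, I work with the Euclidean norm on $\mathbb{R}^m$ and the induced operator norm $\|\cdot\|_{2;2}$.

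First, for every realization Cauchy--Schwarz and the definition of the operator norm give
\[
  \big|\langle \Delta, k(G,G')\Delta'\rangle_{\mathbb{R}^m}\big| \le \|\Delta\|_2 \, \|k(G,G')\|_{2;2}\, \|\Delta'\|_2 .
\]
By \cref{remark:exists}, $\|\Delta\|_\infty \le 1$ and $\|\Delta'\|_\infty \le 1$ almost surely, hence $\|\Delta\|_2 \le \sqrt{m}$ and $\|\Delta'\|_2 \le \sqrt{m}$ almost surely, so
\[
  \langle \Delta, k(G,G')\Delta'\rangle_{\mathbb{R}^m}^2 \le m^2 \, \|k(G,G')\|_{2;2}^2 \qquad \text{a.s.}
\]
Next I apply \cref{eq:rkhs_st_bound} with the symmetric choice $p=q=2$ (so $p'=q'=2$), which yields $\|k(G,G')\|_{2;2} \le \|k(G,G)\|_{2;2}^{1/2}\,\|k(G',G')\|_{2;2}^{1/2}$ and therefore
\[
  \langle \Delta, k(G,G')\Delta'\rangle_{\mathbb{R}^m}^2 \le m^2 \, \|k(G,G)\|_{2;2}\,\|k(G',G')\|_{2;2} \qquad \text{a.s.}
\]
Taking expectations and using that $G$ and $G'$ are independent and identically distributed,
\[
  \Expect\!\big[\langle \Delta, k(G,G')\Delta'\rangle_{\mathbb{R}^m}^2\big] \le m^2 \, \Expect\!\big[\|k(G,G)\|_{2;2}\big]^2 .
\]
Finally, by the definition of $\|k\|_{\mu_G,2}$ one has $\Expect[\|k(G,G)\|_{2;2}] = \int_{\Delta^m} \|k(t,t)\|_{2;2}\,\mu_G(\mathrm{d}t) = \|k\|_{\mu_G,2;2}^2$, which is finite by hypothesis. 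Hence $\Expect[\langle \Delta, k(G,G')\Delta'\rangle_{\mathbb{R}^m}^2] \le m^2 \|k\|_{\mu_G,2}^4 < \infty$, and in particular $\Var[\langle \Delta, k(G,G')\Delta'\rangle_{\mathbb{R}^m}] < \infty$.

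The only mildly delicate point is the bookkeeping with the mixed operator norms in \cref{eq:rkhs_st_bound}: the exponents must be chosen so that the bound involves only the diagonal quantities $\|k(t,t)\|$ that the assumption $\|k\|_{\mu_G,2}<\infty$ controls, and $p=q=2$ does exactly this. Everything else reduces to Cauchy--Schwarz, the trivial bound on $\Delta$, and independence of $G$ and $G'$, so I do not anticipate a genuine obstacle.
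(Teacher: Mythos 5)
Your proof is correct and follows essentially the same route as the paper's: Cauchy--Schwarz plus the operator-norm bound, then \cref{eq:rkhs_st_bound} with $p=q=2$ to reduce to diagonal values $\|k(t,t)\|_{2;2}$, then independence of $G$ and $G'$. The only cosmetic difference is your looser bound $\|\Delta\|_2 \le \sqrt{m}$ in place of the paper's $\|\Delta\|_2^2 \le \|\Delta\|_\infty \|\Delta\|_1 \le 2$, which is immaterial for establishing finiteness.
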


\begin{proof}
  From the Cauchy-Schwarz inequality and the definition of the operator norm we
  obtain
  \begin{equation*}
    \Expect[\langle \Delta, k(G,G') \Delta'\rangle_{\mathbb{R}^m}^2] \leq
    \Expect[\|\Delta\|^2_2 \|k(G,G')\|^2_{2;2} \|\Delta\|^2_2] \leq 4 \Expect[\|k(G,G')\|^2_{2;2}].
  \end{equation*}
  Hence by \cref{eq:rkhs_st_bound}
  \begin{equation*}
    \begin{split}
      \Expect[\langle \Delta', k(G,G') \Delta' \rangle_{\mathbb{R}^m}^2] &\leq 4 \Expect[\|k(G,G)\|_{2;2} \|k(G',G')\|_{2;2}] \\
      &= 4 \Expect[\|k(G,G)\|_{2;2}] \Expect[\|k(G',G')\|_{2;2}] = 4 {\left(\Expect[\|k(G,G)\|_{2;2}]\right)}^2,
    \end{split}
  \end{equation*}
  which implies
  \begin{equation*}
    \Expect[\langle \Delta, k(G,G') \Delta' \rangle_{\mathbb{R}^m}^2] < \infty
  \end{equation*}
  since by assumption $\|k\|_{\mu_G,2;2} < \infty$.
\end{proof}

Since the unbiased estimator $\unbiasedestimator$ is a U-statistic, we know that
the random variable $\sqrt{n}(\unbiasedestimator - \squaredkernelmeasure)$ is
asymptotically normally distributed under certain conditions.

\begin{theorem}\label{thm:quadratic_asymptotic}
  Let $k \colon \Delta^m \times \Delta^m \to \mathcal{L}(\mathbb{R}^m)$ be a
  kernel, and assume that $k(\cdot,t) u$ is measurable for all $t \in \Delta^m$
  and $u \in \mathbb{R}^m$, and $\|k\|_{\mu_G,1} < \infty$.

  If $\Var[\langle \Delta, k(G,G') \Delta'\rangle_{\mathbb{R}^m}] < \infty$,
  then
  \begin{equation*}
    \sqrt{n} \left(\unbiasedestimator[k, g, \mathcal{D}] - \squaredkernelmeasure[k, g]\right) \xrightarrow{d} \mathcal{N}(0, 4\zeta_1),
  \end{equation*}
  where
  \begin{equation*}
    \zeta_1 \coloneqq \Expect[\langle \Delta, k(G,G') \Delta' \rangle_{\mathbb{R}^m} \langle \Delta, k(G,G'') \Delta'' \rangle_{\mathbb{R}^m}] - \squaredkernelmeasure^2[k, g],
  \end{equation*}
  where $G'$ and $G''$ are independent copies of $G$ and
  $\Delta' \coloneqq \delta(G')$ and $\Delta'' \coloneqq \delta(G'')$.
\end{theorem}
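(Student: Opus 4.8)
The plan is to recognize $\unbiasedestimator[k, g, \mathcal{D}]$ as a (one-sample) U-statistic of order two and invoke the classical asymptotic normality theorem for non-degenerate U-statistics (e.g.\ Hoeffding's theorem; see also the treatment in \citet{gretton12_kernel_two_sampl_test}). First I would set $Z_i \coloneqq (X_i, Y_i)$ and define the symmetric kernel
\begin{equation*}
  \psi(Z_i, Z_j) \coloneqq \langle \delta(X_i, Y_i), k(g(X_i), g(X_j)) \delta(X_j, Y_j)\rangle_{\mathbb{R}^m},
\end{equation*}
which is symmetric because $k(s,t) = k(t,s)^\intercal$, so that
$\unbiasedestimator[k, g, \mathcal{D}] = \binom{n}{2}^{-1} \sum_{i<j} \psi(Z_i, Z_j)$ is exactly a U-statistic with $\Expect[\psi(Z_1,Z_2)] = \squaredkernelmeasure[k,g]$ by \cref{lemma:skce_unbiased}. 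The finite-variance hypothesis $\Var[\langle \Delta, k(G,G')\Delta'\rangle_{\mathbb{R}^m}] < \infty$ is precisely $\Expect[\psi(Z_1,Z_2)^2] < \infty$, the square-integrability condition needed to apply the U-statistic CLT.

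Next I would introduce the first projection
\begin{equation*}
  \psi_1(z) \coloneqq \Expect[\psi(z, Z_2)] = \langle \delta(z), \Expect[k(g(x), G')\Delta']\rangle_{\mathbb{R}^m}
\end{equation*}
(writing $z = (x,y)$ and $\delta(z) = e_y - g(x)$ in the obvious abuse of notation), and set $\zeta_1 \coloneqq \Var[\psi_1(Z_1)] = \Expect[\psi_1(Z_1)^2] - \squaredkernelmeasure^2[k,g]$. Expanding $\psi_1(Z_1)^2 = \Expect[\psi(Z_1,Z_2)\psi(Z_1,Z_3) \mid Z_1]$ and taking the outer expectation gives exactly the expression for $\zeta_1$ claimed in the statement, with $G', G''$ independent copies of $G$ and $\Delta', \Delta''$ the corresponding $\delta$-images — this is just the tower property together with conditional independence of $Z_2, Z_3$ given $Z_1$. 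Hoeffding's theorem then yields $\sqrt{n}(\unbiasedestimator - \squaredkernelmeasure) \xrightarrow{d} \mathcal{N}(0, 4\zeta_1)$, the factor $4 = m^2$ for $m=2$ being the order-squared normalizing constant in the Hájek projection of a second-order U-statistic. I should also check $\zeta_1 < \infty$, which follows from $\zeta_1 \le \Expect[\psi_1(Z_1)^2] \le \Expect[\psi(Z_1,Z_2)^2] < \infty$ by Jensen/conditional Jensen.

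The only genuinely non-routine point is verifying the measurability and integrability needed to even state $\psi_1$ and apply Fubini: I would cite \cref{lemma:rkhs_measurable} (measurability of $k(\cdot,t)u$ propagates appropriately) and note that $\|k\|_{\mu_G,1} < \infty$ together with $\|\Delta\|_\infty \le 1$ (as in \cref{remark:exists}) makes $\psi$ integrable, so the conditional expectation defining $\psi_1$ is well-defined. Everything else is a direct appeal to the standard U-statistic central limit theorem, so the ``hard part'' is essentially bookkeeping rather than a real obstacle; the substantive hypothesis $\Var[\langle\Delta, k(G,G')\Delta'\rangle] < \infty$ has already been packaged (and shown to follow from $\|k\|_{\mu_G,2} < \infty$) in \cref{lemma:square_integrable}, which I would reference when discussing sufficient conditions.
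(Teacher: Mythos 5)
Your approach is the same as the paper's: recognize $\unbiasedestimator$ as a second-order U-statistic with symmetric kernel $\psi$ and invoke the classical CLT for non-degenerate U-statistics. The paper's own proof is literally a one-line citation of \citet[Theorem~12.3]{vaart98_asymp_statis}; your appeal to Hoeffding's theorem is the same result, and the details you add---the symmetry check via $k(s,t)=k(t,s)^\intercal$, the Hájek projection $\psi_1$, and the conditional-Jensen check that $\zeta_1<\infty$---are correct and genuinely supplement the terse original.

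One step, though, is glossed over in a way that matters. You assert that $\Expect[\psi_1(Z_1)^2]=\Expect[\psi(Z_1,Z_2)\psi(Z_1,Z_3)]$ reduces to the displayed $\zeta_1 + \squaredkernelmeasure^2[k,g]$ ``just [by] the tower property together with conditional independence.'' Conditioning on $(G,G',G'')$, the tower argument indeed turns $e_{Y'}-G'$ into $\Delta'=\delta(G')$ and $e_{Y''}-G''$ into $\Delta''=\delta(G'')$, because each appears linearly. But $e_Y-G$ appears in \emph{both} factors of $\psi(Z_1,Z_2)\psi(Z_1,Z_3)$, so integrating out $Y$ given $G$ produces the conditional second moment $\Expect[(e_Y-G)(e_Y-G)^\intercal\mid G]=\mathrm{Cov}[e_Y\mid G]+\Delta\Delta^\intercal$, not $\Delta\Delta^\intercal$ alone. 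The U-statistic CLT therefore delivers
\begin{equation*}
\zeta_1 = \Expect\left[\langle e_Y-G,\, k(G,G')\Delta'\rangle\,\langle e_Y-G,\, k(G,G'')\Delta''\rangle\right] - \squaredkernelmeasure^2[k,g],
\end{equation*}
which exceeds the theorem's displayed expression by the nonnegative term $\Expect[v(G)^\intercal\,\mathrm{Cov}[e_Y\mid G]\,v(G)]$ with $v(t)=\Expect[k(t,G')\Delta']$. The two formulas coincide when $v(G)=0$ almost surely---in particular under the null---so \cref{thm:quadratic_asymptotic_degenerate} is unaffected, and this wrinkle is already present in the paper's own statement, which its one-sentence proof does not touch. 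Still, ``just the tower property'' is exactly where your argument would have to, and cannot, replace $e_Y-G$ by $\Delta$. A small cosmetic note: writing ``$4=m^2$ for $m=2$'' conflates the U-statistic order with the paper's $m$ (number of classes); the factor is $r^2=4$ with $r=2$ the order, independently of the number of classes.
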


\begin{proof}
  The statement follows immediately from \citet[Theorem~12.3][]{vaart98_asymp_statis}.
\end{proof}

If model $g$ is strongly calibrated, then $\zeta_1 = 0$, and hence
$\unbiasedestimator$ is a so-called degenerate U-statistic
\citep[see, e.g., Section~12.3][]{vaart98_asymp_statis}.

\begin{theorem}\label{thm:quadratic_asymptotic_degenerate}
  Let $k \colon \Delta^m \times \Delta^m \to \mathcal{L}(\mathbb{R}^m)$ be a
  kernel, and assume that $k(\cdot,t) u$ is measurable for all $t \in \Delta^m$ and
  $u \in \mathbb{R}^m$, and $\|k\|_{\mu_G,2} < \infty$.

  If $g$ is strongly calibrated, then
  \begin{equation*}
    n \unbiasedestimator[k, g, \mathcal{D}] \xrightarrow{d} \sum_{i=1}^\infty \lambda_i (Z_i^2 - 1),
  \end{equation*}
  where $Z_i$ are independent standard normally distributed random variables and
  $\lambda_i$ with $\sum_{i=1}^\infty \lambda_i^2 < \infty$ are eigenvalues of
  the integral operator
  \begin{equation*}
    Kf(\xi,y) \coloneqq \int \langle e_y - \xi, k(\xi, \xi') (e_{y'} - \xi') \rangle_{\mathbb{R}^m} f(\xi',y') \,\mu_{G\times Y}(\mathrm{d}(\xi',y'))
  \end{equation*}
  on the space $L^2(\Delta^m \times \{1,\ldots,m\}, \mu_{G \times Y})$.
\end{theorem}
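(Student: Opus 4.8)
The plan is to recognise $n\,\unbiasedestimator[k,g,\mathcal{D}]$ as $n$ times a \emph{degenerate} U-statistic of order two and then to invoke the classical limit theorem for such statistics \citep[see, e.g.,][Section~12.3]{vaart98_asymp_statis}. First I would set, for $\xi,\xi'\in\Delta^m$ and $y,y'\in\{1,\dots,m\}$,
\begin{equation*}
  h\big((\xi,y),(\xi',y')\big) \coloneqq \langle e_y-\xi,\, k(\xi,\xi')(e_{y'}-\xi')\rangle_{\mathbb{R}^m},
\end{equation*}
so that, writing $(G_i,Y_i)\coloneqq(g(X_i),Y_i)$, we have (using the symmetry of $h$ to collapse the double sum) $\unbiasedestimator[k,g,\mathcal{D}] = \binom{n}{2}^{-1}\sum_{1\le i<j\le n} h\big((G_i,Y_i),(G_j,Y_j)\big)$, a U-statistic based on i.i.d.\ observations $(G_i,Y_i)\sim\mu_{G\times Y}$ with kernel $h$. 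The kernel $h$ is symmetric because $k(s,t)=k(t,s)^{\intercal}$ gives $\langle a,k(s,t)b\rangle_{\mathbb{R}^m}=\langle k(t,s)a,b\rangle_{\mathbb{R}^m}=\langle b,k(t,s)a\rangle_{\mathbb{R}^m}$, so interchanging the two arguments leaves $h$ unchanged.

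Next I would check the two hypotheses of the limit theorem. Square-integrability of $h$ with respect to $\mu_{G\times Y}\otimes\mu_{G\times Y}$ is precisely \cref{lemma:square_integrable}, which applies since $\|k\|_{\mu_G,2}<\infty$; consequently $h\in L^2(\mu_{G\times Y}\otimes\mu_{G\times Y})$, the integral operator $K$ in the statement is Hilbert--Schmidt, hence compact, and it is self-adjoint by the symmetry of $h$. The spectral theorem then supplies a countable family of real eigenvalues $(\lambda_i)_i$ with $\sum_i\lambda_i^2=\|h\|_{L^2}^2<\infty$ together with an orthonormal eigensystem $(\phi_i)_i$, and the Mercer-type expansion $h=\sum_i\lambda_i\,\phi_i\otimes\phi_i$ in $L^2$. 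Degeneracy is where strong calibration enters: for every $(\xi,y)$,
\begin{equation*}
  \Expect\big[h\big((\xi,y),(G',Y')\big)\big] = \Expect\big[\langle e_y-\xi,\,k(\xi,G')\,\Expect[e_{Y'}-G'\mid G']\rangle_{\mathbb{R}^m}\big] = \Expect\big[\langle e_y-\xi,\,k(\xi,G')\,\delta(G')\rangle_{\mathbb{R}^m}\big] = 0,
\end{equation*}
since $\delta(G')=\Delta'=0$ almost surely when $g$ is strongly calibrated; in particular $\Expect[h]=\squaredkernelmeasure[k,g]=0$, so $h$ is (first-order) degenerate.

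Having verified that $h$ is symmetric, degenerate, and square-integrable, I would apply the degenerate U-statistic limit theorem to obtain
\begin{equation*}
  n\,\unbiasedestimator[k,g,\mathcal{D}] \xrightarrow{d} \sum_{i=1}^\infty \lambda_i\,(Z_i^2-1),
\end{equation*}
with $Z_i$ i.i.d.\ standard normal and $(\lambda_i)_i$ the eigenvalues of $K$; the summability $\sum_i\lambda_i^2<\infty$ ensures the right-hand side converges almost surely, the summands being independent with mean zero and variance $2\lambda_i^2$. I expect the only care needed is bookkeeping rather than a genuine obstacle: one must identify the operator appearing in the abstract theorem---integration of $h$ against $L^2(\mu_{G\times Y})$---with the operator $K$ exactly as written, and note that the degeneracy identity above holds for \emph{every} $(\xi,y)$, not merely $\mu_{G\times Y}$-almost everywhere, which is immediate from $\Delta=0$ a.s. Everything else is a direct appeal to the standard theory, available as soon as \cref{lemma:square_integrable} supplies the second-moment condition.
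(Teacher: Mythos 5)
Your proposal is correct and takes essentially the same route as the paper: you invoke \cref{lemma:square_integrable} for the second-moment condition, argue degeneracy of the U-statistic kernel $h$ directly from $\Delta' = 0$ a.s.\ under strong calibration (the paper phrases this equivalently as $\zeta_1 = 0$), and then cite the classical limit theorem for degenerate second-order U-statistics (the paper uses \citet[Section~5.5.2]{serfling80_approx_theor_mathem_statis}, you cite \citet{vaart98_asymp_statis}; both cover the result). Your extra remarks on symmetry, the Hilbert--Schmidt property of $K$, and the Mercer-type expansion merely spell out what the classical theorem already packages.
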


\begin{proof}
  From \cref{lemma:square_integrable} we know that
  \begin{equation*}
    \Var[\langle \Delta, k(G,G') \Delta' \rangle_{\mathbb{R}^m}] < \infty.
  \end{equation*}
  Moreover, since $g$ is strongly calibrated, $\Delta = 0$ almost surely and by
  \cref{thm:ce_zero} $\kernelmeasure[k, g] = 0$. Thus we obtain
  \begin{equation*}
    \Expect[\langle \Delta, k(G,G') \Delta' \rangle_{\mathbb{R}^m} \langle \Delta, k(G,G'') \Delta'' \rangle_{\mathbb{R}^m}] - \squaredkernelmeasure^2[k, g] = 0.
  \end{equation*}

  The statement follows from
  \citet[Theorem in Section~5.5.2][]{serfling80_approx_theor_mathem_statis}.
\end{proof}

As discussed by \citet{gretton12_kernel_two_sampl_test} in the case of two-sample
tests, a natural idea is to find a threshold $c$ such that
$\mathbb{P}[n \unbiasedestimator[k, g, \mathcal{D}] > c \given \nullhypothesis] \leq \alpha$,
where $\nullhypothesis$ is the null hypothesis that the model is strongly
calibrated. The desired quantile can be estimated by fitting Pearson curves to
the empirical distribution by moment matching
\citep{gretton12_kernel_two_sampl_test}, or alternatively by
bootstrapping \citep{arcones92_boots_u_v_statis}, both computed and performed
under the assumption that the model is strongly calibrated.

If model $g$ is strongly calibrated we know
$\Expect[\unbiasedestimator[k, g, \mathcal{D}]] = 0$. Moreover, it follows from
\citet[p.~299][]{hoeffding48_class_statis_with_asymp_normal_distr} that
\begin{equation*}
  \Expect\left[\unbiasedestimator^2[k, g, \mathcal{D}]\right] = \frac{2}{n(n-1)} \Expect\left[{\left(\langle e_Y - g(X), k(g(X),g(X')) (e_{Y'} - g(X')) \rangle_{\mathbb{R}^m}\right)}^2\right],
\end{equation*}
where $(X',Y')$ is an independent copy of $(X,Y)$. By some tedious calculations
we can retrieve higher-order moments as well. If model $g$ is strongly
calibrated, we know from
\citet[Lemma~B, Section~5.2.2]{serfling80_approx_theor_mathem_statis} that for
$r \geq 2$
\begin{equation*}
  \Expect\left[\unbiasedestimator^r[k, g, \mathcal{D}]\right] = O(n^{-r})
\end{equation*}
as the number of samples $n$ goes to infinity, provided that
\begin{equation*}
  \Expect\left[{\left|\langle e_Y - g(X), k(g(X),g(X')) (e_{Y'} - g(X')) \rangle_{\mathbb{R}^m}\right|}^r\right] < \infty.
\end{equation*}

Alternatively, as discussed by \citet[Section~5][]{arcones92_boots_u_v_statis},
we can estimate $c$ by using quantiles of the bootstrap statistic
\begin{equation*}
  \begin{split}
    T &= 2 n^{-1} \sum_{1 \leq i < j \leq n} \left[ h((X_{*,i},Y_{*,i}), (X_{*,j},Y_{*,j})) - n^{-1} \sum_{k=1}^n h((X_{*,i},Y_{*,i}),(X_k,Y_k)) \right. \\
    &\qquad \qquad \qquad \qquad \left. - n^{-1} \sum_{k=1}^n h((X_k,Y_k),(X_{*,j},Y_{*,j})) + n^{-2} \sum_{k,l=1}^n h((X_k,Y_k),(X_l,Y_l))\right],
  \end{split}
\end{equation*}
where
\begin{equation*}
  h((x,y),(x',y')) \coloneqq \langle \delta(x,y), k(g(x),g(x')) \delta(x',y') \rangle_{\mathbb{R}^m}
\end{equation*}
and $(X_{*,1},Y_{*,1}),\ldots,(X_{*,n},Y_{*,n})$ are sampled with replacement
from the data set $\mathcal{D}$. Then asymptotically
\begin{equation*}
  \Prob\left[n \unbiasedestimator[k, g, \mathcal{D}] > c \given \nullhypothesis \right] \approx \Prob[T > c \given \mathcal{D}].
\end{equation*}

For the linear estimator, the asymptotical behaviour follows from the central
limit theorem
\citep[e.g., Theorem~A in Section 1.9][]{serfling80_approx_theor_mathem_statis}.

\begin{corollary}\label{cor:linear_asymptotic}
  Let $k \colon \Delta^m \times \Delta^m \to \mathcal{L}(\mathbb{R}^m)$ be a
  kernel, and assume that $k(\cdot,t) u$ is measurable for all $t \in \Delta^m$
  and $u \in \mathbb{R}^m$, and $\|k\|_{\mu_G,1} < \infty$.

  If $\sigma^2 \coloneqq \Var[\langle \delta(X,Y), k(g(X),g(X')) \delta(X',Y')\rangle_{\mathbb{R}^m}] < \infty$,
  where $(X',Y')$ is an independent copy of $(X,Y)$, then
  \begin{equation*}
    \sqrt{\lfloor n/ 2\rfloor} \left(\linearestimator[k, g, \mathcal{D}] - \squaredkernelmeasure[k, g]\right) \xrightarrow{d} \mathcal{N}(0, \sigma^2).
  \end{equation*}
\end{corollary}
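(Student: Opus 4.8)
The plan is to recognize $\linearestimator[k,g,\mathcal{D}]$ as the empirical mean of $\lfloor n/2 \rfloor$ independent and identically distributed random variables and then to invoke the classical central limit theorem. Concretely, set
\[
  h((x,y),(x',y')) \coloneqq \langle \delta(x,y), k(g(x),g(x')) \delta(x',y') \rangle_{\mathbb{R}^m},
\]
and for $i = 1, \ldots, \lfloor n/2 \rfloor$ let $H_i \coloneqq h((X_{2i-1},Y_{2i-1}),(X_{2i},Y_{2i}))$, so that $\linearestimator[k,g,\mathcal{D}] = \lfloor n/2\rfloor^{-1}\sum_{i=1}^{\lfloor n/2\rfloor} H_i$. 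Since the pairs $((X_{2i-1},Y_{2i-1}),(X_{2i},Y_{2i}))$ for $i = 1, 2, \ldots$ each involve a disjoint block of the i.i.d.\ sequence $(X_j,Y_j)$, they are i.i.d., and hence so are the variables $H_1, H_2, \ldots$

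Next I would verify the two ingredients required by the CLT. Measurability of each $H_i$ follows from \cref{lemma:rkhs_measurable}: since $k(\cdot,t)u$ is assumed measurable for all $t \in \Delta^m$ and $u \in \mathbb{R}^m$, the map $h$ is measurable. The common mean is $\Expect[H_i] = \squaredkernelmeasure[k,g]$ — this is exactly the identity established in the proof of \cref{lemma:meance}, which holds under the assumption $\|k\|_{\mu_G,1} < \infty$ and in particular guarantees $\Expect[|H_i|] < \infty$. Finally, the common variance is $\sigma^2 = \Var[\langle \delta(X,Y), k(g(X),g(X'))\delta(X',Y')\rangle_{\mathbb{R}^m}]$, which is finite by hypothesis.

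With these facts in hand, the Lindeberg--L\'evy central limit theorem (e.g.\ \citet[Theorem~A in Section~1.9]{serfling80_approx_theor_mathem_statis}) applied to the i.i.d.\ sequence $H_1, H_2, \ldots$ yields
\[
  \sqrt{\lfloor n/2 \rfloor}\bigg(\frac{1}{\lfloor n/2\rfloor}\sum_{i=1}^{\lfloor n/2\rfloor} H_i - \squaredkernelmeasure[k,g]\bigg) \xrightarrow{d} \mathcal{N}(0,\sigma^2)
\]
as $\lfloor n/2\rfloor \to \infty$, i.e.\ as $n \to \infty$, which is precisely the claimed statement.

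There is essentially no hard step here; the only points requiring a moment's care are (i) observing that the summands are genuinely independent because the linear estimator pairs up disjoint observations, and (ii) that centring and rescaling by the same quantity $\lfloor n/2\rfloor$ is legitimate since $\lfloor n/2\rfloor \to \infty$ with $n$. The heavier lifting — translating the abstract hypothesis $\sigma^2 < \infty$ into explicit kernel integrability conditions — has already been carried out in \cref{lemma:square_integrable}, which is what one would invoke to check the hypothesis in concrete settings (e.g.\ under $\|k\|_{\mu_G,2} < \infty$).
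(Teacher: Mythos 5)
Your argument is correct and is essentially the paper's own: the paper's proof of this corollary consists of the one-line remark that it follows from the central limit theorem, citing the same reference (Serfling, Theorem~A, Section~1.9), and you have simply spelled out the details — that the summands $H_i$ in $\linearestimator$ are i.i.d.\ because they are built from disjoint blocks of observations, that their common mean is $\squaredkernelmeasure[k,g]$ by \cref{lemma:skce_linear,lemma:meance}, and that the finite-variance hypothesis is exactly what the Lindeberg--L\'evy CLT requires. One small inaccuracy: the measurability of the summands is not really a consequence of \cref{lemma:rkhs_measurable}, which concerns measurability of individual RKHS elements $f \colon \Delta^m \to \mathbb{R}^m$, not joint measurability of the bivariate kernel map $(s,t) \mapsto k(s,t)$; this point is implicit in the paper as well and does not affect the argument.
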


As noted in the following statement, the variance $\sigma^2$ is finite if
$t \mapsto \|k(t,t)\|$ is $L^2$-integrable with respect to measure $\mu_G$.

\begin{corollary}
  Let $k \colon \Delta^m \times \Delta^m \to \mathcal{L}(\mathbb{R}^m)$ be a
  kernel, and assume that $k(\cdot,t) u$ is measurable for all $t \in \Delta^m$
  and $u \in \mathbb{R}^m$, and $\|k\|_{\mu_G,2} < \infty$.

  Then $\sigma^2 \coloneqq \Var[\langle \delta(X,Y), k(G,G') \delta(X',Y')\rangle_{\mathbb{R}^m}] < \infty$,
  where $(X',Y')$ is an independent copy of $(X,Y)$ with $G'= g(X')$, and
  \begin{equation*}
    \sqrt{\lfloor n/ 2\rfloor} \left(\linearestimator[k, g, \mathcal{D}] - \squaredkernelmeasure[k, g]\right) \xrightarrow{d} \mathcal{N}(0, \sigma^2).
  \end{equation*}
\end{corollary}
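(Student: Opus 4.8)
The plan is to derive this as a direct specialization of the preceding \cref{cor:linear_asymptotic}, whose conclusion is precisely the claimed central limit theorem. That corollary requires two hypotheses — $\|k\|_{\mu_G,1} < \infty$ and $\sigma^2 < \infty$ — so the entire task reduces to showing that the stronger assumption $\|k\|_{\mu_G,2} < \infty$ imposed here implies both of them.

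First I would dispatch the integrability hypothesis: since $\mu_G$ is a probability measure, Cauchy–Schwarz (equivalently, Jensen applied to $x \mapsto x^2$) gives $\int f \, \mathrm{d}\mu_G \le {(\int f^2 \, \mathrm{d}\mu_G)}^{1/2}$ for nonnegative $f$, and applying this to $\tilde{k}_q(t) = \|k(t,t)\|_{q;q'}^{1/2}$ yields $\|k\|_{\mu_G,1} \le \|k\|_{\mu_G,2} < \infty$. Next I would show $\sigma^2 < \infty$ by repeating the estimate in the proof of \cref{lemma:square_integrable}: by the Cauchy–Schwarz inequality and the definition of the operator norm,
\[
  \langle \delta(X,Y), k(g(X),g(X'))\delta(X',Y')\rangle_{\mathbb{R}^m}^2 \le \|\delta(X,Y)\|_2^2\,\|k(g(X),g(X'))\|_{2;2}^2\,\|\delta(X',Y')\|_2^2 \le 4\,\|k(g(X),g(X'))\|_{2;2}^2,
\]
where the last step uses that $\delta(X,Y) = e_Y - g(X)$ is a difference of two probability vectors and hence $\|\delta(X,Y)\|_2 \le \sqrt{2}$. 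Taking expectations, bounding $\|k(s,t)\|_{2;2}^2 \le \|k(s,s)\|_{2;2}\|k(t,t)\|_{2;2}$ via \cref{eq:rkhs_st_bound} with $p = q = 2$, and using that $g(X)$ and $g(X')$ are i.i.d., the second moment is at most $4\,{(\Expect[\|k(g(X),g(X))\|_{2;2}])}^2 = 4\,\|k\|_{\mu_G,2;2}^4 < \infty$, so $\sigma^2 \le \Expect[\langle \delta(X,Y), k(g(X),g(X'))\delta(X',Y')\rangle_{\mathbb{R}^m}^2]$ is finite. With both hypotheses verified, \cref{cor:linear_asymptotic} then delivers the asymptotic normality.

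I do not anticipate a genuine obstacle. The only subtlety worth flagging is that \cref{lemma:square_integrable} is stated for $\Delta = \delta(G)$, the conditional residual, rather than for the raw residual $\delta(X,Y) = e_Y - g(X)$ that enters the linear estimator, so it cannot be cited verbatim; but $e_Y - g(X)$ satisfies the same uniform $\ell_2$ bound as $\Delta$ (both being differences of elements of $\Delta^m$), and the computation carries over unchanged. Everything else is a routine combination of Hölder's inequality, the RKHS bound \cref{eq:rkhs_st_bound}, and the already-established \cref{cor:linear_asymptotic}.
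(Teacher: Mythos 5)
Your proof is correct and follows the paper's route exactly: the paper's own proof is the one-liner \enquote{The statement follows from \cref{cor:linear_asymptotic},} and your write-up simply supplies the two verifications implicit in that pointer—namely that $\|k\|_{\mu_G,2}<\infty$ gives $\|k\|_{\mu_G,1}<\infty$ by Jensen on the probability space $\mu_G$, and that $\sigma^2<\infty$ by the Cauchy--Schwarz/operator-norm estimate combined with \cref{eq:rkhs_st_bound}. Your flag that \cref{lemma:square_integrable} concerns $\Delta=\delta(G)$ rather than the raw residual $e_Y-g(X)$ appearing in $\linearestimator$, so the computation must be repeated rather than cited, is an accurate reading of the paper's overloaded $\delta$ notation.
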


\begin{proof}
 The statement follows from \cref{cor:linear_asymptotic}.
\end{proof}

The weak convergence of $\linearestimator$ yields the following asymptotic test.

\begin{corollary}
  Let the assumptions of \cref{cor:linear_asymptotic} be satisfied.

  A one-sided statistical test with test statistic
  $\linearestimator[k, g, \mathcal{D}]$ and asymptotic significance level
  $\alpha$ has the acceptance region
  \begin{equation*}
    \sqrt{\lfloor n / 2\rfloor} \linearestimator[k, g, \mathcal{D}] < z_{1-\alpha} \hat{\sigma},
  \end{equation*}
  where $z_{1-\alpha}$ is the $(1-\alpha)$-quantile of the standard normal
  distribution and $\hat{\sigma}$ is a consistent estimator of the standard
  deviation of
  $\langle \delta(X,Y), k(g(X),g(X')) \delta(X',Y')\rangle_{\mathbb{R}^m}$.
\end{corollary}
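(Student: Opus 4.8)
The plan is to reduce the statement to \cref{cor:linear_asymptotic} combined with a Slutsky-type argument. First I would observe that under the null hypothesis $\nullhypothesis$ the model $g$ is strongly calibrated, so $\Delta = 0$ almost surely and hence $\squaredkernelmeasure[k, g] = 0$ by \cref{thm:ce_zero} (alternatively, this can be read off directly from the explicit formula in \cref{lemma:meance}). Substituting $\squaredkernelmeasure[k, g] = 0$ into the conclusion of \cref{cor:linear_asymptotic}, whose assumptions are exactly those assumed here, I obtain that under $\nullhypothesis$
\begin{equation*}
  \sqrt{\lfloor n / 2 \rfloor}\, \linearestimator[k, g, \mathcal{D}] \xrightarrow{d} \mathcal{N}(0, \sigma^2).
\end{equation*}

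Next I would combine this weak convergence with the consistency of $\hat{\sigma}$. Assuming $\sigma > 0$, consistency gives $\hat{\sigma} \xrightarrow{p} \sigma$, so $\hat{\sigma}^{-1} \xrightarrow{p} \sigma^{-1}$ by the continuous mapping theorem, and in particular $\hat{\sigma} > 0$ with probability tending to one. Slutsky's theorem then yields, under $\nullhypothesis$, that $\sqrt{\lfloor n / 2 \rfloor}\, \linearestimator[k, g, \mathcal{D}] / \hat{\sigma} \xrightarrow{d} \mathcal{N}(0, 1)$. The complement of the stated acceptance region is, up to the event $\{\hat{\sigma} > 0\}$ of asymptotic probability one, the event $\{\sqrt{\lfloor n / 2 \rfloor}\, \linearestimator[k, g, \mathcal{D}] / \hat{\sigma} \geq z_{1-\alpha}\}$; since the standard normal cdf $\Phi$ is continuous at $z_{1-\alpha}$, convergence in distribution upgrades to convergence of this tail probability, giving
\begin{equation*}
  \Prob\Big[\sqrt{\lfloor n / 2 \rfloor}\, \linearestimator[k, g, \mathcal{D}] \geq z_{1-\alpha} \hat{\sigma} \,\Big|\, \nullhypothesis \Big] \to 1 - \Phi(z_{1-\alpha}) = \alpha \quad \text{as } n \to \infty,
\end{equation*}
which is precisely the asymptotic significance level claim.

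The only real subtlety — the one point I would handle with care — is the degenerate case $\sigma = 0$, where Slutsky's theorem does not apply directly because division by $\hat{\sigma}$ is problematic. In that case, however, the summand $\langle \delta(X,Y), k(g(X),g(X')) \delta(X',Y') \rangle_{\mathbb{R}^m}$ equals its expectation $\squaredkernelmeasure[k, g] = 0$ almost surely, so $\linearestimator[k, g, \mathcal{D}] = 0$ almost surely and the test rejects with probability at most $\Prob[\hat{\sigma} \leq 0] \to 0 \leq \alpha$ (for $\alpha < 1/2$, where $z_{1-\alpha} > 0$); hence the asymptotic level bound still holds, if only trivially. Beyond this caveat the argument is a routine assembly of standard limit-theorem facts, so I do not anticipate any genuine obstacle.
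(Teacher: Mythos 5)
The paper states this corollary without a written proof, so you are effectively supplying the intended one: apply \cref{cor:linear_asymptotic} under $\nullhypothesis$ (where $\squaredkernelmeasure[k,g] = 0$ by \cref{thm:ce_zero}), combine the resulting weak convergence with the consistency of $\hat{\sigma}$ via Slutsky's theorem to get a pivotal standard normal limit, and read the asymptotic rejection probability off the continuity of $\Phi$ at $z_{1-\alpha}$. That chain is correct and is exactly the standard Wald-type argument one would expect here.

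The one flaw is your treatment of the degenerate case $\sigma = 0$, and it runs in the opposite direction from what you claim. You correctly note that in that case the summand $\langle \delta(X,Y), k(g(X),g(X')) \delta(X',Y')\rangle_{\mathbb{R}^m}$ equals $0$ almost surely, so $\linearestimator[k,g,\mathcal{D}] = 0$ almost surely. But then the sample standard deviation $\hat{\sigma}$ of an almost surely constant zero sequence is itself $0$ almost surely, so $\Prob[\hat{\sigma} \leq 0] = 1$, not $\to 0$. The rejection event $\bigl\{\sqrt{\lfloor n/2\rfloor}\,\linearestimator[k,g,\mathcal{D}] \geq z_{1-\alpha}\hat{\sigma}\bigr\}$ collapses to $\{0 \geq 0\}$, which always holds, so the test rejects with probability one and the asymptotic level claim fails, rather than holding \enquote{trivially}. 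The honest resolution is that the corollary tacitly assumes $\sigma > 0$ (the only regime in which the acceptance region is nontrivial), and the degenerate case should simply be excluded, not argued away.
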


\section{Distribution-free bounds}

First we prove a helpful bound.

\begin{lemma}\label{lemma:term_bound}
  Let $k \colon \Delta^m \times \Delta^m \to \mathcal{L}(\mathbb{R}^m)$ be a
  kernel, and assume that
  $K_{p;q} \coloneqq \sup_{s,t \in \Delta^m} \|k(s,t)\|_{p;q} < \infty$ for some
  $1 \leq p,q \leq \infty$. Then
  \begin{equation*}
    \sup_{x,x' \in \mathcal{X}, y,y' \in \{1,\ldots,m\}} \left|\langle \delta(x, y), k(g(x), g(x')) \delta(x', y') \rangle_{\mathbb{R}^m}\right|
    \leq 2^{1 + 1 / p - 1 / q} K_{p;q} \eqqcolon B_{p;q}.
  \end{equation*}
\end{lemma}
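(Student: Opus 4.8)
The plan is to bound the scalar $\langle \delta(x,y), k(g(x),g(x'))\delta(x',y')\rangle_{\mathbb{R}^m}$ by peeling off one factor at a time via Hölder's inequality and the definition of the induced operator norm, and then to control the two remaining $\delta$-factors by a uniform estimate; finally I would take the supremum over $x,x',y,y'$.

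First I would record the elementary bound on $\delta$. Since $\delta(x,y) = e_y - g(x)$ with $g(x) \in \Delta^m$, every coordinate of $\delta(x,y)$ lies in $[-1,1]$, so $\|\delta(x,y)\|_\infty \leq 1$, and by the triangle inequality $\|\delta(x,y)\|_1 \leq \|e_y\|_1 + \|g(x)\|_1 = 2$. By log-convexity of the $\ell_r$-norms (interpolation between $r = 1$ and $r = \infty$), for any $1 \leq r \leq \infty$ this gives
\[
  \|\delta(x,y)\|_r \leq \|\delta(x,y)\|_1^{1/r}\,\|\delta(x,y)\|_\infty^{1-1/r} \leq 2^{1/r}.
\]

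Next I would estimate the inner product. The Euclidean inner product pairs the $\ell_q$-norm with its dual $\ell_{q'}$-norm, so Hölder's inequality yields
\[
  \left|\langle \delta(x,y), k(g(x),g(x'))\delta(x',y')\rangle_{\mathbb{R}^m}\right| \leq \|\delta(x,y)\|_{q'}\,\|k(g(x),g(x'))\,\delta(x',y')\|_q .
\]
By the definition of the induced operator norm $\|\cdot\|_{p;q}$ and the hypothesis $K_{p;q} = \sup_{s,t} \|k(s,t)\|_{p;q} < \infty$,
\[
  \|k(g(x),g(x'))\,\delta(x',y')\|_q \leq \|k(g(x),g(x'))\|_{p;q}\,\|\delta(x',y')\|_p \leq K_{p;q}\,\|\delta(x',y')\|_p .
\]
Combining the last two displays with the $\delta$-bound from the previous paragraph (with $r = q'$ and $r = p$) gives
\[
  \left|\langle \delta(x,y), k(g(x),g(x'))\delta(x',y')\rangle_{\mathbb{R}^m}\right| \leq 2^{1/q'}\,K_{p;q}\,2^{1/p} = 2^{1/p + 1/q'}\,K_{p;q},
\]
and since $1/q' = 1 - 1/q$ the exponent equals $1 + 1/p - 1/q$. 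Taking the supremum over all $x,x' \in \mathcal{X}$ and $y,y' \in \{1,\ldots,m\}$ then yields exactly $B_{p;q}$.

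I do not expect a genuine obstacle: the argument is a two-step Hölder estimate plus an interpolation inequality. The only points needing care are (i) tracking the correct exponent bookkeeping — the factor absorbed by $k(s,t)$ must be measured in $\ell_p$, the domain norm of $\|\cdot\|_{p;q}$, while the other $\delta$-factor is measured in $\ell_{q'}$, the dual of the codomain norm $\ell_q$ — and (ii) checking that all the norm inequalities, including $\|\delta\|_r \leq 2^{1/r}$, remain valid in the boundary cases $p, q \in \{1,\infty\}$, which they do under the standard conventions $1/\infty = 0$.
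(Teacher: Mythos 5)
Your proof is correct and follows essentially the same route as the paper: a two-step Hölder/operator-norm estimate followed by a uniform bound on $\|\delta\|_{q'}$ and $\|\delta\|_p$. The only cosmetic difference is that you obtain $\|\delta(x,y)\|_r \leq 2^{1/r}$ by interpolating between $\|\delta\|_1 \leq 2$ and $\|\delta\|_\infty \leq 1$, whereas the paper simply observes that $\delta(x,y) = e_y - g(x)$ is a difference of two points in $\Delta^m$ and that $\max_{s,t\in\Delta^m}\|s-t\|_r = 2^{1/r}$; both give the identical constant.
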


\begin{proof}
  By Hölder's inequality and the definition of the operator norm for all
  $s,t \in \Delta^m$ and $u,v \in \mathbb{R}^m$
  \begin{equation*}
    |\langle u, k(s,t)v \rangle_{\mathbb{R}^m}| \leq \|u\|_{q'} \|k(s,t)v\|_q \leq \|u\|_{q'} \|v\|_p \|k(s,t)\|_{p;q} \leq K_{p;q} \|u\|_{q'} \|v\|_p.
  \end{equation*}
  The result follows from the fact that
  $\max_{s,t \in \Delta^m} \|s - t\|_p = 2^{1/p}$ and
  $\max_{s,t \in \Delta^m} \|s - t\|_{q'} = 2^{1/q'} = 2^{1 - 1/q}$.
\end{proof}

Unfortunately, the tightness of the bound in \cref{lemma:term_bound} depends on
the choice of $p$ and $q$, as the following example shows.

\begin{example}
  Let $k = \phi \mathbf{I}_m$, where
  $\phi \colon \Delta^m \times \Delta^m \to \mathbb{R}$ is a scalar-valued
  kernel and $\mathbf{I}_m \in \mathbb{R}^{m \times m}$ is the identity matrix.
  Assume that $\Phi \coloneqq \sup_{s,t \in \Delta^m} |\phi(s,t)| < \infty$. One
  can show that for all $s,t \in \Delta^m$
  \begin{equation*}
    \|k(s,t)\|_{p;q} = \begin{cases}
      \phi(s,t) & \text{if } p \leq q,\\
      m^{1/q - 1/p} \phi(s,t) & \text{if } p > q,
    \end{cases}
  \end{equation*}
  which implies that
  \begin{equation*}
    K_{p;q} = \begin{cases}
      \Phi & \text{if } p \leq q,\\
      m^{1/q - 1/p} \Phi & \text{if } p > q.
    \end{cases}
  \end{equation*}
  Thus the bound $B_{p;q}$ in \cref{lemma:term_bound} is
  \begin{equation*}
    B_{p; q} = \begin{cases}
      2^{1 + 1/p - 1/q} \Phi & \text{if } p \leq q,\\
      2^{1 + 1/p - 1/q} m^{1/q - 1/p} \Phi & \text{if } p > q,
    \end{cases}
  \end{equation*}
  which attains its smallest value
  $\min_{1 \leq p,q \leq \infty} B_{p;q} = 2 \Phi$ if and only if $p = q$ or
  $m = 2$ and $p > q$. Thus for any other choice of $p$ and $q$
  \cref{lemma:term_bound} provides a non-optimal bound.
\end{example}

\begin{theorem}\label{thm:biased_bound_uniform_general}
  Let $k \colon \Delta^m \times \Delta^m \to \mathcal{L}(\mathbb{R}^m)$ be a
  kernel, and assume that $k(\cdot, t)u$ is measurable for all $t \in \Delta^m$
  and $u \in \mathbb{R}^m$, and
  $K_{p;q} \coloneqq \sup_{s,t \in \Delta^m} \|k(s,t)\|_{p;q} < \infty$ for some
  $1 \leq p,q \leq \infty$. Then for all $\epsilon > 0$
  \begin{equation*}
    \Prob\left[\left|\widehat{\kernelmeasure}[k, g, \mathcal{D}] - \kernelmeasure[k, g]\right| \geq 2{(B_{p;q}/n)}^{1/2} + \epsilon\right] \leq \exp{\left(-\frac{\epsilon^2 n}{2 B_{p;q}}\right)}.
  \end{equation*}
\end{theorem}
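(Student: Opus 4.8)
The plan is to read off $\widehat{\kernelmeasure}[k,g,\mathcal{D}]$ and $\kernelmeasure[k,g]$ as the $\mathcal{H}$-norms of an empirical average and its population counterpart, and then to apply McDiarmid's bounded-differences inequality, exactly as in the concentration bound for the maximum mean discrepancy of \citet{gretton12_kernel_two_sampl_test}. Writing $V_i \coloneqq k(\cdot, g(X_i))\,\delta(X_i,Y_i) \in \mathcal{H}$ and $S \coloneqq n^{-1}\sum_{i=1}^n V_i$, \cref{lemma:skce_biased} gives $\widehat{\kernelmeasure}[k,g,\mathcal{D}] = \|S\|_{\mathcal{H}}$. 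I would first note that $K_{p;q} < \infty$, combined with the equivalence of all operator norms on $\mathbb{R}^{m\times m}$, makes $t \mapsto \|k(t,t)\|$ bounded, hence $\|k\|_{\mu_G,1} < \infty$, so the hypotheses of \cref{lemma:embedding,lemma:meance} are met; in particular $\kernelmeasure[k,g] = \|\mu_g\|_{\mathcal{H}}$. The reverse triangle inequality then reduces everything to controlling the real-valued statistic $F \coloneqq \|S - \mu_g\|_{\mathcal{H}}$, since $\big|\widehat{\kernelmeasure}[k,g,\mathcal{D}] - \kernelmeasure[k,g]\big| = \big|\|S\|_{\mathcal{H}} - \|\mu_g\|_{\mathcal{H}}\big| \le F$.

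The one quantitative ingredient is the uniform bound $\|V_i\|_{\mathcal{H}} \le \sqrt{B_{p;q}}$: by the reproducing property $\|V_i\|_{\mathcal{H}}^2 = \langle \delta(X_i,Y_i), k(g(X_i),g(X_i))\,\delta(X_i,Y_i)\rangle_{\mathbb{R}^m}$, which is at most $B_{p;q}$ by \cref{lemma:term_bound} (the case $x=x'$, $y=y'$). From this, two elementary estimates follow. First, $F$ has the bounded-differences property as a function of $(X_1,Y_1),\ldots,(X_n,Y_n)$ with constants $c_j = 2\sqrt{B_{p;q}}/n$, because altering one data point changes $S$ by $n^{-1}(V_j - V_j')$, of norm at most $2\sqrt{B_{p;q}}/n$. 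Second, I would bound $\Expect[F] \le \Expect[F^2]^{1/2}$ by expanding $\|S - \mu_g\|_{\mathcal{H}}^2 = \|S\|_{\mathcal{H}}^2 - 2\langle S, \mu_g\rangle_{\mathcal{H}} + \|\mu_g\|_{\mathcal{H}}^2$ and taking expectations: using \cref{lemma:meance} for the off-diagonal terms of $\Expect\|S\|_{\mathcal{H}}^2$, the reproducing property together with \cref{lemma:embedding} for $\Expect\langle S, \mu_g\rangle_{\mathcal{H}}$, and \cref{lemma:meance} again for $\|\mu_g\|_{\mathcal{H}}^2 = \squaredkernelmeasure[k,g]$, one finds $\Expect[F^2] = n^{-1}\big(\gamma - \squaredkernelmeasure[k,g]\big)$ with $\gamma \coloneqq \Expect[\langle \delta(X,Y), k(g(X),g(X))\,\delta(X,Y)\rangle_{\mathbb{R}^m}] \in [0, B_{p;q}]$, so $\Expect[F] \le \sqrt{B_{p;q}/n} \le 2\sqrt{B_{p;q}/n}$.

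Applying McDiarmid's inequality to $F$ then gives, for every $\epsilon > 0$,
\begin{equation*}
  \Prob\big[F \ge 2\sqrt{B_{p;q}/n} + \epsilon\big] \le \Prob\big[F \ge \Expect[F] + \epsilon\big] \le \exp\!\bigg(-\frac{2\epsilon^2}{\sum_{j=1}^{n} c_j^2}\bigg) = \exp\!\bigg(-\frac{\epsilon^2 n}{2 B_{p;q}}\bigg),
\end{equation*}
and chaining this with $\big|\widehat{\kernelmeasure}[k,g,\mathcal{D}] - \kernelmeasure[k,g]\big| \le F$ yields the claimed bound.

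The mathematics here is light; the only friction I anticipate is bookkeeping. One has to confirm that $F$ is a genuine (measurable) function of the data --- which amounts to the joint measurability of $(s,t) \mapsto \langle u, k(s,t)v\rangle_{\mathbb{R}^m}$, a routine consequence of the standing assumption that each $k(\cdot,t)u$ is measurable, in line with the conventions already used for $h_{i,j}$ and in \cref{lemma:meance} --- and to check the small norm-equivalence step that upgrades $K_{p;q} < \infty$ to $\|k\|_{\mu_G,1} < \infty$. The substantive content is entirely contained in the norm identities, the uniform bound $\|V_i\|_{\mathcal{H}} \le \sqrt{B_{p;q}}$, and the variance and bounded-differences estimates feeding McDiarmid.
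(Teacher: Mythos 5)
Your proof is correct, and the overall skeleton matches the paper's: both reduce $\bigl|\widehat{\kernelmeasure}-\kernelmeasure\bigr|$ by the reverse triangle inequality to the Hilbert-norm statistic $H=\|S-\mu_g\|_{\mathcal H}$, establish the bounded-differences constant $2\sqrt{B_{p;q}}/n$ from $\|V_i\|_{\mathcal H}\le\sqrt{B_{p;q}}$, and then apply McDiarmid. Where you genuinely diverge is in bounding $\Expect[H]$. The paper follows the MMD template and symmetrizes: it introduces Rademacher signs, invokes \citet[Lemma~2.3.1]{vaart96_weak_conver_empir_proces} at the cost of a factor of~$2$, and then applies Jensen to get $\Expect[H]\le 2\sqrt{B_{p;q}/n}$. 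You instead compute $\Expect[H^2]$ directly: expanding $\|S-\mu_g\|_{\mathcal H}^2$ and using $\Expect[V_i]=\mu_g$, the off-diagonal terms contribute $\squaredkernelmeasure[k,g]$ and the diagonal contributes $\gamma=\Expect[\langle\delta(X,Y),k(g(X),g(X))\delta(X,Y)\rangle_{\mathbb R^m}]$, yielding the exact identity $\Expect[H^2]=n^{-1}(\gamma-\squaredkernelmeasure[k,g])\le B_{p;q}/n$, hence $\Expect[H]\le\sqrt{B_{p;q}/n}$ by Jensen. This bypasses the Rademacher-complexity machinery, is more elementary, and in fact yields the tighter constant $\sqrt{B_{p;q}/n}$ rather than $2\sqrt{B_{p;q}/n}$; the theorem as stated (with the looser constant) follows a fortiori. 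Interestingly, the paper's separate Theorem~\ref{thm:biased_bound_uniform} carries out essentially the same second-moment calculation you do (under the null) to obtain its sharper bound $B_1$, so your argument effectively merges the two proofs; it just does not illustrate the symmetrization technique that the paper deliberately mirrors from the two-sample-test literature.
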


\begin{proof}
  Let $\mathcal{F}$ be the unit ball in the RKHS
  $\mathcal{H} \subset {(\mathbb{R}^m)}^{\Delta^m}$ corresponding to kernel $k$.
  We consider the random variable
  \begin{equation*}
    F \coloneqq \left|\widehat{\kernelmeasure}[k, g, \mathcal{D}] - \kernelmeasure[k, g]\right|.
  \end{equation*}
  The randomness of $F$ is due to the randomness of the data points
  $(X_i, Y_i)$, and by \cref{lemma:meance,lemma:skce_biased} we can rewrite $F$
  as
  \begin{equation*}
    F = n^{-1} \left| \left\|\sum_{i=1}^n k(\cdot, g(X_i)) \delta(X_i, Y_i) \right\|_{\mathcal{H}} - n \left\| \mu_g \right\|_{\mathcal{H}} \right| \eqqcolon f((X_1, Y_1), \ldots, (X_n, Y_n)),
  \end{equation*}
  where $\mu_g$ is the embedding defined in \cref{lemma:embedding}. The triangle
  inequality implies that for all
  $z_i = (x_i, y_i) \in \mathcal{X} \times \{1,\ldots,m\}$
  \begin{equation}\label{eq:F_leq_H}
    \begin{split}
      f(z_1, \ldots, z_n) &= n^{-1} \left| \left\| \sum_{i=1}^n k(\cdot, g(x_i)) \delta(x_i, y_i) \right\|_{\mathcal{H}} - n \|\mu_g \|_{\mathcal{H}} \right| \\
      &\leq n^{-1} \left\|\sum_{i=1}^n \left(k(\cdot, g(x_i)) \delta(x_i, y_i) - \mu_g\right)\right\|_{\mathcal{H}} \eqqcolon h(z_1, \ldots, z_n),
    \end{split}
  \end{equation}
  where $h \colon {(\mathcal{X} \times \{1,\ldots,m\})}^n \to \mathbb{R}$ is
  measurable and hence induces a random variable
  $H \coloneqq h((X_1,Y_1),\ldots,(X_n,Y_n))$.

  By the reproducing property and \cref{lemma:term_bound}, for all
  $x,x' \in \mathcal{X}$ and $y,y' \in \{1,\ldots,m\}$ we have
  \begin{equation*}
    \begin{split}
      \left\|k(\cdot, g(x)) \delta(x,y) - k(\cdot, g(x')) \delta(x', y')\right\|^2_{\mathcal{H}} &= \langle \delta(x,y), k(g(x),g(x)) \delta(x,y)\rangle_{\mathbb{R}^m} \\
      &\quad - \langle \delta(x,y), k(g(x), g(x')) \delta(x', y') \rangle_{\mathbb{R}^m} \\
      &\quad - \langle \delta(x', y'), k(g(x'), g(x)) \delta(x,y)\rangle_{\mathbb{R}^m} \\
      &\quad + \langle \delta(x', y'), k(g(x'), g(x')) \delta(x',y')\rangle_{\mathbb{R}^m} \\
      &\leq 4 B_{p;q}.
    \end{split}
  \end{equation*}
  Thus for all $i \in \{1,\ldots,m\}$ the triangle inequality implies
  \begin{multline*}
    \sup_{z, z', z_j (j \neq i)} |h(z_1,\ldots,z_{i-1},z,z_{i+1},\ldots,z_n) - h(z_1,\ldots,z_{i-1},z',z_{i+1},\ldots,z_n)| \\
    \leq \sup_{x,x,y,y'} n^{-1} \left\|k(\cdot, g(x)) \delta(x,y) - k(\cdot, g(x')) \delta(x', y')\right\|_{\mathcal{H}} \leq \frac{2 B_{p;q}^{1/2}}{n}.
  \end{multline*}
  Hence we can apply McDiarmid's inequality to the random variable $H$, which
  yields for all $\epsilon > 0$
  \begin{equation}\label{eq:mcdiarmid_skce}
    \Prob\left[H \geq \Expect[H] + \epsilon \right] \leq \exp{\left(-\frac{\epsilon^2 n}{2 B_{p;q}}\right)}.
  \end{equation}

  In the final parts of the proof we bound the expectation $\Expect[H]$. By
  \cref{lemma:embedding,lemma:skce_biased}, we know that
  \begin{equation*}
    \begin{split}
      H &= h((X_1,Y_1),\ldots,(X_n,Y_n)) \\
      &= \sup_{f \in \mathcal{F}} n^{-1} \left| \sum_{i=1}^n \Big(\langle \delta(X_i,Y_i), f(g(X_i)) \rangle_{\mathbb{R}^m} - \Expect\left[\langle \delta(X, Y), f(g(X)) \rangle_{\mathbb{R}^m} \right]\Big) \right| \\
      &= \sup_{f \in \mathcal{F}_0} n^{-1} \left| \sum_{i=1}^n f(X_i,Y_i) - \Expect[f(X,Y)] \right|,
    \end{split}
  \end{equation*}
  where $\mathcal{F}_0 \coloneqq \{ f \colon \mathcal{X} \times \{1,\ldots,m\} \to \mathbb{R}, (x,y) \mapsto \langle \delta(x,y), \tilde{f}(g(x)) \rangle_{\mathbb{R}^m} \colon \tilde{f} \in \mathcal{F} \}$
  is a class of measurable functions. As \citet{gretton12_kernel_two_sampl_test},
  we make use of symmetrization ideas
  \citep[p.~108]{vaart96_weak_conver_empir_proces}. From
  \citet[Lemma~2.3.1]{vaart96_weak_conver_empir_proces} it follows that
  \begin{equation*}
    \Expect[H] =\Expect\left[\sup_{f \in \mathcal{F}_0} n^{-1} \left| \sum_{i=1}^n f(X_i,Y_i) - \Expect[f(X,Y)] \right|\right] \leq 2 \Expect\left[\sup_{f \in \mathcal{F}_0} \left|n^{-1} \sum_{i=1}^n \epsilon_i f(X_i,Y_i) \right|\right],
  \end{equation*}
  where $\epsilon_1,\ldots,\epsilon_n$ are independent Rademacher random
  variables. Similar to \citet[Lemma~22]{bartlett02_radem_gauss_compl}, we
  obtain
  \begin{equation*}
    \begin{split}
      \Expect[H] &\leq 2n^{-1} \Expect\left[\sup_{f \in \mathcal{F}} \left| \sum_{i=1}^n \epsilon_i \langle \delta(X_i, Y_i), f(g(X_i))\rangle_{\mathbb{R}^m} \right|\right] \\
      &= 2n^{-1} \Expect\left[\sup_{f \in \mathcal{F}} \left| \left\langle \sum_{i=1}^n \epsilon_i k(\cdot, g(X_i)) \delta(X_i, Y_i), f \right\rangle_{\mathcal{H}} \right|\right] \\
      &= 2n^{-1} \Expect\left[ \left\|\sum_{i=1}^n \epsilon_i k(\cdot, g(X_i)) \delta(X_i, Y_i) \right\|_{\mathcal{H}} \right] \\
      &= 2n^{-1}\Expect\left[ {\left( \sum_{i,j=1}^n \epsilon_i \epsilon_j \langle k(\cdot, g(X_i)) \delta(X_i, Y_i), k(\cdot, g(X_j)) \delta(X_j, Y_j) \rangle_{\mathcal{H}}\right)}^{1/2} \right].
    \end{split}
  \end{equation*}
  By Jensen's inequality we get
  \begin{equation}\label{eq:EH_bound_weak}
    \begin{split}
      \Expect[H] &\leq 2n^{-1} {\left(\sum_{i,j=1}^n \Expect\left[\epsilon_i \epsilon_j \langle k(\cdot, g(X_i)) \delta(X_i, Y_i), k(\cdot, g(X_j)) \delta(X_j, Y_j) \rangle_{\mathcal{H}} \right]\right)}^{1/2} \\
      &= 2n^{-1/2} {\Big(\Expect\left[\langle k(\cdot, g(X)) \delta(X, Y), k(\cdot, g(X)) \delta(X, Y) \rangle_{\mathcal{H}} \right]\Big)}^{1/2} \\
      &\leq 2{(B_{p;q}/n)}^{1/2}.
    \end{split}
  \end{equation}

  All in all, from \cref{eq:F_leq_H,eq:mcdiarmid_skce,eq:EH_bound_weak} we
  obtain for all $\epsilon > 0$
  \begin{equation*}
    \begin{split}
      \Prob\left[\left|\widehat{\kernelmeasure}[k, g, \mathcal{D}] - \kernelmeasure[k, g]\right| \geq 2{(B_{p;q} / n)}^{1/2} + \epsilon \right] &= \Prob[F \geq 2{(B_{p;q}/n)}^{1/2} + \epsilon] \\
      &\leq \Prob[H \geq 2{(B_{p;q}/n)}^{1/2} + \epsilon] \\
      &\leq \Prob[H \geq \Expect[H] + \epsilon] \\
      &\leq \exp{\left( - \frac{\epsilon^2n}{2B_{p;q}}\right)},
    \end{split}
  \end{equation*}
  which concludes our proof.
\end{proof}

If model $g$ is calibrated in the strong sense, we can improve the bound.

\begin{theorem}\label{thm:biased_bound_uniform}
  Let $k \colon \Delta^m \times \Delta^m \to \mathcal{L}(\mathbb{R}^m)$ be a
  kernel, and assume that $k(\cdot, t)u$ is measurable for all $t \in \Delta^m$
  and $u \in \mathbb{R}^m$, and
  $K_{p;q} \coloneqq \sup_{s,t \in \Delta^m} \|k(s,t)\|_{p;q} < \infty$ for some
  $1 \leq p,q \leq \infty$. Define
  \begin{align*}
    B_1 &\coloneqq n^{-1/2} \left[\Expect\left[\langle \delta(X, Y), k(g(X), g(X)) \delta(X,Y)\rangle_{\mathbb{R}^m}\right] \right]^{1/2}, \qquad \text{and}\\
    B_2 &\coloneqq {\left(B_{p;q} / n\right)}^{1/2}.
  \end{align*}

  Then $B_1 \leq B_2$, and for all $\epsilon > 0$ and $i \in \{1,2\}$
  \begin{equation*}
    \Prob\left[\widehat{\kernelmeasure}[k, g, \mathcal{D}] \geq B_i + \epsilon\right] \leq  \exp{\left(- \frac{\epsilon^2 n}{2 B_{p;q}}\right)},
  \end{equation*}
  if $g$ is calibrated in the strong sense.
\end{theorem}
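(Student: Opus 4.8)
The plan is to reuse the machinery from the proof of \cref{thm:biased_bound_uniform_general}, exploiting strong calibration twice: once to kill the population term, and once to replace the generic bound on an expectation by the tighter quantity $B_1$.

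First I would prove $B_1 \leq B_2$. Since $k$ is a matrix-valued kernel, $k(g(X), g(X))$ is positive semi-definite, so $\langle \delta(X,Y), k(g(X),g(X))\delta(X,Y)\rangle_{\mathbb{R}^m} \geq 0$, and by \cref{lemma:term_bound} applied with coinciding arguments ($x = x'$, $y = y'$) this quantity is bounded above by $B_{p;q}$. Taking expectations and then square roots yields $B_1 = n^{-1/2}\big(\Expect[\langle \delta(X,Y), k(g(X),g(X))\delta(X,Y)\rangle_{\mathbb{R}^m}]\big)^{1/2} \leq n^{-1/2} B_{p;q}^{1/2} = B_2$.

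Next I would specialise the earlier argument to the calibrated case. If $g$ is strongly calibrated then $\Delta = 0$ almost surely, so the embedding $\mu_g$ of \cref{lemma:embedding} vanishes and $\kernelmeasure[k,g] = 0$ by \cref{thm:ce_zero}; since $\widehat{\kernelmeasure}[k, g, \mathcal{D}] \geq 0$ this gives $\widehat{\kernelmeasure}[k, g, \mathcal{D}] = |\widehat{\kernelmeasure}[k, g, \mathcal{D}] - \kernelmeasure[k, g]|$, which is precisely the random variable $F$ of the earlier proof, so $\widehat{\kernelmeasure}[k, g, \mathcal{D}] = F \leq H$ with $H = n^{-1}\big\|\sum_{i=1}^n k(\cdot, g(X_i)) \delta(X_i, Y_i)\big\|_{\mathcal{H}}$ (cf.\ \cref{eq:F_leq_H}). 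The coordinate-wise bounded-differences computation carried out in that proof does not use the value of $\mu_g$ — when one data point is perturbed the $\mu_g$-part of $H$ is untouched — so McDiarmid's inequality still yields $\Prob[H \geq \Expect[H] + \epsilon] \leq \exp(-\epsilon^2 n / (2 B_{p;q}))$ for all $\epsilon > 0$.

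The only genuinely new ingredient is a sharper bound on $\Expect[H]$. By Jensen's inequality, $\Expect[H] \leq n^{-1}\big(\Expect\big[\big\|\sum_{i=1}^n k(\cdot,g(X_i))\delta(X_i,Y_i)\big\|_{\mathcal{H}}^2\big]\big)^{1/2}$, and expanding the squared norm via the reproducing property turns the expectation into $\sum_{i,j=1}^n \Expect[\langle \delta(X_i,Y_i), k(g(X_i),g(X_j))\delta(X_j,Y_j)\rangle_{\mathbb{R}^m}]$. For $i \neq j$ the summand equals $\squaredkernelmeasure[k,g]$, which is $0$ by \cref{lemma:meance} together with strong calibration, so only the $n$ diagonal terms remain, each equal to $\Expect[\langle \delta(X,Y), k(g(X),g(X))\delta(X,Y)\rangle_{\mathbb{R}^m}]$; hence $\Expect[H] \leq n^{-1/2}\big(\Expect[\langle \delta(X,Y), k(g(X),g(X))\delta(X,Y)\rangle_{\mathbb{R}^m}]\big)^{1/2} = B_1 \leq B_2$, which is the improvement over the generic estimate $\Expect[H] \leq 2(B_{p;q}/n)^{1/2}$ used in \cref{thm:biased_bound_uniform_general}. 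Finally, for each $i \in \{1,2\}$ we have $B_i \geq \Expect[H]$, so $\Prob[\widehat{\kernelmeasure}[k, g, \mathcal{D}] \geq B_i + \epsilon] \leq \Prob[H \geq B_i + \epsilon] \leq \Prob[H \geq \Expect[H] + \epsilon] \leq \exp(-\epsilon^2 n / (2 B_{p;q}))$. There is no serious obstacle; the one point to check carefully is that the off-diagonal terms in the expansion of $\Expect[H^2]$ vanish in expectation, which is exactly where strong calibration is used, and that the McDiarmid constant is unchanged because perturbing a single data point does not affect the $\mu_g$-part of $H$.
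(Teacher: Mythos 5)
Your proposal is correct and follows essentially the same route as the paper: establish $B_1 \leq B_2$ via \cref{lemma:term_bound}, note that strong calibration makes $\mu_g = 0$ so that $\widehat{\kernelmeasure}[k,g,\mathcal{D}]$ coincides with the quantity $H$ from the proof of \cref{thm:biased_bound_uniform_general} (to which the McDiarmid bound with unchanged constant still applies), and then sharpen the bound on $\Expect[H]$ by Jensen's inequality together with the vanishing of the off-diagonal expectations. The only cosmetic difference is that you write $\widehat{\kernelmeasure} = F \leq H$ whereas the paper observes the equality $H = \widehat{\kernelmeasure}$ directly; both are the same observation under strong calibration.
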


\begin{proof}
  Let $\mathcal{F}$ be the unit ball in the RKHS
  $\mathcal{H} \subset {(\mathbb{R}^m)}^{\Delta^m}$ corresponding to kernel $k$.
  \Cref{lemma:term_bound} implies
  \begin{equation}\label{eq:Bi_inequality}
    \begin{split}
      B_1 &= n^{-1/2} \left[\Expect\left[\langle \delta(X, Y), k(g(X), g(X)) \delta(X,Y)\rangle_{\mathbb{R}^m}\right] \right]^{1/2} \\
      &\leq n^{-1/2} \left[\Expect[B_{p;q}] \right]^{1/2} = {(B_{p;q}/n)}^{1/2} = B_2.
    \end{split}
  \end{equation}

  Let $H$ be defined as in the proof of \cref{thm:biased_bound_uniform_general}.
  Since $g$ is strongly calibrated, it follows from
  \cref{thm:ce_zero,lemma:meance} that $\mu_g = 0$, and thus by
  \cref{lemma:skce_biased}
  \begin{equation*}
    H = n^{-1} \left\|\sum_{i=1}^n k(\cdot, g(x_i)) \delta(x_i, y_i)\right\|_{\mathcal{H}} = \widehat{\kernelmeasure}[k, g, \mathcal{D}].
  \end{equation*}
  Thus \cref{eq:mcdiarmid_skce} implies
  \begin{equation}\label{eq:mcdiarmid_skce_measure}
    \Prob\left[\widehat{\kernelmeasure}[k, g, \mathcal{D}] \geq \Expect[\widehat{\kernelmeasure}[k, g, \mathcal{D}] + \epsilon\right] \leq \exp{\left(- \frac{\epsilon^2 n}{2 B_{p;q}}\right)}.
  \end{equation}

  Next we bound $\Expect[\widehat{\kernelmeasure}[k, g, \mathcal{D}]]$. From
  \cref{lemma:skce_biased} we get
  \begin{equation*}
    \Expect[\widehat{\kernelmeasure}[k, g, \mathcal{D}]] = \frac{1}{n} \Expect\left[{\left(\sum_{i,j=1}^n \langle \delta(X_i, Y_i), k(g(X_i), g(X_j)) \delta(X_j,Y_j)\rangle_{\mathbb{R}^m}\right)}^{1/2} \right],
  \end{equation*}
  and hence by Jensen's inequality we obtain
  \begin{equation*}
    \begin{split}
      \Expect[\widehat{\kernelmeasure}[k, g, \mathcal{D}]] &\leq \frac{1}{n} {\left( \Expect\left[\sum_{i,j=1}^n \langle \delta(X_i, Y_i), k(g(X_i), g(X_j)) \delta(X_j,Y_j)\rangle_{\mathbb{R}^m}\right]\right)}^{1/2} \\
      &= \frac{1}{n} \bigg(n \Expect\left[\langle \delta(X, Y), k(g(X), g(X)) \delta(X,Y)\rangle_{\mathbb{R}^m}\right] \\
      &\qquad + n(n-1) \Expect\left[\langle \delta(X, Y), k(g(X), g(X')) \delta(X',Y')\rangle_{\mathbb{R}^m}\right] \bigg)^{1/2},
    \end{split}
  \end{equation*}
  where $(X',Y')$ denotes an independent copy of $(X,Y)$. From
  \cref{lemma:meance} it follows that
  \begin{equation*}
    \Expect[\widehat{\kernelmeasure}[k, g, \mathcal{D}] \leq \left(\frac{1}{n} \Expect\left[\langle \delta(X, Y), k(g(X), g(X)) \delta(X,Y)\rangle_{\mathbb{R}^m}\right] + \left(1 - \frac{1}{n}\right) \squaredkernelmeasure[k, g] \right)^{1/2}.
  \end{equation*}
  If model $g$ is calibrated in the strong sense, we know from \cref{thm:ce_zero}
  that $\squaredkernelmeasure[k, g] = 0$. Thus we obtain
  \begin{equation}\label{eq:mcdiarmid_skce_chain}
    \Expect[\widehat{\kernelmeasure}[k, g, \mathcal{D}]] \leq B_1.
  \end{equation}

  All in all, from
  \cref{eq:Bi_inequality,eq:mcdiarmid_skce_measure,eq:mcdiarmid_skce_chain} it
  follows that for all $\epsilon > 0$ and $i \in \{1,2\}$
  \begin{equation*}
    \begin{split}
      \Prob\left[\widehat{\kernelmeasure}[k, g, \mathcal{D}] \geq B_i + \epsilon\right] &\leq \Prob\left[\widehat{\kernelmeasure}[k, g, \mathcal{D}] \geq B_1 + \epsilon\right] \\
      &\leq \Prob\left[\widehat{\kernelmeasure}[k, g, \mathcal{D}] \geq \Expect[\widehat{\kernelmeasure}[k, g, \mathcal{D}] + \epsilon \right] \\
      &\leq \exp{\left(-\frac{- \epsilon^2 n}{2 B_{p;q}}\right)},
    \end{split}
  \end{equation*}
  if $g$ is calibrated in the strong sense.
\end{proof}

Thus we obtain the following distribution-free hypothesis test.

\begin{corollary}
  Let the assumptions of \cref{thm:biased_bound_uniform} be satisfied.

  A statistical test with test statistic
  $\widehat{\kernelmeasure}[k, g, \mathcal{D}]$ and significance level
  $\alpha$ for the null hypothesis of model $g$ being calibrated in the strong
  sense has the acceptance region
  \begin{equation*}
    \widehat{\kernelmeasure}[k, g, \mathcal{D}] < {(B_{p;q} / n)}^{1/2} (1 + \sqrt{-2\log \alpha}).
  \end{equation*}
\end{corollary}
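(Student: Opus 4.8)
The plan is to read off the corollary directly from \cref{thm:biased_bound_uniform} by choosing $i = 2$, so that the relevant bound is $B_2 = {(B_{p;q}/n)}^{1/2}$, and then inverting the exponential tail bound to calibrate the rejection threshold to the prescribed significance level $\alpha$.

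First I would invoke \cref{thm:biased_bound_uniform} with $i = 2$: under the null hypothesis $\nullhypothesis$ that $g$ is calibrated in the strong sense, for every $\epsilon > 0$
\begin{equation*}
  \Prob\left[\widehat{\kernelmeasure}[k, g, \mathcal{D}] \geq {(B_{p;q}/n)}^{1/2} + \epsilon \,\Big|\, \nullhypothesis\right] \leq \exp{\left(-\frac{\epsilon^2 n}{2 B_{p;q}}\right)}.
\end{equation*}
Next I would choose $\epsilon$ so that the right-hand side equals $\alpha$, i.e.\ solve $\epsilon^2 n / (2 B_{p;q}) = -\log\alpha$, which gives $\epsilon = {(B_{p;q}/n)}^{1/2}\sqrt{-2\log\alpha}$ (well defined since $0 < \alpha \leq 1$ makes $-\log\alpha \geq 0$). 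Substituting this value, the probability that the test statistic is at least ${(B_{p;q}/n)}^{1/2} + \epsilon = {(B_{p;q}/n)}^{1/2}\big(1 + \sqrt{-2\log\alpha}\big)$, given $\nullhypothesis$, is at most $\alpha$; hence the complementary event $\widehat{\kernelmeasure}[k, g, \mathcal{D}] < {(B_{p;q}/n)}^{1/2}\big(1 + \sqrt{-2\log\alpha}\big)$ is a valid acceptance region at significance level $\alpha$.

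This argument is essentially bookkeeping, so there is no real obstacle; the only point requiring a word of care is that \cref{thm:biased_bound_uniform} needs its hypotheses (measurability of $k(\cdot,t)u$ and $K_{p;q} < \infty$), which are exactly the standing assumptions of the corollary, and that the conditioning on $\nullhypothesis$ is what makes the bound of \cref{thm:biased_bound_uniform} applicable.
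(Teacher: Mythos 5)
Your proposal is correct and is exactly the intended derivation: the paper leaves this corollary without an explicit proof, and the only natural route is to apply \cref{thm:biased_bound_uniform} with $i=2$, solve $\exp(-\epsilon^2 n / (2 B_{p;q})) = \alpha$ for $\epsilon$, and read off the threshold $B_2 + \epsilon = {(B_{p;q}/n)}^{1/2}(1 + \sqrt{-2\log\alpha})$. Your bookkeeping — including the observation that the hypotheses of the corollary are exactly those of the theorem and that the bound holds conditionally on $\nullhypothesis$ — is accurate and matches the paper's implicit argument.
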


A distribution-free bound for the deviation of the unbiased estimator can be
obtained from the theory of U-statistics.

\begin{theorem}\label{thm:unbiased_bound_uniform}
  Let $k \colon \Delta^m \times \Delta^m \to \mathcal{L}(\mathbb{R}^m)$ be a
  kernel, and assume
  that $k(\cdot,t)u$ is measurable for all $t \in \Delta^m$ and
  $u \in \mathbb{R}^m$, and
  $K_{p;q} \coloneqq \sup_{s,t \in \Delta^m} \|k(s,t)\|_{p;q}$ for some
  $1 \leq p,q \leq \infty$. Then for all $t > 0$
  \begin{equation*}
    \mathbb{P}\left[\unbiasedestimator[k, g, \mathcal{D}] - \squaredkernelmeasure[k, g] \geq t\right] \leq \exp{\left(-\frac{\lfloor n/2 \rfloor t^2}{2 B_{p;q}^2}\right)}.
  \end{equation*}
  The same bound holds for
  $\mathbb{P}\left[\unbiasedestimator[k, g, \mathcal{D}] - \squaredkernelmeasure[k, g] \leq -t\right]$.
\end{theorem}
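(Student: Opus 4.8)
The plan is to recognise $\unbiasedestimator[k,g,\mathcal{D}]$ as a U-statistic of order two with a bounded kernel and apply the classical Hoeffding concentration argument. Set
$h((x,y),(x',y')) \coloneqq \langle \delta(x,y), k(g(x),g(x'))\delta(x',y')\rangle_{\mathbb{R}^m}$.
Note that $h$ is symmetric in its two arguments because $k(s,t) = k(t,s)^\intercal$, and that $\unbiasedestimator[k,g,\mathcal{D}] = \binom{n}{2}^{-1}\sum_{1\le i<j\le n} h((X_i,Y_i),(X_j,Y_j))$. Since $K_{p;q} < \infty$ implies $\|k\|_{\mu_G,1} < \infty$, \cref{lemma:skce_unbiased} applies and gives $\Expect[\unbiasedestimator[k,g,\mathcal{D}]] = \squaredkernelmeasure[k,g]$, while \cref{lemma:term_bound} gives the uniform pointwise bound $|h((x,y),(x',y'))| \le B_{p;q}$, so $h$ takes values in an interval of length $2B_{p;q}$.

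The structural heart of the argument is Hoeffding's representation of a U-statistic as an average over permutations of a ``linear'' statistic built from $\lfloor n/2 \rfloor$ disjoint pairs. For a permutation $\sigma$ of $\{1,\dots,n\}$ define $V_\sigma \coloneqq \lfloor n/2 \rfloor^{-1}\sum_{i=1}^{\lfloor n/2 \rfloor} h((X_{\sigma(2i-1)},Y_{\sigma(2i-1)}),(X_{\sigma(2i)},Y_{\sigma(2i)}))$; then $\unbiasedestimator[k,g,\mathcal{D}] = (n!)^{-1}\sum_\sigma V_\sigma$, and each $V_\sigma$ is an average of $\lfloor n/2 \rfloor$ i.i.d.\ random variables with common mean $\squaredkernelmeasure[k,g]$ and range at most $2B_{p;q}$. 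For $s > 0$, convexity of $x \mapsto e^{sx}$ and Jensen's inequality yield $\Expect[\exp(s(\unbiasedestimator[k,g,\mathcal{D}] - \squaredkernelmeasure[k,g]))] \le \Expect[\exp(s(V_{\mathrm{id}} - \squaredkernelmeasure[k,g]))]$, and Hoeffding's lemma bounds the right-hand side by $\exp(s^2 B_{p;q}^2/(2\lfloor n/2 \rfloor))$. A Chernoff bound, optimised at $s = \lfloor n/2 \rfloor\, t / B_{p;q}^2$, then gives the stated upper-tail inequality $\exp(-\lfloor n/2 \rfloor t^2/(2 B_{p;q}^2))$. The lower tail follows by applying the identical argument to the kernel $-h$, whose U-statistic is $-\unbiasedestimator[k,g,\mathcal{D}]$, has mean $-\squaredkernelmeasure[k,g]$, and is again bounded by $B_{p;q}$.

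I do not expect a real obstacle here; this is essentially a textbook application of Hoeffding's inequality for bounded U-statistics. The only points needing care are (i) that the bound $|h| \le B_{p;q}$ supplied by \cref{lemma:term_bound} is genuinely uniform over $\mathcal{X} \times \{1,\dots,m\}$ and over both arguments, so that the linear statistic has the advertised range, and (ii) the bookkeeping of the floor when $n$ is odd, which is handled automatically because the permutation-averaging representation only ever groups the samples into $\lfloor n/2 \rfloor$ disjoint pairs, discarding the leftover sample. If one prefers, steps two and three can be collapsed by simply invoking the two-sided form of Hoeffding's inequality for U-statistics directly, but I would spell out the reduction since it is precisely what produces the factor $\lfloor n/2 \rfloor$ in the exponent.
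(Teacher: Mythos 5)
Your proof is correct and follows essentially the same route as the paper: establish unbiasedness via \cref{lemma:skce_unbiased}, obtain the uniform bound $|h|\le B_{p;q}$ from \cref{lemma:term_bound}, and then apply Hoeffding's concentration inequality for bounded U-statistics. The only difference is one of exposition --- the paper simply cites Hoeffding (1963, p.~25) for the final step, whereas you reconstruct that cited result from scratch (the permutation-averaged pairing representation, Jensen's inequality to reduce to a single linear statistic of $\lfloor n/2\rfloor$ i.i.d.\ terms, Hoeffding's lemma, and the optimised Chernoff bound).
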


\begin{proof}
  By \cref{lemma:skce_unbiased},
  $\Expect[\unbiasedestimator[k, g, \mathcal{D}]] = \squaredkernelmeasure[k, g]$.
  Moreover, by \cref{lemma:term_bound} we know that
  \begin{equation*}
    \sup_{x,x' \in \mathcal{X}, y,y' \in \{1,\ldots,m\}} \left|\langle \delta(x,y), k(g(x),g(x')) \delta(x',y')\rangle_{\mathbb{R}^m} \right| \leq B_{p;q}.
  \end{equation*}
  Thus the result follows from the bound on U-statistics by
  \citet[p.~25]{hoeffding63_probab_inequal_sums_bound_random_variab}.
\end{proof}

We can derive a hypothesis test using the unbiased estimator.

\begin{corollary}
  Let the assumptions of \cref{thm:unbiased_bound_uniform} be satisfied.

  A one-sided statistical test with test statistic
  $\unbiasedestimator[k, g, \mathcal{D}]$ and significance level
  $\alpha$ for the null hypothesis of model $g$ being calibrated in the strong
  sense has the acceptance region
  \begin{equation*}
    \unbiasedestimator[k, g, \mathcal{D}] < \frac{B_{p;q}}{\sqrt{\lfloor n / 2 \rfloor}} \sqrt{- 2\log{\alpha}}.
  \end{equation*}
\end{corollary}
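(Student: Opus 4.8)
The plan is to invert the exponential tail bound of \cref{thm:unbiased_bound_uniform} at the nominal level $\alpha$, exactly as the distribution-free test for $\widehat{\kernelmeasure}$ is obtained. First I would note that under the null hypothesis $\nullhypothesis$ the model $g$ is strongly calibrated. Since $K_{p;q} = \sup_{s,t}\|k(s,t)\|_{p;q} < \infty$ and all matrix norms on $\mathbb{R}^{m\times m}$ are equivalent, $\|k\|_{\mu_G,1} < \infty$, so $\squaredkernelmeasure[k,g]$ is well defined by \cref{lemma:meance}; and by \cref{thm:ce_zero} (applied to the unit ball of the RKHS, which lies in $L^1(\Delta^m,\mu_G;\mathbb{R}^m)$) we have $\squaredkernelmeasure[k,g] = \kernelmeasure^2[k,g] = 0$. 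Hence the one-sided deviation bound of \cref{thm:unbiased_bound_uniform} specializes, under $\nullhypothesis$, to
\begin{equation*}
  \Prob\Big[\unbiasedestimator[k, g, \mathcal{D}] \geq t \,\Big|\, \nullhypothesis\Big] \leq \exp\bigg(-\frac{\lfloor n/2 \rfloor t^2}{2 B_{p;q}^2}\bigg) \qquad \text{for all } t > 0.
\end{equation*}

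Next I would solve for the threshold that makes the right-hand side equal to $\alpha$. Setting $\exp\big(-\lfloor n/2\rfloor t^2 / (2 B_{p;q}^2)\big) = \alpha$ and solving for $t > 0$ (using $-\log\alpha > 0$ for $0 < \alpha < 1$) gives
\begin{equation*}
  t = \frac{B_{p;q}}{\sqrt{\lfloor n / 2 \rfloor}} \sqrt{-2\log\alpha}.
\end{equation*}
For this value of $t$ the test that rejects $\nullhypothesis$ whenever $\unbiasedestimator[k, g, \mathcal{D}] \geq t$ has type-I error at most $\alpha$, so its acceptance region is precisely $\unbiasedestimator[k, g, \mathcal{D}] < t$, which is the claimed region. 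The test is one-sided because the tail bound controls only the upper deviation, and a miscalibrated model inflates $\squaredkernelmeasure[k,g]$ upward.

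There is essentially no obstacle here: the argument is a one-line inversion of a concentration inequality already established in \cref{thm:unbiased_bound_uniform}. The only point worth spelling out is the verification that $\squaredkernelmeasure[k,g]$ is finite and vanishes under $\nullhypothesis$, which, as above, follows by combining \cref{lemma:meance} and \cref{thm:ce_zero} with the finiteness of $\|k\|_{\mu_G,1}$ implied by $K_{p;q} < \infty$.
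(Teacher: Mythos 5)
Your proof is correct and is essentially the argument the paper intends: the corollary is stated without an explicit proof precisely because it follows by inverting the tail bound of \cref{thm:unbiased_bound_uniform} at level $\alpha$, with $\squaredkernelmeasure[k,g]=0$ under $\nullhypothesis$ by \cref{thm:ce_zero}. Your extra remark verifying finiteness of $\squaredkernelmeasure$ via $K_{p;q}<\infty$ and \cref{lemma:meance} is sound, though not strictly needed once one observes that the theorem's bound already applies unconditionally and only $\squaredkernelmeasure=0$ is used.
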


Analogously we can obtain a bound for the linear estimator.

\begin{theorem}\label{thm:linear_bound_uniform}
  Let $k \colon \Delta^m \times \Delta^m \to \mathcal{L}(\mathbb{R}^m)$ be a
  kernel, and assume that $k(\cdot,t)u$ is measurable for all $t \in \Delta^m$
  and $u \in \mathbb{R}^m$, and
  $K_{p;q} \coloneqq \sup_{s,t \in \Delta^m} \|k(s,t)\|_{p;q}$ for some
  $1 \leq p,q \leq \infty$. Then for all $t > 0$
  \begin{equation*}
    \Prob\left[\linearestimator[k, g, \mathcal{D}] - \squaredkernelmeasure[k, g] \geq t \right] \leq \exp{\left(-\frac{\lfloor n / 2\rfloor t^2}{2 B_{p;q}^2}\right)}.
  \end{equation*}
  The same bound holds for
  $\Prob\left[\linearestimator[k, g, \mathcal{D}] - \squaredkernelmeasure[k, g] \leq -t \right]$.
\end{theorem}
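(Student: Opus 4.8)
The plan is to recognize $\linearestimator[k, g, \mathcal{D}]$ as the sample mean of independent, identically distributed, bounded random variables, and then apply Hoeffding's inequality directly.

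First, by \cref{lemma:skce_linear} the linear estimator is unbiased, so $\Expect[\linearestimator[k, g, \mathcal{D}]] = \squaredkernelmeasure[k, g]$; this pins down the centering term in the probability we wish to bound. Next, write $N \coloneqq \lfloor n/2\rfloor$ and, for $i = 1, \ldots, N$, set
\begin{equation*}
  Z_i \coloneqq \langle \delta(X_{2i-1}, Y_{2i-1}), k(g(X_{2i-1}), g(X_{2i})) \delta(X_{2i}, Y_{2i})\rangle_{\mathbb{R}^m},
\end{equation*}
so that $\linearestimator[k, g, \mathcal{D}] = N^{-1} \sum_{i=1}^N Z_i$. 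The crucial observation is that the pairs $\big((X_{2i-1}, Y_{2i-1}), (X_{2i}, Y_{2i})\big)$ for different $i$ consist of disjoint, i.i.d.\ elements of $\mathcal{D}$, so the $Z_i$ are i.i.d.\ copies of $\langle \delta(X, Y), k(g(X), g(X')) \delta(X', Y')\rangle_{\mathbb{R}^m}$ with $(X', Y')$ an independent copy of $(X, Y)$; this is exactly the point where the argument is simpler than for the quadratic unbiased estimator of \cref{thm:unbiased_bound_uniform}, whose summands are dependent. Finally, \cref{lemma:term_bound} gives $|Z_i| \leq B_{p;q}$ almost surely, so every $Z_i$ lies in the interval $[-B_{p;q}, B_{p;q}]$, which has length $2 B_{p;q}$.

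It then suffices to invoke Hoeffding's inequality for the average of $N$ independent random variables each confined to an interval of length $2 B_{p;q}$: for every $t > 0$,
\begin{equation*}
  \Prob\left[\linearestimator[k, g, \mathcal{D}] - \squaredkernelmeasure[k, g] \geq t\right]
  = \Prob\left[\sum_{i=1}^N \big(Z_i - \Expect[Z_i]\big) \geq N t\right]
  \leq \exp\left(-\frac{2 N^2 t^2}{N (2 B_{p;q})^2}\right) = \exp\left(-\frac{\lfloor n/2\rfloor t^2}{2 B_{p;q}^2}\right),
\end{equation*}
and the lower-tail bound follows by the symmetry of Hoeffding's inequality (equivalently, by replacing each $Z_i$ with $-Z_i$).

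I do not anticipate a genuine obstacle here: the only substantive point is noticing that, unlike the quadratic unbiased estimator, the linear estimator is an honest i.i.d.\ average over disjoint pairs, which reduces the claim to a textbook concentration bound combined with the almost-sure boundedness from \cref{lemma:term_bound} and the unbiasedness from \cref{lemma:skce_linear}. If one prefers to avoid citing Hoeffding's inequality as a black box, the same estimate can be obtained by the standard Chernoff argument using Hoeffding's lemma on each bounded summand.
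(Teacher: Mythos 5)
Your proof is correct and takes essentially the same route as the paper: both establish unbiasedness via \cref{lemma:skce_linear}, bound each summand by $B_{p;q}$ via \cref{lemma:term_bound}, and then invoke Hoeffding's inequality (Theorem~2 of Hoeffding, 1963) for the i.i.d.\ average over disjoint pairs. Your write-up simply makes explicit what the paper leaves implicit, namely the interval $[-B_{p;q}, B_{p;q}]$, the independence of the $Z_i$, and the arithmetic turning $2(Nt)^2 / (N (2B_{p;q})^2)$ into $\lfloor n/2\rfloor t^2 / (2B_{p;q}^2)$.
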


\begin{proof}
  By \cref{lemma:skce_linear},
  $\Expect[\linearestimator[k, g, \mathcal{D}]] = \squaredkernelmeasure[k, g]$.
  Moreover, by \cref{lemma:term_bound} we know that
  \begin{equation*}
    \sup_{x,x' \in \mathcal{X}, y,y' \in \{1,\ldots,m\}} \left|\langle \delta(x,y), k(g(x),g(x')) \delta(x',y')\rangle_{\mathbb{R}^m} \right| \leq B_{p;q}.
  \end{equation*}
  Thus by Hoeffding's inequality
  \citep[Theorem~2]{hoeffding63_probab_inequal_sums_bound_random_variab} for all
  $t > 0$
  \begin{equation*}
    \Prob\left[\linearestimator[k, g, \mathcal{D}] - \squaredkernelmeasure[k, g] \geq t \right] \leq \exp{\left(-\frac{\lfloor n / 2\rfloor t^2}{2 B_{p;q}^2}\right)}. \qedhere
  \end{equation*}
\end{proof}

Obviously this results yields another distribution-free hypothesis test.

\begin{corollary}
  Let the assumptions of \cref{thm:linear_bound_uniform} be satisfied.

  A one-sided statistical test with test statistic
  $\linearestimator[k, g, \mathcal{D}]$ and significance level
  $\alpha$ for the null hypothesis of model $g$ being calibrated in the strong
  sense has the acceptance region
  \begin{equation*}
    \linearestimator[k, g, \mathcal{D}] < \frac{B_{p;q}}{\sqrt{\lfloor n / 2 \rfloor}} \sqrt{- 2\log{\alpha}}.
  \end{equation*}
\end{corollary}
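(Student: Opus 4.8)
The plan is to read the acceptance region straight off the one-sided tail bound in \cref{thm:linear_bound_uniform}. First I would note that under the null hypothesis $\nullhypothesis$ the model $g$ is calibrated in the strong sense, so $\Delta = 0$ almost surely; applying \cref{thm:ce_zero} to the unit ball $\mathcal{F}$ of the RKHS associated with $k$ (which is the function class defining $\kernelmeasure$, and which lies in $L^1(\Delta^m,\mu_G;\mathbb{R}^m)$ by the argument used in \cref{lemma:meance}) gives $\kernelmeasure[k,g] = \measure[\mathcal{F},g] = 0$, hence $\squaredkernelmeasure[k,g] = 0$. Substituting this into \cref{thm:linear_bound_uniform} yields, for every $t > 0$,
\begin{equation*}
  \Prob\big[\linearestimator[k, g, \mathcal{D}] \geq t \,\big|\, \nullhypothesis\big] \leq \exp\bigg(-\frac{\lfloor n/2\rfloor t^2}{2 B_{p;q}^2}\bigg).
\end{equation*}

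Next I would calibrate the threshold. For a significance level $\alpha \in (0,1)$ I would set the right-hand side equal to $\alpha$ and solve for $t$, obtaining $t = B_{p;q}\,\lfloor n/2\rfloor^{-1/2}\sqrt{-2\log\alpha}$, which is a well-defined positive number since $\log\alpha < 0$ and $B_{p;q} < \infty$. With this $t$, the rejection region $\{\linearestimator[k,g,\mathcal{D}] \geq t\}$ has probability at most $\alpha$ under $\nullhypothesis$, so its complement
\begin{equation*}
  \linearestimator[k, g, \mathcal{D}] < \frac{B_{p;q}}{\sqrt{\lfloor n/2 \rfloor}}\sqrt{-2\log\alpha}
\end{equation*}
is the acceptance region of a test of significance level $\alpha$; the one-sidedness is inherited directly from the one-sided tail bound.

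I do not expect any real obstacle: the statement is essentially a restatement of \cref{thm:linear_bound_uniform}. The only points worth spelling out are that $K_{p;q} < \infty$ makes the threshold finite, that the measurability hypothesis on $k(\cdot,t)u$ ensures $\linearestimator[k,g,\mathcal{D}]$ is a genuine random variable so the probability above is well-defined, and that the bound---and hence the test---is only informative for $\alpha < 1$. If anything requires care it will be the bookkeeping that establishes $\squaredkernelmeasure[k,g] = 0$ under $\nullhypothesis$, but that is exactly \cref{thm:ce_zero} combined with \cref{lemma:meance}.
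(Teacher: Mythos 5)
Your proof is correct and follows exactly the (implicit) route the paper has in mind: the paper states this corollary without proof after \cref{thm:linear_bound_uniform}, treating it as an immediate rearrangement of the Hoeffding bound once one notes that $\squaredkernelmeasure[k,g]=0$ under the null. Your identification of that zero via \cref{thm:ce_zero}, the substitution into the one-sided tail bound, and the inversion of $\exp\bigl(-\lfloor n/2\rfloor t^2/(2B_{p;q}^2)\bigr)=\alpha$ to obtain the threshold are all exactly the intended steps.
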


\section{Comparisons}

\subsection{Expected calibration error and maximum calibration error}

For certain spaces of bounded functions the calibration error $\measure$ turns
out to be a form of the $\ECE$. In particular, the $\ECE$ with respect to the
cityblock distance, the total variation distance, and the squared Euclidean
distance are special cases of $\measure$. Choosing $p=1$ in the following
statement corresponds to the special case of the $\MCE$.

\begin{lemma}[$\ECE$ and $\MCE$ as special cases]\label{lemma:ce_ece}
  Let $1 \leq p \leq \infty$ with Hölder conjugate $p'$. If
  $\mathcal{F} = K^p(\Delta^m, \mu_G; \mathbb{R}^m)$, then
  $\measure[\mathcal{F}, g] = \|\delta\|_{\mu_G,p'}$.
\end{lemma}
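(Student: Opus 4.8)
The plan is to recognize the claim as the elementary fact that the closed unit ball of $L^p(\Delta^m,\mu_G;\mathbb{R}^m)$ norms $L^{p'}(\Delta^m,\mu_G;\mathbb{R}^m)$ via the dual pairing $(f,\delta)\mapsto\int_{\Delta^m}\langle\delta(t),f(t)\rangle_{\mathbb{R}^m}\,\mu_G(\mathrm{d}t)$. First I would rewrite the objective in \cref{def:ce_supremum}: since $\Delta=\delta(G)$, the pushforward formula gives $\Expect[\langle\Delta,f(G)\rangle_{\mathbb{R}^m}]=\int_{\Delta^m}\langle\delta(t),f(t)\rangle_{\mathbb{R}^m}\,\mu_G(\mathrm{d}t)$, so that $\measure[\mathcal{F},g]=\sup_{f\in K^p(\Delta^m,\mu_G;\mathbb{R}^m)}\int_{\Delta^m}\langle\delta(t),f(t)\rangle_{\mathbb{R}^m}\,\mu_G(\mathrm{d}t)$. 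I would also record up front that $\delta\in L^{p'}(\Delta^m,\mu_G;\mathbb{R}^m)$ with $\|\delta\|_{\mu_G,p'}\le1$: indeed $\|\delta\|_{\mu_G,\infty}\le1$ by \cref{remark:exists} and $\mu_G$ is a probability measure, so every quantity appearing below is finite. Here and throughout, $\mathbb{R}^m$ carries the $\ell_p$-norm inside $K^p$ and the $\ell_{p'}$-norm inside $\|\delta\|_{\mu_G,p'}$, following the conventions of the notation section.

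For the upper bound I would apply Hölder's inequality twice. Pointwise, $|\langle\delta(t),f(t)\rangle_{\mathbb{R}^m}|\le\|\delta(t)\|_{p'}\,\|f(t)\|_p$; integrating and applying Hölder on $L^{p'}(\mu_G)\times L^p(\mu_G)$ then yields $\int_{\Delta^m}|\langle\delta(t),f(t)\rangle_{\mathbb{R}^m}|\,\mu_G(\mathrm{d}t)\le\|\delta\|_{\mu_G,p'}\,\|f\|_{\mu_G,p}\le\|\delta\|_{\mu_G,p'}$ for every $f\in K^p$. Hence $\measure[\mathcal{F},g]\le\|\delta\|_{\mu_G,p'}$.

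For the matching lower bound I would exhibit (near-)maximizers, splitting by the value of $p$. If $\|\delta\|_{\mu_G,p'}=0$ then $\langle\delta(t),f(t)\rangle_{\mathbb{R}^m}=0$ for $\mu_G$-a.e.\ $t$ and all $f$, so both sides are $0$; assume $\|\delta\|_{\mu_G,p'}>0$. For $1<p<\infty$, set $f(t)_i\coloneqq|\delta(t)_i|^{p'-1}\operatorname{sgn}\delta(t)_i$; this $f$ is measurable, and the equality case of the $\ell_{p'}$--$\ell_p$ Hölder inequality gives $\langle\delta(t),f(t)\rangle_{\mathbb{R}^m}=\|\delta(t)\|_{p'}^{p'}$ and $\|f(t)\|_p=\|\delta(t)\|_{p'}^{p'-1}$ (using $(p'-1)p=p'$), whence $\|f\|_{\mu_G,p}=\|\delta\|_{\mu_G,p'}^{p'-1}<\infty$; the normalization $\tilde f\coloneqq f/\|f\|_{\mu_G,p}\in K^p$ then satisfies $\int_{\Delta^m}\langle\delta(t),\tilde f(t)\rangle_{\mathbb{R}^m}\,\mu_G(\mathrm{d}t)=\|\delta\|_{\mu_G,p'}^{p'}/\|\delta\|_{\mu_G,p'}^{p'-1}=\|\delta\|_{\mu_G,p'}$. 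For $p=\infty$ (so $p'=1$), take $f(t)_i\coloneqq\operatorname{sgn}\delta(t)_i$, which lies in $K^\infty$ and gives $\int\langle\delta,f\rangle\,\mathrm{d}\mu_G=\|\delta\|_{\mu_G,1}$. For $p=1$ (so $p'=\infty$) the supremum need not be attained: for each $\epsilon>0$ the set $A_\epsilon\coloneqq\{t:\|\delta(t)\|_\infty>\|\delta\|_{\mu_G,\infty}-\epsilon\}$ has positive $\mu_G$-measure; choosing (via a measurable selection, e.g.\ the least maximizing index) a coordinate $\iota(t)$ with $|\delta(t)_{\iota(t)}|=\|\delta(t)\|_\infty$ and setting $f(t)\coloneqq\mu_G(A_\epsilon)^{-1}\operatorname{sgn}(\delta(t)_{\iota(t)})\,e_{\iota(t)}\,\mathbbm{1}_{A_\epsilon}(t)$ gives $f\in K^1$ with $\int\langle\delta,f\rangle\,\mathrm{d}\mu_G\ge\|\delta\|_{\mu_G,\infty}-\epsilon$, and $\epsilon\downarrow0$ concludes. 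In every case $\measure[\mathcal{F},g]\ge\|\delta\|_{\mu_G,p'}$, which combined with the upper bound gives equality.

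The main obstacle is the endpoint $p=1$: there no exact maximizer exists, so one must run the $\epsilon$-approximation against the essential supremum and take care that the coordinate-selection map $\iota$ (and hence $f$) is measurable. The only other thing needing care is bookkeeping of which $\ell_r$-norm sits on $\mathbb{R}^m$ at each step, since it is precisely the pairing of $\ell_p$ with $\ell_{p'}$ that makes the pointwise Hölder step tight and lets the constructed $f$ saturate it.
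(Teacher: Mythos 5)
Your proof is correct, and it takes a genuinely different route from the paper's. You establish the $L^p$--$L^{p'}$ duality directly in the vector-valued setting: the upper bound from two applications of Hölder's inequality (once on $\mathbb{R}^m$ pointwise, once over $\mu_G$), and the matching lower bound by exhibiting explicit (near-)maximizers, split into the three regimes $1<p<\infty$, $p=\infty$, and $p=1$, with an $\epsilon$-argument and a measurable coordinate selection to handle the failure of attainment at the endpoint $p=1$. The paper instead performs a reduction: it flattens $L^p(\Delta^m,\mu_G;\mathbb{R}^m,\ell_p)$ isometrically onto $L^p$ over the product space $\Delta^m\times\{1,\ldots,m\}$ equipped with $\mu_G\otimes\nu$ ($\nu$ the counting measure) and then invokes, as a cited fact, the extremal case of Hölder's inequality---i.e., the isometric duality $(L^p)^*\cong L^{p'}$ on a $\sigma$-finite measure space, which holds down to $p=1$ precisely because $\mu_G\otimes\nu$ is $\sigma$-finite. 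The paper's route is shorter and delegates the delicate $p=1$ endpoint to a standard theorem, whereas yours is self-contained and makes visible exactly which $f$ saturates the bound; both depend on the same bookkeeping convention (from the notation section) that $\mathbb{R}^m$ carries $\ell_p$ inside $K^p$ and $\ell_{p'}$ inside $\|\delta\|_{\mu_G,p'}$, which is what makes the pointwise Hölder step tight.
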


\begin{proof}
  Note that $\mathcal{F}$ is well defined since
  $\mathcal{F} \subset L^1(\Delta^m, \mu_G; \mathbb{R}^m)$.

  The statement follows from the extremal case of Hölder's inequality. More
  explicitly, let $\nu$ denote the counting measure on $\{1,\ldots,m\}$. Since
  both $\mu_G$ and $\nu$ are $\sigma$-finite measures, the product measure
  $\mu_G \otimes \nu$ on the product space
  $B \coloneqq \Delta^m \times \{1,\ldots,m\}$ is uniquely defined and
  $\sigma$-finite. Define
  $\tilde{\delta}(t, k) \coloneqq \delta_k(t)$ for all $(t,k) \in B$. Then we
  can rewrite
  \begin{equation*}
    \begin{split}
      \measure[\mathcal{F}, g] &= \sup_{f \in K^p(\Delta^m, \mu_G; \mathbb{R}^m)} \int_{\Delta^m} \langle \delta(x), f(x) \rangle_{\mathbb{R}^m} \, \mu_G(\mathrm{d}x) \\
      &= \sup_{f \in K^p(B, \mu_G \otimes \nu; \mathbb{R}^m)} \int_B | \tilde{\delta}(x, k) f(x, k) | \, (\mu_G \times \nu)(\mathrm{d}(x, k)) \\
      &= \|\tilde{\delta}\|_{\mu_G \otimes \nu, p'} = \|\delta\|_{\mu_G, p'},
    \end{split}
  \end{equation*}
  to make the reasoning more apparent. Since $\mu_G \otimes \nu$ is
  $\sigma$-finite the statement holds even for $p = 1$.
\end{proof}

\subsection{Maximum mean calibration error}

The so-called
\textquote[\citet{kumar18_train_calib_measur_neural_networ}]{correctness score}
$c(x, y)$ of an input $x$ and a class $y$ is defined as
$c(x, y) = \mathbbm{1}_{\{\argmax_{y'} g_{y'}(x)\}}(y)$. It is $1$ if class $y$
is equal to the class that is most likely for input $x$
according to model $g$, and $0$ otherwise. Let
$k \colon [0,1] \times [0,1] \to \mathbb{R}$ be a scalar-valued kernel. Then the
maximum mean calibration error $\MMCE[k,g]$ of a model $g$ with respect to
kernel $k$ is defined\footnote{For illustrative purposes we present a variation
  of the original definition of the $\MMCE$ by
  \citet{kumar18_train_calib_measur_neural_networ}.} as
\begin{multline*}
  \MMCE[k, g] \\
  = {\bigg(\Expect[(c(X, Y) - g_{\max}(X))(c(X',Y') - g_{\max}(X')) k(g_{\max}(X), g_{\max}(X'))]\bigg)}^{1/2},
\end{multline*}
where $(X',Y')$ is an independent copy of $(X,Y)$.

\Cref{ex:mmce_kumar_special} shows that the $\kernelmeasure$ allows exactly the
same analysis of the common notion of calibration as the $\MMCE$ proposed by
\citet{kumar18_train_calib_measur_neural_networ} by applying it to a model that
is reduced to the most confident predictions.

\begin{example}[MMCE as special case]\label{ex:mmce_kumar_special}
  Reduce model $g$ to its most confident predictions by defining a new model
  $\tilde{g}$ with $\tilde{g}(x) \coloneqq (g_{\max}(x), 1 - g_{\max}(x))$. The
  predictions $\tilde{g}(x)$ of this new model can be viewed as a model of the
  conditional probabilities
  $(\Prob[\tilde{Y} = 1 \given X = x], \Prob[\tilde{Y} = 2 \given X = x])$ in a
  classification problem with inputs $X$ and classes
  $\tilde{Y} \coloneqq 2 - c(X, Y)$.\footnote{In the
    words of \citet{vaicenavicius19_evaluat}, $\tilde{g}$ is induced by the
    maximum calibration lens.}

  Let $k \colon [0,1] \times [0,1] \to \mathbb{R}$ be a scalar-valued kernel.
  Define a matrix-valued function
  $\tilde{k} \colon \Delta^2 \times \Delta^2 \to \mathbb{R}^{2 \times 2}$ by
  \begin{equation*}
    \tilde{k}((p_1, p_2), (q_1, q_2)) = \frac{k(p_1, q_1)}{2} \mathbf{I}_2.
  \end{equation*}

  Then by \citet[Example~1 and Theorem~14]{caponnetto08_univer_multi_task_kernel}
  $\tilde{k}$ is a matrix-valued kernel and, if it is continuous, it is universal
  if and only if $k$ is universal. By construction
  $e_{\tilde{Y}} - \tilde{g}(X) = (c(X, Y) - g_{\max}(X)) (1, -1)$, and hence
  \begin{equation*}
    \begin{split}
      \squaredkernelmeasure[\tilde{k}, \tilde{g}] &= \Expect[{(e_{\tilde{Y}} - \tilde{g}(X))}^\intercal \tilde{k}(\tilde{g}(X), \tilde{g}(X')) (e_{\tilde{Y}} - \tilde{g}(X'))] \\
      &= \Expect[(c(X, Y) - g_{\max}(X)) (c(X', Y') - g_{\max}(X')) k(g_{\max}(X), g_{\max}(X'))] \\
      &= \MMCE^2[k, g],
    \end{split}
  \end{equation*}
  where $(X', \tilde{Y}')$ and $(X', Y')$ are independent copies of
  $(X, \tilde{Y})$ and $(X, Y)$, respectively.
\end{example}

\clearpage
\section{Experiments}\label{sec:experiment_app}

The Julia implementation for all experiments is available online at
\url{https://github.com/devmotion/CalibrationPaper}. The code is written and
documented with the literate programming tool
\href{https://github.com/JunoLab/Weave.jl}{Weave.jl}~\citep{pastell17_weave}
and exported to HTML files that include results and figures.

\subsection{Calibration errors}

In our experiments we evaluate the proposed estimators of the
$\squaredkernelmeasure$ and compare them with two estimators of the $\ECE$.

\subsubsection{Expected calibration error}

As commonly done~\citep{broecker07_increas_reliab_reliab_diagr,guo17_calib_moder_neural_networ,vaicenavicius19_evaluat},
we study the $\ECE$ with respect to the total variation distance.

The standard histogram-regression estimator of the $\ECE$ is based on a
partitioning of the probability simplex
\citep{guo17_calib_moder_neural_networ,vaicenavicius19_evaluat}. In our experiments
we use two different partitioning schemes. The first scheme is the commonly
used partitioning into bins of uniform size, based on splitting the predictions
of each class into 10 bins. The other partitioning is data-dependent: the data
set is split iteratively along the median of the class predictions with the
highest variance such that the number of samples in each bin is at least 5.

\subsubsection{Kernel calibration error}

We consider the matrix-valued kernel
$k(x, y) = \exp{(-\|x - y\| / \nu)} \mathbf{I}_m$ with kernel bandwidth
$\nu > 0$. Analogously to the $\ECE$, we take the total variation distance as
distance measure. Moreover, we choose the bandwidth adaptively with the
so-called median heuristic. The median heuristic is a common heuristic that
proposes to set the bandwidth to the median of the pairwise distances of samples
in a, not necessarily separate, validation data set
\citep[see, e.g.,][]{gretton12_kernel_two_sampl_test}.

\subsection{Generative models}\label{sec:generative_models}

Since the considered calibration errors depend only on the predictions and
labels, we specify generative models of labeled predictions $(g(X), Y)$ without
considering $X$. Instead we only specify the distribution of the predictions
$g(X)$ and the conditional distribution of $Y$ given $g(X) = g(x)$. This setup
allows us to design calibrated and uncalibrated models in a straightforward
way, which enables clean numerical evaluations with known calibration errors.

We study the generative model
\begin{align*}
  g(X) &\sim \Dir(\alpha), \\
  Z &\sim \Ber(\pi), \\
  Y \given Z = 1,\, g(X) = \gamma &\sim \Categorical(\beta),\\
  Y \given Z = 0,\, g(X) = \gamma &\sim \Categorical(\gamma),
\end{align*}
with parameters $\alpha \in \mathbb{R}_{>0}^m$, $\beta \in \Delta^m$, and
$\pi \in [0,1]$. The model is calibrated if and only if $\pi = 0$,
since for all labels $y \in \{1,\ldots,m\}$ we obtain
\begin{equation*}
  \Prob[Y = y \given g(X)] = \pi \beta_y + (1 - \pi) g_y(X),
\end{equation*}
and hence $\Delta = \pi (\beta - g(X)) = 0$ almost surely if and only if
$\pi = 0$.

By setting $\alpha = (1,\ldots,1)$ we can model uniformly distributed
predictions, and by decreasing the magnitude of $\alpha$ we can push the
predictions towards the edges of the probability simplex, mimicking the
predictions of a trained model
\citep[cf., e.g.,][]{vaicenavicius19_evaluat}.

\subsubsection{Theoretical expected calibration error}\label{sec:theoretical_ece}

For the considered model, the $\ECE$ with respect to the total variation
distance is
\begin{equation*}
  \begin{split}
    \ECE[\|.\|_{\mathrm{TV}},g] &= \Expect[\|\Delta\|_{\mathrm{TV}}] = \pi \Expect[\|\beta - g(X)\|_{\mathrm{TV}}] = \pi/2 \sum_{i=1}^m \Expect[|\beta_i - g_i(X)|] \\
    &= \frac{\pi}{2} \sum_{i=1}^m \left(\left(\frac{\alpha_i}{\alpha_0} - \beta_i\right) \left(1 - \frac{2 B(\beta_i; \alpha_i, \alpha_0 - \alpha_i)}{B(\alpha_i,\alpha_0-\alpha_i)}\right) + \frac{2\beta_i^{\alpha_i} {(1-\beta_i)}^{\alpha_0-\alpha_i}}{\alpha_0 B(\alpha_i,\alpha_0-\alpha_i)}\right),
  \end{split}
\end{equation*}
where $\alpha_0 \coloneqq \sum_{i=1}^m \alpha_i$ and $B(x; a,b)$ denotes the
incomplete Beta function $\int_0^x t^{a-1} {(1-t)}^{b-1} \,\mathrm{d}t$. By
exploiting $\sum_{i=1}^m \beta_i = 1$, we get
\begin{equation*}
  \ECE[\|.\|_{\mathrm{TV}},g] = \frac{\pi}{\alpha_0} \sum_{i=1}^m \frac{(\alpha_0 \beta_i - \alpha_i)B(\beta_i; \alpha_i, \alpha_0-\alpha_i) + \beta_i^{\alpha_i} {(1-\beta_i)}^{\alpha_0-\alpha_i}}{B(\alpha_i,\alpha_0-\alpha_i)}.
\end{equation*}
Let $I(x; a, b) \coloneqq B(x; a, b) / B(a, b)$ denote the regularized
incomplete Beta function. Due to the identity
$x^a (1-x)^b / B(a, b) = a (I(x; a, b) - I(x; a+1, b)$, we obtain
\begin{equation*}
  \ECE[\|.\|_{\mathrm{TV}},g] = \pi \sum_{i=1}^m \left(\beta_i I(\beta_i; \alpha_i, \alpha_0 - \alpha_i) - \frac{\alpha_i}{\alpha_0} I(\beta_i; \alpha_i + 1, \alpha_0 - \alpha_i)\right).
\end{equation*}

If $\alpha = (a, \ldots, a)$ for some $a > 0$, then
\begin{equation*}
  \ECE[\|.\|_{\mathrm{TV}},g] = \pi \sum_{i=1}^m \left(\beta_i I(\beta_i; a, (m-1)a) - m^{-1} I(\beta_i; a + 1, (m-1)a)\right).
\end{equation*}
If $\beta = e_j$ for some $j \in \{1,\ldots,m\}$ we get
\begin{equation*}
  \begin{split}
  \ECE[\|.\|_{\mathrm{TV}},g] &= \pi \left(I(1; a, (m-1)a) - m^{-1} I(1; a + 1, (m-1)a)\right) \\
  &= \pi(1 - m^{-1}) = \frac{\pi(m-1)}{m},
  \end{split}
\end{equation*}
whereas if $\beta = (1/m, \ldots, 1/m)$ we obtain
\begin{equation*}
  \begin{split}
  \ECE[\|.\|_{\mathrm{TV}},g] &= \pi \left(I(m^{-1}; a, (m-1)a) - I(m^{-1}; a + 1, (m-1)a)\right) \\
  &= \pi \frac{m^{-a}{(1-m^{-1})}^{(m-1)a}}{a B(a, (m-1)a)} = \frac{\pi}{a B(a, (m-1)a)} {\left(\frac{{(m-1)}^{m-1}}{m^m}\right)}^a.
  \end{split}
\end{equation*}
We see that, as the number of classes goes to infinity, the $\ECE$ with respect
to the total variation distance tends to $\pi$ and
$\pi \exp{(-a)} a^{a-1} / \Gamma(a)$, respectively.

\subsubsection{Mean total variation distance}

For the considered generative models, we can compute the mean total variation
distance $\Expect[\|X - X'\|_{\mathrm{TV}}]$, which does not depend on the number of
available samples (but, of course, is usually not available). If $X$ and $X'$
are i.i.d.\ according to $\Dir(\alpha)$ with parameter
$\alpha \in \mathbb{R}^m_{>0}$, then their mean total variation distance is
\begin{equation*}
  \begin{split}
    \Expect[\|X - X'\|_{\mathrm{TV}}] &= 1/2 \sum_{i=1}^m \Expect[|X_i - X'_i|] \\
    &= \sum_{i=1}^m \Expect[X_i - X'_i \given X_i > X'_i] \\
    &= \frac{2 B(\alpha_0, \alpha_0)}{\alpha_0} \sum_{i=1}^m {[B(\alpha_i, \alpha_i) B(\alpha_0 - \alpha_i, \alpha_0 - \alpha_i)]}^{-1},
  \end{split}
\end{equation*}
where $\alpha_0 \coloneqq \sum_{i=1}^m \alpha_i$. We conduct additional
experiments in which we set the kernel bandwidth to the mean total variation
distance.

\clearpage

\subsubsection{Calibration error estimates: Additional figures}\label{sec:additional_estimates}

\begin{figure}[!htbp]
  \begin{center}
    \tikzsetnextfilename{errors_ECE_uniform_alpha_i=1.0_only_firstclass=false}
    \input{figures/errors/ECE_uniform_alpha_i=1.0_only_firstclass=false.tex}
    \caption{Distribution of $\widehat{\ECE}$ with bins of uniform size, evaluated on $10^4$ data sets of 250 labeled
      predictions that are randomly sampled from generative models with
      $\alpha = (1,\ldots,1)$ and $\beta = (1/m, \ldots, 1/m)$.}
  \end{center}
\end{figure}

\begin{figure}[!htbp]
  \begin{center}
    \tikzsetnextfilename{errors_ECE_uniform_alpha_i=0.1_only_firstclass=false}
    \input{figures/errors/ECE_uniform_alpha_i=0.1_only_firstclass=false.tex}
    \caption{Distribution of $\widehat{\ECE}$ with bins of uniform size, evaluated on $10^4$ data sets of 250 labeled
      predictions that are randomly sampled from generative models with
      $\alpha = (0.1,\ldots,0.1)$ and $\beta = (1/m, \ldots, 1/m)$.}
  \end{center}
\end{figure}

\begin{figure}[!htbp]
  \begin{center}
    \tikzsetnextfilename{errors_ECE_uniform_alpha_i=1.0_only_firstclass=true}
    \input{figures/errors/ECE_uniform_alpha_i=1.0_only_firstclass=true.tex}
    \caption{Distribution of $\widehat{\ECE}$ with bins of uniform size, evaluated on $10^4$ data sets of 250 labeled
      predictions that are randomly sampled from generative models with
      $\alpha = (1,\ldots,1)$ and $\beta = (1, 0,\ldots, 0)$.}
  \end{center}
\end{figure}

\begin{figure}[!htbp]
  \begin{center}
    \tikzsetnextfilename{errors_ECE_uniform_alpha_i=0.1_only_firstclass=true}
    \input{figures/errors/ECE_uniform_alpha_i=0.1_only_firstclass=true.tex}
    \caption{Distribution of $\widehat{\ECE}$ with bins of uniform size, evaluated on $10^4$ data sets of 250 labeled
      predictions that are randomly sampled from generative models with
      $\alpha = (0.1,\ldots,0.1)$ and $\beta = (1, 0, \ldots, 0)$.}
  \end{center}
\end{figure}%

\begin{figure}[!htbp]
  \begin{center}
    \tikzsetnextfilename{errors_ECE_dynamic_alpha_i=1.0_only_firstclass=false}
    \input{figures/errors/ECE_dynamic_alpha_i=1.0_only_firstclass=false.tex}
    \caption{Distribution of $\widehat{\ECE}$ with data-dependent bins, evaluated on $10^4$ data sets of 250 labeled
      predictions that are randomly sampled from generative models with
      $\alpha = (1,\ldots,1)$ and $\beta = (1/m, \ldots, 1/m)$.}
  \end{center}
\end{figure}

\begin{figure}[!htbp]
  \begin{center}
    \tikzsetnextfilename{errors_ECE_dynamic_alpha_i=0.1_only_firstclass=false}
    \input{figures/errors/ECE_dynamic_alpha_i=0.1_only_firstclass=false.tex}
    \caption{Distribution of $\widehat{\ECE}$ with data-dependent bins, evaluated on $10^4$ data sets of 250 labeled
      predictions that are randomly sampled from generative models with
      $\alpha = (0.1,\ldots,0.1)$ and $\beta = (1/m, \ldots, 1/m)$.}
  \end{center}
\end{figure}

\begin{figure}[!htbp]
  \begin{center}
    \tikzsetnextfilename{errors_ECE_dynamic_alpha_i=1.0_only_firstclass=true}
    \input{figures/errors/ECE_dynamic_alpha_i=1.0_only_firstclass=true.tex}
    \caption{Distribution of $\widehat{\ECE}$ with data-dependent bins, evaluated on $10^4$ data sets of 250 labeled
      predictions that are randomly sampled from generative models with
      $\alpha = (1,\ldots,1)$ and $\beta = (1, 0,\ldots, 0)$.}
  \end{center}
\end{figure}

\begin{figure}[!htbp]
  \begin{center}
    \tikzsetnextfilename{errors_ECE_dynamic_alpha_i=0.1_only_firstclass=true}
    \input{figures/errors/ECE_dynamic_alpha_i=0.1_only_firstclass=true.tex}
    \caption{Distribution of $\widehat{\ECE}$ with data-dependent bins, evaluated on $10^4$ data sets of 250 labeled
      predictions that are randomly sampled from generative models with
      $\alpha = (0.1,\ldots,0.1)$ and $\beta = (1, 0, \ldots, 0)$.}
  \end{center}
\end{figure}%

\begin{figure}[!htbp]
  \begin{center}
    \tikzsetnextfilename{errors_SKCEb_median_alpha_i=1.0_only_firstclass=false}
    \input{figures/errors/SKCEb_median_alpha_i=1.0_only_firstclass=false.tex}
    \caption{Distribution of $\biasedestimator$ with the median heuristic, evaluated on $10^4$ data sets of 250 labeled
      predictions that are randomly sampled from generative models with
      $\alpha = (1,\ldots,1)$ and $\beta = (1/m, \ldots, 1/m)$.}
  \end{center}
\end{figure}

\begin{figure}[!htbp]
  \begin{center}
    \tikzsetnextfilename{errors_SKCEb_median_alpha_i=0.1_only_firstclass=false}
    \input{figures/errors/SKCEb_median_alpha_i=0.1_only_firstclass=false.tex}
    \caption{Distribution of $\biasedestimator$ with the median heuristic, evaluated on $10^4$ data sets of 250 labeled
      predictions that are randomly sampled from generative models with
      $\alpha = (0.1,\ldots,0.1)$ and $\beta = (1/m, \ldots, 1/m)$.}
  \end{center}
\end{figure}

\begin{figure}[!htbp]
  \begin{center}
    \tikzsetnextfilename{errors_SKCEb_median_alpha_i=1.0_only_firstclass=true}
    \input{figures/errors/SKCEb_median_alpha_i=1.0_only_firstclass=true.tex}
    \caption{Distribution of $\biasedestimator$ with the median heuristic, evaluated on $10^4$ data sets of 250 labeled
      predictions that are randomly sampled from generative models with
      $\alpha = (1,\ldots,1)$ and $\beta = (1, 0,\ldots, 0)$.}
  \end{center}
\end{figure}

\begin{figure}[!htbp]
  \begin{center}
    \tikzsetnextfilename{errors_SKCEb_median_alpha_i=0.1_only_firstclass=true}
    \input{figures/errors/SKCEb_median_alpha_i=0.1_only_firstclass=true.tex}
    \caption{Distribution of $\biasedestimator$ with the median heuristic, evaluated on $10^4$ data sets of 250 labeled
      predictions that are randomly sampled from generative models with
      $\alpha = (0.1,\ldots,0.1)$ and $\beta = (1, 0, \ldots, 0)$.}
  \end{center}
\end{figure}%

\begin{figure}[!htbp]
  \begin{center}
    \tikzsetnextfilename{errors_SKCEb_mean_alpha_i=1.0_only_firstclass=false}
    \input{figures/errors/SKCEb_mean_alpha_i=1.0_only_firstclass=false.tex}
    \caption{Distribution of $\biasedestimator$ with the mean total variation distance, evaluated on $10^4$ data sets of 250 labeled
      predictions that are randomly sampled from generative models with
      $\alpha = (1,\ldots,1)$ and $\beta = (1/m, \ldots, 1/m)$.}
  \end{center}
\end{figure}

\begin{figure}[!htbp]
  \begin{center}
    \tikzsetnextfilename{errors_SKCEb_mean_alpha_i=0.1_only_firstclass=false}
    \input{figures/errors/SKCEb_mean_alpha_i=0.1_only_firstclass=false.tex}
    \caption{Distribution of $\biasedestimator$ with the mean total variation distance, evaluated on $10^4$ data sets of 250 labeled
      predictions that are randomly sampled from generative models with
      $\alpha = (0.1,\ldots,0.1)$ and $\beta = (1/m, \ldots, 1/m)$.}
  \end{center}
\end{figure}

\begin{figure}[!htbp]
  \begin{center}
    \tikzsetnextfilename{errors_SKCEb_mean_alpha_i=1.0_only_firstclass=true}
    \input{figures/errors/SKCEb_mean_alpha_i=1.0_only_firstclass=true.tex}
    \caption{Distribution of $\biasedestimator$ with the mean total variation distance, evaluated on $10^4$ data sets of 250 labeled
      predictions that are randomly sampled from generative models with
      $\alpha = (1,\ldots,1)$ and $\beta = (1, 0,\ldots, 0)$.}
  \end{center}
\end{figure}

\begin{figure}[!htbp]
  \begin{center}
    \tikzsetnextfilename{errors_SKCEb_mean_alpha_i=0.1_only_firstclass=true}
    \input{figures/errors/SKCEb_mean_alpha_i=0.1_only_firstclass=true.tex}
    \caption{Distribution of $\biasedestimator$ with the mean total variation distance, evaluated on $10^4$ data sets of 250 labeled
      predictions that are randomly sampled from generative models with
      $\alpha = (0.1,\ldots,0.1)$ and $\beta = (1, 0, \ldots, 0)$.}
  \end{center}
\end{figure}%

\begin{figure}[!htbp]
  \begin{center}
    \tikzsetnextfilename{errors_SKCEuq_median_alpha_i=1.0_only_firstclass=false}
    \input{figures/errors/SKCEuq_median_alpha_i=1.0_only_firstclass=false.tex}
    \caption{Distribution of $\unbiasedestimator$ with the median heuristic, evaluated on $10^4$ data sets of 250 labeled
      predictions that are randomly sampled from generative models with
      $\alpha = (1,\ldots,1)$ and $\beta = (1/m, \ldots, 1/m)$.}
  \end{center}
\end{figure}

\begin{figure}[!htbp]
  \begin{center}
    \tikzsetnextfilename{errors_SKCEuq_median_alpha_i=0.1_only_firstclass=false}
    \input{figures/errors/SKCEuq_median_alpha_i=0.1_only_firstclass=false.tex}
    \caption{Distribution of $\unbiasedestimator$ with the median heuristic, evaluated on $10^4$ data sets of 250 labeled
      predictions that are randomly sampled from generative models with
      $\alpha = (0.1,\ldots,0.1)$ and $\beta = (1/m, \ldots, 1/m)$.}
  \end{center}
\end{figure}

\begin{figure}[!htbp]
  \begin{center}
    \tikzsetnextfilename{errors_SKCEuq_median_alpha_i=1.0_only_firstclass=true}
    \input{figures/errors/SKCEuq_median_alpha_i=1.0_only_firstclass=true.tex}
    \caption{Distribution of $\unbiasedestimator$ with the median heuristic, evaluated on $10^4$ data sets of 250 labeled
      predictions that are randomly sampled from generative models with
      $\alpha = (1,\ldots,1)$ and $\beta = (1, 0,\ldots, 0)$.}
  \end{center}
\end{figure}

\begin{figure}[!htbp]
  \begin{center}
    \tikzsetnextfilename{errors_SKCEuq_median_alpha_i=0.1_only_firstclass=true}
    \input{figures/errors/SKCEuq_median_alpha_i=0.1_only_firstclass=true.tex}
    \caption{Distribution of $\unbiasedestimator$ with the median heuristic, evaluated on $10^4$ data sets of 250 labeled
      predictions that are randomly sampled from generative models with
      $\alpha = (0.1,\ldots,0.1)$ and $\beta = (1, 0, \ldots, 0)$.}
  \end{center}
\end{figure}%

\begin{figure}[!htbp]
  \begin{center}
    \tikzsetnextfilename{errors_SKCEuq_mean_alpha_i=1.0_only_firstclass=false}
    \input{figures/errors/SKCEuq_mean_alpha_i=1.0_only_firstclass=false.tex}
    \caption{Distribution of $\unbiasedestimator$ with the mean total variation distance, evaluated on $10^4$ data sets of 250 labeled
      predictions that are randomly sampled from generative models with
      $\alpha = (1,\ldots,1)$ and $\beta = (1/m, \ldots, 1/m)$.}
  \end{center}
\end{figure}

\begin{figure}[!htbp]
  \begin{center}
    \tikzsetnextfilename{errors_SKCEuq_mean_alpha_i=0.1_only_firstclass=false}
    \input{figures/errors/SKCEuq_mean_alpha_i=0.1_only_firstclass=false.tex}
    \caption{Distribution of $\unbiasedestimator$ with the mean total variation distance, evaluated on $10^4$ data sets of 250 labeled
      predictions that are randomly sampled from generative models with
      $\alpha = (0.1,\ldots,0.1)$ and $\beta = (1/m, \ldots, 1/m)$.}
  \end{center}
\end{figure}

\begin{figure}[!htbp]
  \begin{center}
    \tikzsetnextfilename{errors_SKCEuq_mean_alpha_i=1.0_only_firstclass=true}
    \input{figures/errors/SKCEuq_mean_alpha_i=1.0_only_firstclass=true.tex}
    \caption{Distribution of $\unbiasedestimator$ with the mean total variation distance, evaluated on $10^4$ data sets of 250 labeled
      predictions that are randomly sampled from generative models with
      $\alpha = (1,\ldots,1)$ and $\beta = (1, 0,\ldots, 0)$.}
  \end{center}
\end{figure}

\begin{figure}[!htbp]
  \begin{center}
    \tikzsetnextfilename{errors_SKCEuq_mean_alpha_i=0.1_only_firstclass=true}
    \input{figures/errors/SKCEuq_mean_alpha_i=0.1_only_firstclass=true.tex}
    \caption{Distribution of $\unbiasedestimator$ with the mean total variation distance, evaluated on $10^4$ data sets of 250 labeled
      predictions that are randomly sampled from generative models with
      $\alpha = (0.1,\ldots,0.1)$ and $\beta = (1, 0, \ldots, 0)$.}
  \end{center}
\end{figure}%

\begin{figure}[!htbp]
  \begin{center}
    \tikzsetnextfilename{errors_SKCEul_median_alpha_i=1.0_only_firstclass=false}
    \input{figures/errors/SKCEul_median_alpha_i=1.0_only_firstclass=false.tex}
    \caption{Distribution of $\linearestimator$ with the median heuristic, evaluated on $10^4$ data sets of 250 labeled
      predictions that are randomly sampled from generative models with
      $\alpha = (1,\ldots,1)$ and $\beta = (1/m, \ldots, 1/m)$.}
  \end{center}
\end{figure}

\begin{figure}[!htbp]
  \begin{center}
    \tikzsetnextfilename{errors_SKCEul_median_alpha_i=0.1_only_firstclass=false}
    \input{figures/errors/SKCEul_median_alpha_i=0.1_only_firstclass=false.tex}
    \caption{Distribution of $\linearestimator$ with the median heuristic, evaluated on $10^4$ data sets of 250 labeled
      predictions that are randomly sampled from generative models with
      $\alpha = (0.1,\ldots,0.1)$ and $\beta = (1/m, \ldots, 1/m)$.}
  \end{center}
\end{figure}

\begin{figure}[!htbp]
  \begin{center}
    \tikzsetnextfilename{errors_SKCEul_median_alpha_i=1.0_only_firstclass=true}
    \input{figures/errors/SKCEul_median_alpha_i=1.0_only_firstclass=true.tex}
    \caption{Distribution of $\linearestimator$ with the median heuristic, evaluated on $10^4$ data sets of 250 labeled
      predictions that are randomly sampled from generative models with
      $\alpha = (1,\ldots,1)$ and $\beta = (1, 0,\ldots, 0)$.}
  \end{center}
\end{figure}

\begin{figure}[!htbp]
  \begin{center}
    \tikzsetnextfilename{errors_SKCEul_median_alpha_i=0.1_only_firstclass=true}
    \input{figures/errors/SKCEul_median_alpha_i=0.1_only_firstclass=true.tex}
    \caption{Distribution of $\linearestimator$ with the median heuristic, evaluated on $10^4$ data sets of 250 labeled
      predictions that are randomly sampled from generative models with
      $\alpha = (0.1,\ldots,0.1)$ and $\beta = (1, 0, \ldots, 0)$.}
  \end{center}
\end{figure}%

\begin{figure}[!htbp]
  \begin{center}
    \tikzsetnextfilename{errors_SKCEul_mean_alpha_i=1.0_only_firstclass=false}
    \input{figures/errors/SKCEul_mean_alpha_i=1.0_only_firstclass=false.tex}
    \caption{Distribution of $\linearestimator$ with the mean total variation distance, evaluated on $10^4$ data sets of 250 labeled
      predictions that are randomly sampled from generative models with
      $\alpha = (1,\ldots,1)$ and $\beta = (1/m, \ldots, 1/m)$.}
  \end{center}
\end{figure}

\begin{figure}[!htbp]
  \begin{center}
    \tikzsetnextfilename{errors_SKCEul_mean_alpha_i=0.1_only_firstclass=false}
    \input{figures/errors/SKCEul_mean_alpha_i=0.1_only_firstclass=false.tex}
    \caption{Distribution of $\linearestimator$ with the mean total variation distance, evaluated on $10^4$ data sets of 250 labeled
      predictions that are randomly sampled from generative models with
      $\alpha = (0.1,\ldots,0.1)$ and $\beta = (1/m, \ldots, 1/m)$.}
  \end{center}
\end{figure}

\begin{figure}[!htbp]
  \begin{center}
    \tikzsetnextfilename{errors_SKCEul_mean_alpha_i=1.0_only_firstclass=true}
    \input{figures/errors/SKCEul_mean_alpha_i=1.0_only_firstclass=true.tex}
    \caption{Distribution of $\linearestimator$ with the mean total variation distance, evaluated on $10^4$ data sets of 250 labeled
      predictions that are randomly sampled from generative models with
      $\alpha = (1,\ldots,1)$ and $\beta = (1, 0,\ldots, 0)$.}
  \end{center}
\end{figure}

\begin{figure}[!htbp]
  \begin{center}
    \tikzsetnextfilename{errors_SKCEul_mean_alpha_i=0.1_only_firstclass=true}
    \input{figures/errors/SKCEul_mean_alpha_i=0.1_only_firstclass=true.tex}
    \caption{Distribution of $\linearestimator$ with the mean total variation distance, evaluated on $10^4$ data sets of 250 labeled
      predictions that are randomly sampled from generative models with
      $\alpha = (0.1,\ldots,0.1)$ and $\beta = (1, 0, \ldots, 0)$.}
  \end{center}
\end{figure}%

\clearpage

\subsubsection{Calibration tests: Additional figures}\label{sec:additional_tests}

\begin{figure}[!htbp]
  \begin{center}
    \tikzsetnextfilename{pvalues_ECE_uniform_alpha_i=1.0_only_firstclass=false}
    \input{figures/pvalues/ECE_uniform_alpha_i=1.0_only_firstclass=false.tex}
    \caption{Empirical test error versus significance level for approximations of the p-value with consistency resampling based
on $\widehat{\ECE}$ with bins of uniform size,
      evaluated on $10^4$ data sets of 250 labeled predictions that
      are randomly sampled from generative models with
      $\alpha = (1,\ldots,1)$ and $\beta = (1/m, \ldots, 1/m)$.}
  \end{center}
\end{figure}

\begin{figure}[!htbp]
  \begin{center}
    \tikzsetnextfilename{pvalues_ECE_uniform_alpha_i=0.1_only_firstclass=false}
    \input{figures/pvalues/ECE_uniform_alpha_i=0.1_only_firstclass=false.tex}
    \caption{Empirical test error versus significance level for approximations of the p-value with consistency resampling based
on $\widehat{\ECE}$ with bins of uniform size,
      evaluated on $10^4$ data sets of 250 labeled predictions that
      are randomly sampled from generative models with
      $\alpha = (0.1,\ldots,0.1)$ and $\beta = (1/m, \ldots, 1/m)$.}
  \end{center}
\end{figure}

\begin{figure}[!htbp]
  \begin{center}
    \tikzsetnextfilename{pvalues_ECE_uniform_alpha_i=1.0_only_firstclass=true}
    \input{figures/pvalues/ECE_uniform_alpha_i=1.0_only_firstclass=true.tex}
    \caption{Empirical test error versus significance level for approximations of the p-value with consistency resampling based
on $\widehat{\ECE}$ with bins of uniform size,
      evaluated on $10^4$ data sets of 250 labeled predictions that
      are randomly sampled from generative models with
      $\alpha = (1,\ldots,1)$ and $\beta = (1, 0,\ldots, 0)$.}
  \end{center}
\end{figure}

\begin{figure}[!htbp]
  \begin{center}
    \tikzsetnextfilename{pvalues_ECE_uniform_alpha_i=0.1_only_firstclass=true}
    \input{figures/pvalues/ECE_uniform_alpha_i=0.1_only_firstclass=true.tex}
    \caption{Empirical test error versus significance level for approximations of the p-value with consistency resampling based
on $\widehat{\ECE}$ with bins of uniform size,
      evaluated on $10^4$ data sets of 250 labeled predictions that
      are randomly sampled from generative models with
      $\alpha = (0.1,\ldots,0.1)$ and $\beta = (1, 0, \ldots, 0)$.}
  \end{center}
\end{figure}%

\begin{figure}[!htbp]
  \begin{center}
    \tikzsetnextfilename{pvalues_ECE_dynamic_alpha_i=1.0_only_firstclass=false}
    \input{figures/pvalues/ECE_dynamic_alpha_i=1.0_only_firstclass=false.tex}
    \caption{Empirical test error versus significance level for approximations of the p-value with consistency resampling based
on $\widehat{\ECE}$ with data-dependent bins,
      evaluated on $10^4$ data sets of 250 labeled predictions that
      are randomly sampled from generative models with
      $\alpha = (1,\ldots,1)$ and $\beta = (1/m, \ldots, 1/m)$.}
  \end{center}
\end{figure}

\begin{figure}[!htbp]
  \begin{center}
    \tikzsetnextfilename{pvalues_ECE_dynamic_alpha_i=0.1_only_firstclass=false}
    \input{figures/pvalues/ECE_dynamic_alpha_i=0.1_only_firstclass=false.tex}
    \caption{Empirical test error versus significance level for approximations of the p-value with consistency resampling based
on $\widehat{\ECE}$ with data-dependent bins,
      evaluated on $10^4$ data sets of 250 labeled predictions that
      are randomly sampled from generative models with
      $\alpha = (0.1,\ldots,0.1)$ and $\beta = (1/m, \ldots, 1/m)$.}
  \end{center}
\end{figure}

\begin{figure}[!htbp]
  \begin{center}
    \tikzsetnextfilename{pvalues_ECE_dynamic_alpha_i=1.0_only_firstclass=true}
    \input{figures/pvalues/ECE_dynamic_alpha_i=1.0_only_firstclass=true.tex}
    \caption{Empirical test error versus significance level for approximations of the p-value with consistency resampling based
on $\widehat{\ECE}$ with data-dependent bins,
      evaluated on $10^4$ data sets of 250 labeled predictions that
      are randomly sampled from generative models with
      $\alpha = (1,\ldots,1)$ and $\beta = (1, 0,\ldots, 0)$.}
  \end{center}
\end{figure}

\begin{figure}[!htbp]
  \begin{center}
    \tikzsetnextfilename{pvalues_ECE_dynamic_alpha_i=0.1_only_firstclass=true}
    \input{figures/pvalues/ECE_dynamic_alpha_i=0.1_only_firstclass=true.tex}
    \caption{Empirical test error versus significance level for approximations of the p-value with consistency resampling based
on $\widehat{\ECE}$ with data-dependent bins,
      evaluated on $10^4$ data sets of 250 labeled predictions that
      are randomly sampled from generative models with
      $\alpha = (0.1,\ldots,0.1)$ and $\beta = (1, 0, \ldots, 0)$.}
  \end{center}
\end{figure}%

\begin{figure}[!htbp]
  \begin{center}
    \tikzsetnextfilename{pvalues_SKCEb_median_distribution_free_alpha_i=1.0_only_firstclass=false}
    \input{figures/pvalues/SKCEb_median_distribution_free_alpha_i=1.0_only_firstclass=false.tex}
    \caption{Empirical test error versus significance level for distribution-free bounds of the p-value based
on $\biasedestimator$ with the median heuristic,
      evaluated on $10^4$ data sets of 250 labeled predictions that
      are randomly sampled from generative models with
      $\alpha = (1,\ldots,1)$ and $\beta = (1/m, \ldots, 1/m)$.}
  \end{center}
\end{figure}

\begin{figure}[!htbp]
  \begin{center}
    \tikzsetnextfilename{pvalues_SKCEb_median_distribution_free_alpha_i=0.1_only_firstclass=false}
    \input{figures/pvalues/SKCEb_median_distribution_free_alpha_i=0.1_only_firstclass=false.tex}
    \caption{Empirical test error versus significance level for distribution-free bounds of the p-value based
on $\biasedestimator$ with the median heuristic,
      evaluated on $10^4$ data sets of 250 labeled predictions that
      are randomly sampled from generative models with
      $\alpha = (0.1,\ldots,0.1)$ and $\beta = (1/m, \ldots, 1/m)$.}
  \end{center}
\end{figure}

\begin{figure}[!htbp]
  \begin{center}
    \tikzsetnextfilename{pvalues_SKCEb_median_distribution_free_alpha_i=1.0_only_firstclass=true}
    \input{figures/pvalues/SKCEb_median_distribution_free_alpha_i=1.0_only_firstclass=true.tex}
    \caption{Empirical test error versus significance level for distribution-free bounds of the p-value based
on $\biasedestimator$ with the median heuristic,
      evaluated on $10^4$ data sets of 250 labeled predictions that
      are randomly sampled from generative models with
      $\alpha = (1,\ldots,1)$ and $\beta = (1, 0,\ldots, 0)$.}
  \end{center}
\end{figure}

\begin{figure}[!htbp]
  \begin{center}
    \tikzsetnextfilename{pvalues_SKCEb_median_distribution_free_alpha_i=0.1_only_firstclass=true}
    \input{figures/pvalues/SKCEb_median_distribution_free_alpha_i=0.1_only_firstclass=true.tex}
    \caption{Empirical test error versus significance level for distribution-free bounds of the p-value based
on $\biasedestimator$ with the median heuristic,
      evaluated on $10^4$ data sets of 250 labeled predictions that
      are randomly sampled from generative models with
      $\alpha = (0.1,\ldots,0.1)$ and $\beta = (1, 0, \ldots, 0)$.}
  \end{center}
\end{figure}%

\begin{figure}[!htbp]
  \begin{center}
    \tikzsetnextfilename{pvalues_SKCEb_mean_distribution_free_alpha_i=1.0_only_firstclass=false}
    \input{figures/pvalues/SKCEb_mean_distribution_free_alpha_i=1.0_only_firstclass=false.tex}
    \caption{Empirical test error versus significance level for distribution-free bounds of the p-value based
on $\biasedestimator$ with the mean total variation distance,
      evaluated on $10^4$ data sets of 250 labeled predictions that
      are randomly sampled from generative models with
      $\alpha = (1,\ldots,1)$ and $\beta = (1/m, \ldots, 1/m)$.}
  \end{center}
\end{figure}

\begin{figure}[!htbp]
  \begin{center}
    \tikzsetnextfilename{pvalues_SKCEb_mean_distribution_free_alpha_i=0.1_only_firstclass=false}
    \input{figures/pvalues/SKCEb_mean_distribution_free_alpha_i=0.1_only_firstclass=false.tex}
    \caption{Empirical test error versus significance level for distribution-free bounds of the p-value based
on $\biasedestimator$ with the mean total variation distance,
      evaluated on $10^4$ data sets of 250 labeled predictions that
      are randomly sampled from generative models with
      $\alpha = (0.1,\ldots,0.1)$ and $\beta = (1/m, \ldots, 1/m)$.}
  \end{center}
\end{figure}

\begin{figure}[!htbp]
  \begin{center}
    \tikzsetnextfilename{pvalues_SKCEb_mean_distribution_free_alpha_i=1.0_only_firstclass=true}
    \input{figures/pvalues/SKCEb_mean_distribution_free_alpha_i=1.0_only_firstclass=true.tex}
    \caption{Empirical test error versus significance level for distribution-free bounds of the p-value based
on $\biasedestimator$ with the mean total variation distance,
      evaluated on $10^4$ data sets of 250 labeled predictions that
      are randomly sampled from generative models with
      $\alpha = (1,\ldots,1)$ and $\beta = (1, 0,\ldots, 0)$.}
  \end{center}
\end{figure}

\begin{figure}[!htbp]
  \begin{center}
    \tikzsetnextfilename{pvalues_SKCEb_mean_distribution_free_alpha_i=0.1_only_firstclass=true}
    \input{figures/pvalues/SKCEb_mean_distribution_free_alpha_i=0.1_only_firstclass=true.tex}
    \caption{Empirical test error versus significance level for distribution-free bounds of the p-value based
on $\biasedestimator$ with the mean total variation distance,
      evaluated on $10^4$ data sets of 250 labeled predictions that
      are randomly sampled from generative models with
      $\alpha = (0.1,\ldots,0.1)$ and $\beta = (1, 0, \ldots, 0)$.}
  \end{center}
\end{figure}%

\begin{figure}[!htbp]
  \begin{center}
    \tikzsetnextfilename{pvalues_SKCEuq_median_distribution_free_alpha_i=1.0_only_firstclass=false}
    \input{figures/pvalues/SKCEuq_median_distribution_free_alpha_i=1.0_only_firstclass=false.tex}
    \caption{Empirical test error versus significance level for distribution-free bounds of the p-value based
on $\unbiasedestimator$ with the median heuristic,
      evaluated on $10^4$ data sets of 250 labeled predictions that
      are randomly sampled from generative models with
      $\alpha = (1,\ldots,1)$ and $\beta = (1/m, \ldots, 1/m)$.}
  \end{center}
\end{figure}

\begin{figure}[!htbp]
  \begin{center}
    \tikzsetnextfilename{pvalues_SKCEuq_median_distribution_free_alpha_i=0.1_only_firstclass=false}
    \input{figures/pvalues/SKCEuq_median_distribution_free_alpha_i=0.1_only_firstclass=false.tex}
    \caption{Empirical test error versus significance level for distribution-free bounds of the p-value based
on $\unbiasedestimator$ with the median heuristic,
      evaluated on $10^4$ data sets of 250 labeled predictions that
      are randomly sampled from generative models with
      $\alpha = (0.1,\ldots,0.1)$ and $\beta = (1/m, \ldots, 1/m)$.}
  \end{center}
\end{figure}

\begin{figure}[!htbp]
  \begin{center}
    \tikzsetnextfilename{pvalues_SKCEuq_median_distribution_free_alpha_i=1.0_only_firstclass=true}
    \input{figures/pvalues/SKCEuq_median_distribution_free_alpha_i=1.0_only_firstclass=true.tex}
    \caption{Empirical test error versus significance level for distribution-free bounds of the p-value based
on $\unbiasedestimator$ with the median heuristic,
      evaluated on $10^4$ data sets of 250 labeled predictions that
      are randomly sampled from generative models with
      $\alpha = (1,\ldots,1)$ and $\beta = (1, 0,\ldots, 0)$.}
  \end{center}
\end{figure}

\begin{figure}[!htbp]
  \begin{center}
    \tikzsetnextfilename{pvalues_SKCEuq_median_distribution_free_alpha_i=0.1_only_firstclass=true}
    \input{figures/pvalues/SKCEuq_median_distribution_free_alpha_i=0.1_only_firstclass=true.tex}
    \caption{Empirical test error versus significance level for distribution-free bounds of the p-value based
on $\unbiasedestimator$ with the median heuristic,
      evaluated on $10^4$ data sets of 250 labeled predictions that
      are randomly sampled from generative models with
      $\alpha = (0.1,\ldots,0.1)$ and $\beta = (1, 0, \ldots, 0)$.}
  \end{center}
\end{figure}%

\begin{figure}[!htbp]
  \begin{center}
    \tikzsetnextfilename{pvalues_SKCEuq_mean_distribution_free_alpha_i=1.0_only_firstclass=false}
    \input{figures/pvalues/SKCEuq_mean_distribution_free_alpha_i=1.0_only_firstclass=false.tex}
    \caption{Empirical test error versus significance level for distribution-free bounds of the p-value based
on $\unbiasedestimator$ with the mean total variation distance,
      evaluated on $10^4$ data sets of 250 labeled predictions that
      are randomly sampled from generative models with
      $\alpha = (1,\ldots,1)$ and $\beta = (1/m, \ldots, 1/m)$.}
  \end{center}
\end{figure}

\begin{figure}[!htbp]
  \begin{center}
    \tikzsetnextfilename{pvalues_SKCEuq_mean_distribution_free_alpha_i=0.1_only_firstclass=false}
    \input{figures/pvalues/SKCEuq_mean_distribution_free_alpha_i=0.1_only_firstclass=false.tex}
    \caption{Empirical test error versus significance level for distribution-free bounds of the p-value based
on $\unbiasedestimator$ with the mean total variation distance,
      evaluated on $10^4$ data sets of 250 labeled predictions that
      are randomly sampled from generative models with
      $\alpha = (0.1,\ldots,0.1)$ and $\beta = (1/m, \ldots, 1/m)$.}
  \end{center}
\end{figure}

\begin{figure}[!htbp]
  \begin{center}
    \tikzsetnextfilename{pvalues_SKCEuq_mean_distribution_free_alpha_i=1.0_only_firstclass=true}
    \input{figures/pvalues/SKCEuq_mean_distribution_free_alpha_i=1.0_only_firstclass=true.tex}
    \caption{Empirical test error versus significance level for distribution-free bounds of the p-value based
on $\unbiasedestimator$ with the mean total variation distance,
      evaluated on $10^4$ data sets of 250 labeled predictions that
      are randomly sampled from generative models with
      $\alpha = (1,\ldots,1)$ and $\beta = (1, 0,\ldots, 0)$.}
  \end{center}
\end{figure}

\begin{figure}[!htbp]
  \begin{center}
    \tikzsetnextfilename{pvalues_SKCEuq_mean_distribution_free_alpha_i=0.1_only_firstclass=true}
    \input{figures/pvalues/SKCEuq_mean_distribution_free_alpha_i=0.1_only_firstclass=true.tex}
    \caption{Empirical test error versus significance level for distribution-free bounds of the p-value based
on $\unbiasedestimator$ with the mean total variation distance,
      evaluated on $10^4$ data sets of 250 labeled predictions that
      are randomly sampled from generative models with
      $\alpha = (0.1,\ldots,0.1)$ and $\beta = (1, 0, \ldots, 0)$.}
  \end{center}
\end{figure}%

\begin{figure}[!htbp]
  \begin{center}
    \tikzsetnextfilename{pvalues_SKCEul_median_distribution_free_alpha_i=1.0_only_firstclass=false}
    \input{figures/pvalues/SKCEul_median_distribution_free_alpha_i=1.0_only_firstclass=false.tex}
    \caption{Empirical test error versus significance level for distribution-free bounds of the p-value based
on $\linearestimator$ with the median heuristic,
      evaluated on $10^4$ data sets of 250 labeled predictions that
      are randomly sampled from generative models with
      $\alpha = (1,\ldots,1)$ and $\beta = (1/m, \ldots, 1/m)$.}
  \end{center}
\end{figure}

\begin{figure}[!htbp]
  \begin{center}
    \tikzsetnextfilename{pvalues_SKCEul_median_distribution_free_alpha_i=0.1_only_firstclass=false}
    \input{figures/pvalues/SKCEul_median_distribution_free_alpha_i=0.1_only_firstclass=false.tex}
    \caption{Empirical test error versus significance level for distribution-free bounds of the p-value based
on $\linearestimator$ with the median heuristic,
      evaluated on $10^4$ data sets of 250 labeled predictions that
      are randomly sampled from generative models with
      $\alpha = (0.1,\ldots,0.1)$ and $\beta = (1/m, \ldots, 1/m)$.}
  \end{center}
\end{figure}

\begin{figure}[!htbp]
  \begin{center}
    \tikzsetnextfilename{pvalues_SKCEul_median_distribution_free_alpha_i=1.0_only_firstclass=true}
    \input{figures/pvalues/SKCEul_median_distribution_free_alpha_i=1.0_only_firstclass=true.tex}
    \caption{Empirical test error versus significance level for distribution-free bounds of the p-value based
on $\linearestimator$ with the median heuristic,
      evaluated on $10^4$ data sets of 250 labeled predictions that
      are randomly sampled from generative models with
      $\alpha = (1,\ldots,1)$ and $\beta = (1, 0,\ldots, 0)$.}
  \end{center}
\end{figure}

\begin{figure}[!htbp]
  \begin{center}
    \tikzsetnextfilename{pvalues_SKCEul_median_distribution_free_alpha_i=0.1_only_firstclass=true}
    \input{figures/pvalues/SKCEul_median_distribution_free_alpha_i=0.1_only_firstclass=true.tex}
    \caption{Empirical test error versus significance level for distribution-free bounds of the p-value based
on $\linearestimator$ with the median heuristic,
      evaluated on $10^4$ data sets of 250 labeled predictions that
      are randomly sampled from generative models with
      $\alpha = (0.1,\ldots,0.1)$ and $\beta = (1, 0, \ldots, 0)$.}
  \end{center}
\end{figure}%

\begin{figure}[!htbp]
  \begin{center}
    \tikzsetnextfilename{pvalues_SKCEul_mean_distribution_free_alpha_i=1.0_only_firstclass=false}
    \input{figures/pvalues/SKCEul_mean_distribution_free_alpha_i=1.0_only_firstclass=false.tex}
    \caption{Empirical test error versus significance level for distribution-free bounds of the p-value based
on $\linearestimator$ with the mean total variation distance,
      evaluated on $10^4$ data sets of 250 labeled predictions that
      are randomly sampled from generative models with
      $\alpha = (1,\ldots,1)$ and $\beta = (1/m, \ldots, 1/m)$.}
  \end{center}
\end{figure}

\begin{figure}[!htbp]
  \begin{center}
    \tikzsetnextfilename{pvalues_SKCEul_mean_distribution_free_alpha_i=0.1_only_firstclass=false}
    \input{figures/pvalues/SKCEul_mean_distribution_free_alpha_i=0.1_only_firstclass=false.tex}
    \caption{Empirical test error versus significance level for distribution-free bounds of the p-value based
on $\linearestimator$ with the mean total variation distance,
      evaluated on $10^4$ data sets of 250 labeled predictions that
      are randomly sampled from generative models with
      $\alpha = (0.1,\ldots,0.1)$ and $\beta = (1/m, \ldots, 1/m)$.}
  \end{center}
\end{figure}

\begin{figure}[!htbp]
  \begin{center}
    \tikzsetnextfilename{pvalues_SKCEul_mean_distribution_free_alpha_i=1.0_only_firstclass=true}
    \input{figures/pvalues/SKCEul_mean_distribution_free_alpha_i=1.0_only_firstclass=true.tex}
    \caption{Empirical test error versus significance level for distribution-free bounds of the p-value based
on $\linearestimator$ with the mean total variation distance,
      evaluated on $10^4$ data sets of 250 labeled predictions that
      are randomly sampled from generative models with
      $\alpha = (1,\ldots,1)$ and $\beta = (1, 0,\ldots, 0)$.}
  \end{center}
\end{figure}

\begin{figure}[!htbp]
  \begin{center}
    \tikzsetnextfilename{pvalues_SKCEul_mean_distribution_free_alpha_i=0.1_only_firstclass=true}
    \input{figures/pvalues/SKCEul_mean_distribution_free_alpha_i=0.1_only_firstclass=true.tex}
    \caption{Empirical test error versus significance level for distribution-free bounds of the p-value based
on $\linearestimator$ with the mean total variation distance,
      evaluated on $10^4$ data sets of 250 labeled predictions that
      are randomly sampled from generative models with
      $\alpha = (0.1,\ldots,0.1)$ and $\beta = (1, 0, \ldots, 0)$.}
  \end{center}
\end{figure}%

\begin{figure}[!htbp]
  \begin{center}
    \tikzsetnextfilename{pvalues_SKCEuq_median_asymptotic_alpha_i=1.0_only_firstclass=false}
    \input{figures/pvalues/SKCEuq_median_asymptotic_alpha_i=1.0_only_firstclass=false.tex}
    \caption{Empirical test error versus significance level for asymptotic approximations of the p-value based on
$\unbiasedestimator$ with the median heuristic,
      evaluated on $10^4$ data sets of 250 labeled predictions that
      are randomly sampled from generative models with
      $\alpha = (1,\ldots,1)$ and $\beta = (1/m, \ldots, 1/m)$.}
  \end{center}
\end{figure}

\begin{figure}[!htbp]
  \begin{center}
    \tikzsetnextfilename{pvalues_SKCEuq_median_asymptotic_alpha_i=0.1_only_firstclass=false}
    \input{figures/pvalues/SKCEuq_median_asymptotic_alpha_i=0.1_only_firstclass=false.tex}
    \caption{Empirical test error versus significance level for asymptotic approximations of the p-value based on
$\unbiasedestimator$ with the median heuristic,
      evaluated on $10^4$ data sets of 250 labeled predictions that
      are randomly sampled from generative models with
      $\alpha = (0.1,\ldots,0.1)$ and $\beta = (1/m, \ldots, 1/m)$.}
  \end{center}
\end{figure}

\begin{figure}[!htbp]
  \begin{center}
    \tikzsetnextfilename{pvalues_SKCEuq_median_asymptotic_alpha_i=1.0_only_firstclass=true}
    \input{figures/pvalues/SKCEuq_median_asymptotic_alpha_i=1.0_only_firstclass=true.tex}
    \caption{Empirical test error versus significance level for asymptotic approximations of the p-value based on
$\unbiasedestimator$ with the median heuristic,
      evaluated on $10^4$ data sets of 250 labeled predictions that
      are randomly sampled from generative models with
      $\alpha = (1,\ldots,1)$ and $\beta = (1, 0,\ldots, 0)$.}
  \end{center}
\end{figure}

\begin{figure}[!htbp]
  \begin{center}
    \tikzsetnextfilename{pvalues_SKCEuq_median_asymptotic_alpha_i=0.1_only_firstclass=true}
    \input{figures/pvalues/SKCEuq_median_asymptotic_alpha_i=0.1_only_firstclass=true.tex}
    \caption{Empirical test error versus significance level for asymptotic approximations of the p-value based on
$\unbiasedestimator$ with the median heuristic,
      evaluated on $10^4$ data sets of 250 labeled predictions that
      are randomly sampled from generative models with
      $\alpha = (0.1,\ldots,0.1)$ and $\beta = (1, 0, \ldots, 0)$.}
  \end{center}
\end{figure}%

\begin{figure}[!htbp]
  \begin{center}
    \tikzsetnextfilename{pvalues_SKCEuq_mean_asymptotic_alpha_i=1.0_only_firstclass=false}
    \input{figures/pvalues/SKCEuq_mean_asymptotic_alpha_i=1.0_only_firstclass=false.tex}
    \caption{Empirical test error versus significance level for asymptotic approximations of the p-value based on
$\unbiasedestimator$ with the mean total variation distance,
      evaluated on $10^4$ data sets of 250 labeled predictions that
      are randomly sampled from generative models with
      $\alpha = (1,\ldots,1)$ and $\beta = (1/m, \ldots, 1/m)$.}
  \end{center}
\end{figure}

\begin{figure}[!htbp]
  \begin{center}
    \tikzsetnextfilename{pvalues_SKCEuq_mean_asymptotic_alpha_i=0.1_only_firstclass=false}
    \input{figures/pvalues/SKCEuq_mean_asymptotic_alpha_i=0.1_only_firstclass=false.tex}
    \caption{Empirical test error versus significance level for asymptotic approximations of the p-value based on
$\unbiasedestimator$ with the mean total variation distance,
      evaluated on $10^4$ data sets of 250 labeled predictions that
      are randomly sampled from generative models with
      $\alpha = (0.1,\ldots,0.1)$ and $\beta = (1/m, \ldots, 1/m)$.}
  \end{center}
\end{figure}

\begin{figure}[!htbp]
  \begin{center}
    \tikzsetnextfilename{pvalues_SKCEuq_mean_asymptotic_alpha_i=1.0_only_firstclass=true}
    \input{figures/pvalues/SKCEuq_mean_asymptotic_alpha_i=1.0_only_firstclass=true.tex}
    \caption{Empirical test error versus significance level for asymptotic approximations of the p-value based on
$\unbiasedestimator$ with the mean total variation distance,
      evaluated on $10^4$ data sets of 250 labeled predictions that
      are randomly sampled from generative models with
      $\alpha = (1,\ldots,1)$ and $\beta = (1, 0,\ldots, 0)$.}
  \end{center}
\end{figure}

\begin{figure}[!htbp]
  \begin{center}
    \tikzsetnextfilename{pvalues_SKCEuq_mean_asymptotic_alpha_i=0.1_only_firstclass=true}
    \input{figures/pvalues/SKCEuq_mean_asymptotic_alpha_i=0.1_only_firstclass=true.tex}
    \caption{Empirical test error versus significance level for asymptotic approximations of the p-value based on
$\unbiasedestimator$ with the mean total variation distance,
      evaluated on $10^4$ data sets of 250 labeled predictions that
      are randomly sampled from generative models with
      $\alpha = (0.1,\ldots,0.1)$ and $\beta = (1, 0, \ldots, 0)$.}
  \end{center}
\end{figure}%

\begin{figure}[!htbp]
  \begin{center}
    \tikzsetnextfilename{pvalues_SKCEul_median_asymptotic_alpha_i=1.0_only_firstclass=false}
    \input{figures/pvalues/SKCEul_median_asymptotic_alpha_i=1.0_only_firstclass=false.tex}
    \caption{Empirical test error versus significance level for asymptotic approximations of the p-value based on
$\linearestimator$ with the median heuristic,
      evaluated on $10^4$ data sets of 250 labeled predictions that
      are randomly sampled from generative models with
      $\alpha = (1,\ldots,1)$ and $\beta = (1/m, \ldots, 1/m)$.}
  \end{center}
\end{figure}

\begin{figure}[!htbp]
  \begin{center}
    \tikzsetnextfilename{pvalues_SKCEul_median_asymptotic_alpha_i=0.1_only_firstclass=false}
    \input{figures/pvalues/SKCEul_median_asymptotic_alpha_i=0.1_only_firstclass=false.tex}
    \caption{Empirical test error versus significance level for asymptotic approximations of the p-value based on
$\linearestimator$ with the median heuristic,
      evaluated on $10^4$ data sets of 250 labeled predictions that
      are randomly sampled from generative models with
      $\alpha = (0.1,\ldots,0.1)$ and $\beta = (1/m, \ldots, 1/m)$.}
  \end{center}
\end{figure}

\begin{figure}[!htbp]
  \begin{center}
    \tikzsetnextfilename{pvalues_SKCEul_median_asymptotic_alpha_i=1.0_only_firstclass=true}
    \input{figures/pvalues/SKCEul_median_asymptotic_alpha_i=1.0_only_firstclass=true.tex}
    \caption{Empirical test error versus significance level for asymptotic approximations of the p-value based on
$\linearestimator$ with the median heuristic,
      evaluated on $10^4$ data sets of 250 labeled predictions that
      are randomly sampled from generative models with
      $\alpha = (1,\ldots,1)$ and $\beta = (1, 0,\ldots, 0)$.}
  \end{center}
\end{figure}

\begin{figure}[!htbp]
  \begin{center}
    \tikzsetnextfilename{pvalues_SKCEul_median_asymptotic_alpha_i=0.1_only_firstclass=true}
    \input{figures/pvalues/SKCEul_median_asymptotic_alpha_i=0.1_only_firstclass=true.tex}
    \caption{Empirical test error versus significance level for asymptotic approximations of the p-value based on
$\linearestimator$ with the median heuristic,
      evaluated on $10^4$ data sets of 250 labeled predictions that
      are randomly sampled from generative models with
      $\alpha = (0.1,\ldots,0.1)$ and $\beta = (1, 0, \ldots, 0)$.}
  \end{center}
\end{figure}%

\begin{figure}[!htbp]
  \begin{center}
    \tikzsetnextfilename{pvalues_SKCEul_mean_asymptotic_alpha_i=1.0_only_firstclass=false}
    \input{figures/pvalues/SKCEul_mean_asymptotic_alpha_i=1.0_only_firstclass=false.tex}
    \caption{Empirical test error versus significance level for asymptotic approximations of the p-value based on
$\linearestimator$ with the mean total variation distance,
      evaluated on $10^4$ data sets of 250 labeled predictions that
      are randomly sampled from generative models with
      $\alpha = (1,\ldots,1)$ and $\beta = (1/m, \ldots, 1/m)$.}
  \end{center}
\end{figure}

\begin{figure}[!htbp]
  \begin{center}
    \tikzsetnextfilename{pvalues_SKCEul_mean_asymptotic_alpha_i=0.1_only_firstclass=false}
    \input{figures/pvalues/SKCEul_mean_asymptotic_alpha_i=0.1_only_firstclass=false.tex}
    \caption{Empirical test error versus significance level for asymptotic approximations of the p-value based on
$\linearestimator$ with the mean total variation distance,
      evaluated on $10^4$ data sets of 250 labeled predictions that
      are randomly sampled from generative models with
      $\alpha = (0.1,\ldots,0.1)$ and $\beta = (1/m, \ldots, 1/m)$.}
  \end{center}
\end{figure}

\begin{figure}[!htbp]
  \begin{center}
    \tikzsetnextfilename{pvalues_SKCEul_mean_asymptotic_alpha_i=1.0_only_firstclass=true}
    \input{figures/pvalues/SKCEul_mean_asymptotic_alpha_i=1.0_only_firstclass=true.tex}
    \caption{Empirical test error versus significance level for asymptotic approximations of the p-value based on
$\linearestimator$ with the mean total variation distance,
      evaluated on $10^4$ data sets of 250 labeled predictions that
      are randomly sampled from generative models with
      $\alpha = (1,\ldots,1)$ and $\beta = (1, 0,\ldots, 0)$.}
  \end{center}
\end{figure}

\begin{figure}[!htbp]
  \begin{center}
    \tikzsetnextfilename{pvalues_SKCEul_mean_asymptotic_alpha_i=0.1_only_firstclass=true}
    \input{figures/pvalues/SKCEul_mean_asymptotic_alpha_i=0.1_only_firstclass=true.tex}
    \caption{Empirical test error versus significance level for asymptotic approximations of the p-value based on
$\linearestimator$ with the mean total variation distance,
      evaluated on $10^4$ data sets of 250 labeled predictions that
      are randomly sampled from generative models with
      $\alpha = (0.1,\ldots,0.1)$ and $\beta = (1, 0, \ldots, 0)$.}
  \end{center}
\end{figure}%

\clearpage
\subsection{Modern neural networks}\label{sec:neural_networks}

In the main experiments of our paper discussed in \cref{sec:generative_models}
we focus on an experimental confirmation of the derived theoretical properties
of the kernel-based estimators and their comparison with the commonly used
$\ECE$. In contrast to \citet{guo17_calib_moder_neural_networ}, neither the study of the
calibration of different neural network architectures nor the re-calibration of
uncalibrated models are the main goal of our paper. The calibration measures
that we consider only depend on the predictions and the true labels, not
on how these predictions are computed. We therefore believe that directly
specifying the predictions in a \enquote{controlled way} results in a cleaner
and more informative numerical evaluation.

That being said, we recognize that this approach might result in an unnecessary
disconnect between the results of the paper and a practical use case. We
therefore conduct additional evaluations with different modern neural networks as
well. We consider pretrained ResNet, DenseNet, VGGNet, GoogLeNet, MobileNet, and
Inception neural networks~\citep{phan19_pytor_cifar} for the classification of the
CIFAR-10 image data
set~\citep{krizhevsky09_learn_multip_layer_featur_from_tiny_images}. The CIFAR-10
data set is a labeled data set of $32 \times 32$ colour images and consists of
50000 training and 10000 test images in 10 classes. The calibration of the
neural network models is estimated from their predictions on the CIFAR-10 test
data set. We use the same calibration error estimators and p-value approximations
as for the generative models above; however, the minimum number of samples per
bin in the data-dependent binning scheme of the $\ECE$ estimator is increased to
100 to account for the increased number of data samples.

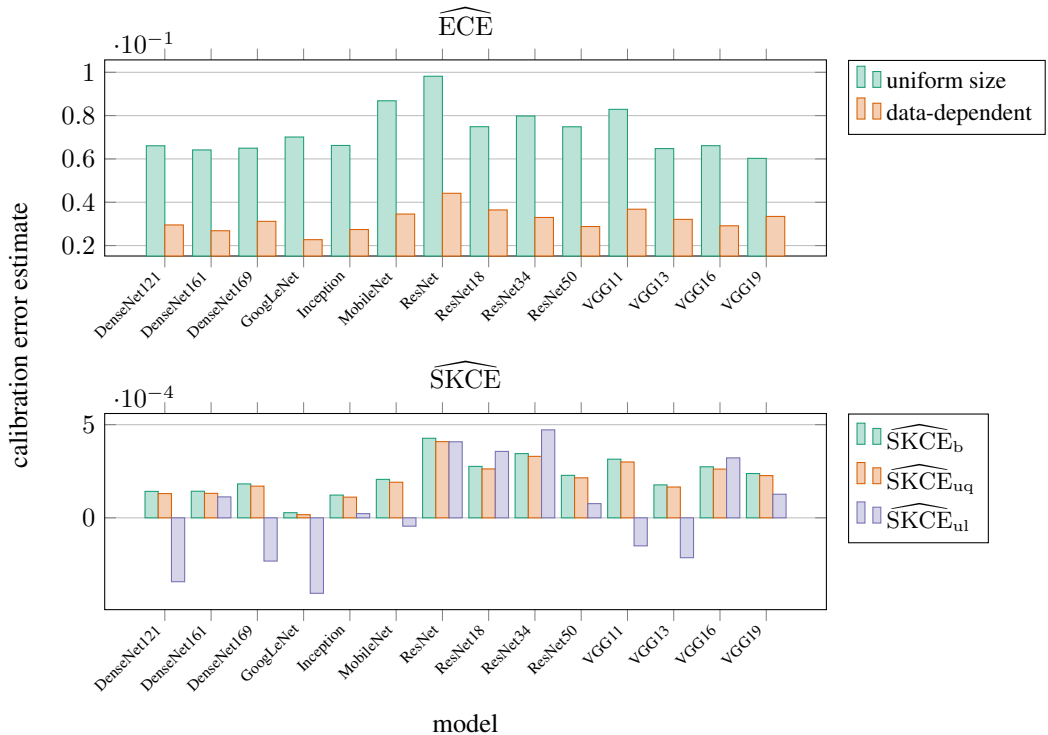
\begin{figure}[!htbp]
  \begin{center}
    \tikzsetnextfilename{PyTorch-CIFAR10_errors_comparison}
    \begin{tikzpicture}
\pgfplotstableread[col sep=comma, header=true]{/home/david/Documents/Projects/github/CalibrationPaper/paper/../experiments/data/PyTorch-CIFAR10/errors.csv}\datatable
\begin{groupplot}[group style={group size={1 by 2}, xlabels at={edge bottom}, vertical sep={0.15\linewidth}}, ybar={0pt}, ymajorgrids, width={0.8\linewidth}, height={0.3\linewidth}, xlabel={model}, xticklabels={DenseNet121,DenseNet161,DenseNet169,GoogLeNet,Inception,MobileNet,ResNet,ResNet18,ResNet34,ResNet50,VGG11,VGG13,VGG16,VGG19}, xtick={data}, xticklabel style={rotate={45}, anchor={east}, font={\tiny}}, scale ticks below exponent={0}, legend style={cells={anchor={west}}, legend pos={outer north east}, font={\small}}, cycle list={{Dark2-A,fill=Dark2-A!30!white,mark=none}, {Dark2-B,fill=Dark2-B!30!white,mark=none}, {Dark2-C,fill=Dark2-C!30!white,mark=none}}]
    \nextgroupplot[title={$\widehat{\ECE}$}, bar width={7pt}]
    \addplot
        table[x expr={\coordindex}, y={ECE_uniform}] {\datatable};
    \addplot
        table[x expr={\coordindex}, y={ECE_dynamic}] {\datatable};
    \legend{{uniform size},{data-dependent}}
    \nextgroupplot[title={$\widehat{\squaredkernelmeasure}$}, bar width={5pt}]
    \addplot
        table[x expr={\coordindex}, y={SKCEb_median}] {\datatable};
    \addplot
        table[x expr={\coordindex}, y={SKCEuq_median}] {\datatable};
    \addplot
        table[x expr={\coordindex}, y={SKCEul_median}] {\datatable};
    \legend{{$\biasedestimator$},{$\unbiasedestimator$},{$\linearestimator$}}
\end{groupplot}
\node[anchor=south, rotate=90, yshift=1em] at ($(group c1r1.north -| group c1r1.outer west)!0.5!(group c1r2.south -| group c1r2.outer west)$){calibration error estimate};
\end{tikzpicture}
    \caption{Calibration error estimates of modern neural networks for
      classification of the CIFAR-10 image data set.}
    \label{fig:cifar10_errors}
  \end{center}
\end{figure}

The computed calibration error estimates are shown in \cref{fig:cifar10_errors}.
As we argue in our paper, the raw calibration estimates are not interpretable
and can be misleading. The results in \cref{fig:cifar10_errors} endorse this
opinion. The estimators rank the models in different order (also the two
estimators of the $\ECE$), and it is completely unclear if the observed
calibration error estimates (in the order of $10^{-2}$ and $10^{-4}$!) actually
indicate that the neural network models are not calibrated.

\begin{figure}[!htbp]
  \begin{center}
    \tikzsetnextfilename{PyTorch-CIFAR10_pvalues_comparison}
    \begin{tikzpicture}
\pgfplotstableread[col sep=comma, header=true]{/home/david/Documents/Projects/github/CalibrationPaper/paper/../experiments/data/PyTorch-CIFAR10/pvalues.csv}\datatable
\begin{groupplot}[group style={group size={1 by 2}, xlabels at={edge bottom}, vertical sep={0.1\linewidth}}, ybar={0pt}, ymajorgrids, width={0.9\linewidth}, height={0.3\linewidth}, xlabel={model}, xtick={data}, xticklabel style={rotate={45}, anchor={east}, font={\tiny}}, legend style={legend columns={-1}, /tikz/every even column/.append style={column sep={0.05\linewidth}}, font={\small}}, cycle list={{Dark2-A,fill=Dark2-A!30!white,mark=none}, {Dark2-B,fill=Dark2-B!30!white,mark=none}, {Dark2-C,fill=Dark2-C!30!white,mark=none}, {Dark2-D,fill=Dark2-D!30!white,mark=none}, {Dark2-E,fill=Dark2-E!30!white,mark=none}, {Dark2-F,fill=Dark2-F!30!white,mark=none}, {Dark2-G,fill=Dark2-G!30!white,mark=none}}]
    \nextgroupplot[xticklabels={DenseNet121,DenseNet161,DenseNet169,GoogLeNet,Inception,MobileNet,ResNet}, skip coords between index={{7}{14}}, legend to name={cifar10_pvalues_legend}, bar width={5pt}]
    \addplot
        table[x expr={\coordindex}, y={ECE_uniform}] {\datatable};
    \addplot
        table[x expr={\coordindex}, y={ECE_dynamic}] {\datatable};
    \addplot
        table[x expr={\coordindex}, y={SKCEb_median_distribution_free}] {\datatable};
    \addplot
        table[x expr={\coordindex}, y={SKCEuq_median_distribution_free}] {\datatable};
    \addplot
        table[x expr={\coordindex}, y={SKCEul_median_distribution_free}] {\datatable};
    \addplot
        table[x expr={\coordindex}, y={SKCEuq_median_asymptotic}] {\datatable};
    \addplot
        table[x expr={\coordindex}, y={SKCEul_median_asymptotic}] {\datatable};
    \legend{{$\mathbf{C}_{\mathrm{uniform}}$},{$\mathbf{C}_{\mathrm{data-dependent}}$},{$\mathbf{D}_{\mathrm{b}}$},{$\mathbf{D}_{\mathrm{uq}}$},{$\mathbf{D}_{\mathrm{l}}$},{$\mathbf{A}_{\mathrm{uq}}$},{$\mathbf{A}_{\mathrm{l}}$}}
    \nextgroupplot[xticklabels={ResNet18,ResNet34,ResNet50,VGG11,VGG13,VGG16,VGG19}, skip coords between index={{0}{7}}, bar width={5pt}]
    \addplot
        table[x expr={\coordindex}, y={ECE_uniform}] {\datatable};
    \addplot
        table[x expr={\coordindex}, y={ECE_dynamic}] {\datatable};
    \addplot
        table[x expr={\coordindex}, y={SKCEb_median_distribution_free}] {\datatable};
    \addplot
        table[x expr={\coordindex}, y={SKCEuq_median_distribution_free}] {\datatable};
    \addplot
        table[x expr={\coordindex}, y={SKCEul_median_distribution_free}] {\datatable};
    \addplot
        table[x expr={\coordindex}, y={SKCEuq_median_asymptotic}] {\datatable};
    \addplot
        table[x expr={\coordindex}, y={SKCEul_median_asymptotic}] {\datatable};
\end{groupplot}
\node[anchor=north] at ($(group c1r2.west |- group c1r2.outer south)!0.5!(group c1r2.east |- group c1r2.outer south)$){\pgfplotslegendfromname{cifar10_pvalues_legend}};
\node[anchor=south, rotate=90] at ($(group c1r1.north -| group c1r1.outer west)!0.5!(group c1r2.south -| group c1r2.outer west)$){bound/approximation of p-value};
\end{tikzpicture}
    \caption{Bounds and approximations of the p-value of modern neural networks
      for classification of the CIFAR-10 image data set for different calibration
      error estimators, assuming the models are calibrated.}
    \label{fig:cifar10_pvalues}
  \end{center}
\end{figure}
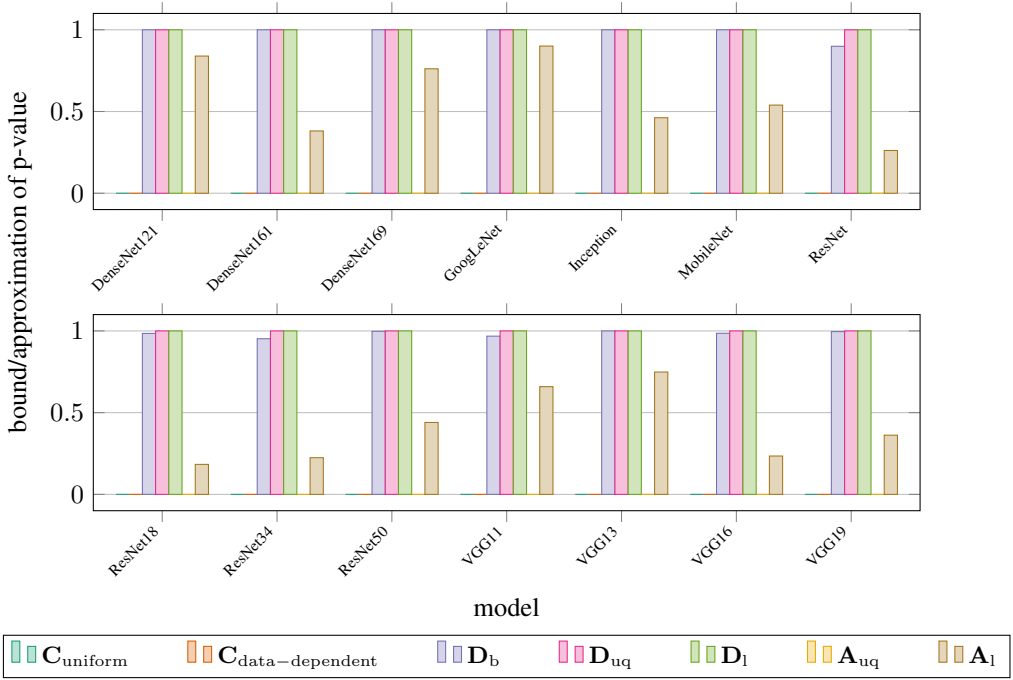

Hence to obtain an interpretable measure, we consider different bounds and
approximations of the p-value for the calibration error estimators, assuming the
models are calibrated. More concretely, we estimate the p-value by consistency
resampling of the standard ($\mathbf{C}_{\mathrm{uniform}}$) and the data-dependent
($\mathbf{C}_{\mathrm{data-dependent}}$) estimator of the $\ECE$, evaluate the
distribution-free bounds of the p-value for the estimators $\biasedestimator$
($\mathbf{D}_{\mathrm{b}}$), $\unbiasedestimator$ ($\mathbf{D}_{\mathrm{uq}}$), and
$\linearestimator$ ($\mathbf{D}_{\mathrm{l}}$) of the $\squaredkernelmeasure$,
and approximate the p-value using the asymptotic distribution of the estimators
$\unbiasedestimator$ ($\mathbf{A}_{\mathrm{uq}}$) and $\linearestimator$
($\mathbf{A}_{\mathrm{l}}$). The results are shown in \cref{fig:cifar10_pvalues}.

The approximations obtained by consistency resampling are always zero.
However, since our controlled experiments with the generative models showed that
consistency resampling might underestimate the p-value of calibrated
models on average, these approximations could be misleading. On the contrary, the
bounds and approximations of the p-value for the estimators of the
$\squaredkernelmeasure$ are theoretically well-founded. In our experiments with
the generative models, the asymptotic distribution of the estimator
$\unbiasedestimator$ seemed to allow to approximate the p-value quite accurately
on average and yielded very powerful tests. For all studied neural network models
these p-value approximations are zero, and hence for all models we would always
reject the null hypothesis of calibration. The p-value approximations based on
the asymptotic distribution of the estimator $\linearestimator$ vary between
around $0.18$ for the ResNet18 and $0.91$ for the GoogLeNet model. The higher
p-value approximations correspond to the increased empirical test errors with the
uncalibrated generative models compared to the tests based on the asymptotic
distribution of the estimator $\unbiasedestimator$. Most distribution-free
bounds of the p-value are between 0.99 and 1, indicating again that these bounds
are quite loose.

All in all, the evaluations of the modern neural networks seem to match the
theoretical expectations and are consistent with the results we obtained in
the experiments with the generative models. Moreover, the p-value
approximations of zero are consistent with \citet{guo17_calib_moder_neural_networ}'s
finding that modern neural networks are often not calibrated.

\subsection{Computational time}\label{sec:computational_time}

The computational time, although dependent on our Julia implementation and
the hardware used, might provide some insights to the interested reader in
addition to the algorithmic complexity. However, in our opinion, a fair
comparison of the proposed calibration error estimators should take into
account the error of the calibration error estimation, similar to work
precision diagrams for numerical differential equation solvers.

A simple comparison of the computational time for the calibration error
estimators used in the experiments with the generative models in
\cref{sec:generative_models} on our computer (3.6~GHz) shows the expected
scaling of the computational time with increasing number of samples. As
\cref{fig:timings} shows, even for 1000 samples and 1000 classes the estimators
$\biasedestimator$ and $\unbiasedestimator$ with the median heuristic can be
evaluated in around 0.1~seconds.

\begin{figure}[!htbp]
  \begin{center}
    \tikzsetnextfilename{timings}
    \input{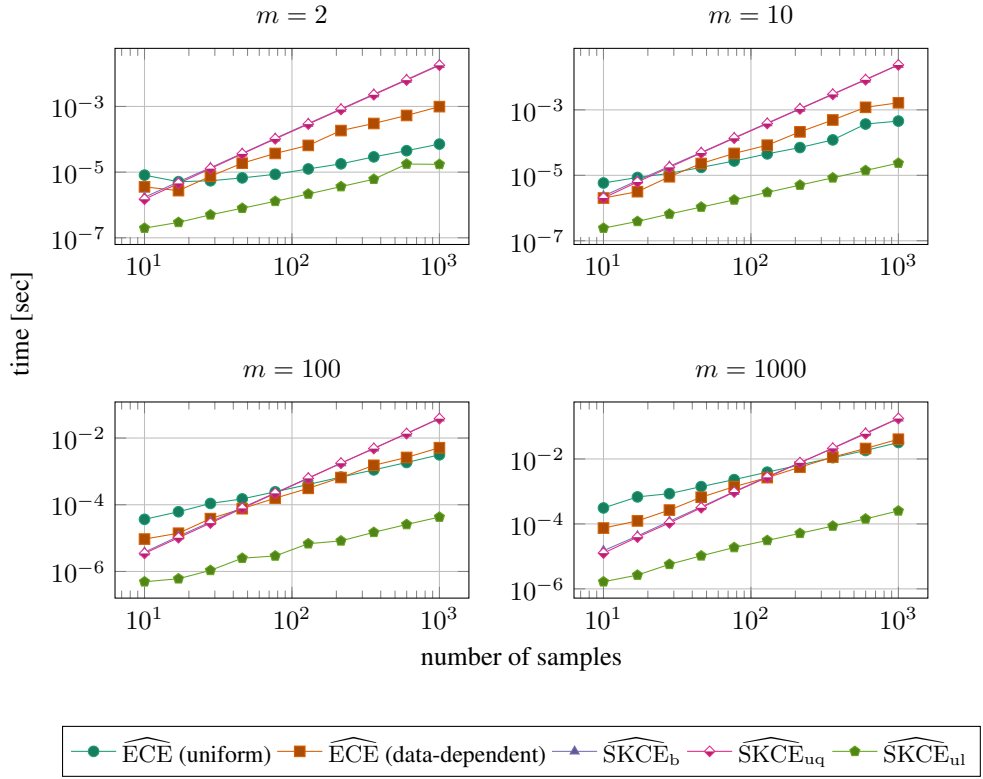}
    \caption{Computational time for the evaluation of calibration error estimators
      on data sets with different number of classes versus number of data samples.}
    \label{fig:timings}
  \end{center}
\end{figure}

\end{document}